\newif\ifpreprint 
\newif\ifneurips

\preprinttrue

\newcommand{\mytitle}{
    Distributionally-Robust Learning to Optimize
}

\ifpreprint
\documentclass[12pt]{article}

\usepackage{fullpage}

\let\oldparagraph\paragraph

\renewcommand{\paragraph}[1]{%
\oldparagraph{#1.}%
}

\usepackage{amsmath,amssymb,amsthm,amsfonts}
\usepackage{pgfplots,xcolor}
\usepgfplotslibrary{fillbetween,groupplots}
\pgfplotsset{compat=1.18}

\usepackage{algorithm}
\usepackage[noend]{algpseudocode} 

\usepackage{csvsimple}	
\usepackage{booktabs} 
\usepackage{pgfplotstable}
\let\oldtabular\tabular
\let\endoldtabular\endtabular
\renewenvironment{tabular}{\small\oldtabular}{\endoldtabular}
\usepackage{adjustbox}  

\usepackage[colorlinks=true, citecolor=cyan, linkcolor=cyan, urlcolor=cyan]{hyperref}
\usepackage[capitalize,noabbrev]{cleveref}

\usepackage{caption}
\newtheorem{theorem}{Theorem}
\newtheorem{lemma}{Lemma}
\newtheorem{corollary}{Corollary}[theorem]  
\newtheorem{proposition}{Proposition}
\newtheorem{assumption}{Assumption}

\theoremstyle{definition}
\newtheorem{definition}{Definition}

\theoremstyle{remark}
\newtheorem{remark}{Remark}
\newtheorem{example}{Example}
\newtheorem*{example*}{Example}%
\newtheorem*{rexample}{Running example}

\allowdisplaybreaks

\fi 

\ifneurips

\documentclass{article}

\usepackage[main]{neurips_2026}

\usepackage[utf8]{inputenc} 
\usepackage[T1]{fontenc}    
\usepackage[colorlinks=true, citecolor=cyan, linkcolor=cyan, urlcolor=cyan]{hyperref}       
\usepackage{url}            
\usepackage{booktabs}       
\usepackage{pgfplotstable}
\usepackage{amsfonts}       
\usepackage{nicefrac}       
\usepackage{microtype}      
\usepackage{xcolor}         

\usepackage{amsmath,amssymb,amsthm,amsfonts}
\usepackage{pgfplots}
\usepgfplotslibrary{fillbetween,groupplots}
\pgfplotsset{compat=1.18}

\usepackage{algorithm}
\usepackage[noend]{algpseudocode} 

\usepackage{csvsimple}	
\usepackage{adjustbox}  

\usepackage[capitalize,noabbrev]{cleveref}

\usepackage{caption}

\newtheorem{theorem}{Theorem}
\newtheorem{lemma}{Lemma}
\newtheorem{proposition}{Proposition}
\newtheorem{assumption}{Assumption}

\theoremstyle{definition}

\theoremstyle{remark}
\newtheorem{remark}{Remark}
\newtheorem{example}{Example}
\newtheorem*{example*}{Example}%

\allowdisplaybreaks

\title{
    \mytitle
}

\author{%
  Vinit Ranjan \\
  Department of Operations Research and Financial Engineering\\
  Princeton University\\
  Princeton, NJ 08544 \\
  \texttt{vranjan@princeton.edu} \\
  \And
  Jisun Park \\
  Department of Operations Research and Financial Engineering\\
  Princeton University\\
  Princeton, NJ 08544 \\
  \texttt{jisunpark@princeton.edu} \\
  \And
  Bartolomeo Stellato \\
  Department of Operations Research and Financial Engineering\\
  Princeton University\\
  Princeton, NJ 08544 \\
  \texttt{bstellato@princeton.edu} \\
}

\fi 

\newcommand{\eg}{{\it e.g.}}
\newcommand{\ie}{{\it i.e.}}

\newcommand{\reals}{{\mathbf R}}

\newcommand{\symm}{{\mathbf S}}         
\newcommand{\ones}{{\mathbf 1}}         
\newcommand{\identity}{I}               
\newcommand{\prob}{{\mathbf P}}         
\newcommand{\probQ}{{\mathbf Q}}        
\newcommand{\hprob}{\widehat{\prob}}    

\newcommand{\tpose}{T}                  
\newcommand{\Tr}{\mathop{\bf tr}}       
\newcommand{\Span}{\mathop{\bf span}}   

\newcommand{\argmin}{\mathop{\rm argmin}}
\newcommand{\argmax}{\mathop{\rm argmax}}

\newcommand{\prox}{{\mathop{\bf prox}}}   

\newcommand{\Expect}{\mathop{\bf E{}}}

\newcommand{\norm}[2][]{\left\| #2 \right\|_{#1}}
\newcommand{\inner}[3][]{\big\langle #2, #3 \big\rangle_{#1}}

\newcommand{\risk}{\mathcal{R}}         
\newcommand{\suppset}{\Xi}              
\newcommand{\probset}{\mathcal{Z}}      
\newcommand{\supp}{\mathrm{supp}}       
\newcommand{\ambiset}{\mathcal{U}}      
\newcommand{\data}{\widehat{\mathcal{D}}}  

\newcommand{\hG}{\widehat{G}}

\newcommand{\hf}{\hat{f}}
\newcommand{\hF}{\widehat{F}}

\newcommand{\hx}{\hat{x}}

\newcommand{\hz}{\hat{z}}

\newcommand{\xclass}{\mathcal{X}}       
\newcommand{\fclass}{\mathcal{F}}       
\newcommand{\linear}{\mathcal{S}}       

\newcommand{\Aobj}{A_{\mathrm{obj}}}
\newcommand{\bobj}{b_{\mathrm{obj}}}

\newcommand{\algo}{\mathcal{A}}         

\begin{document}

\ifpreprint 
  \title{\mytitle}
  \author{
    Vinit Ranjan\thanks{Department of Operations Research and Financial Engineering, Princeton University. Email: \texttt{vranjan@princeton.edu}.} \and
    Jisun Park\thanks{Department of Operations Research and Financial Engineering, Princeton University, and Research Institute of Mathematics, Seoul National University. Email: \texttt{jisunpark@princeton.edu}.} \and
    Bartolomeo Stellato\thanks{Department of Operations Research and Financial Engineering, Princeton University. Email: \texttt{bstellato@princeton.edu}.}
  }
\fi

\maketitle

\begin{abstract}
    We propose a distributionally robust approach to learning hyperparameters for first-order methods in convex optimization.
    Given a dataset of problem instances, we minimize a Wasserstein distributionally robust version of the performance estimation problem~(PEP) over algorithm parameters such as step sizes.
    Our framework unifies two extremes: as the robustness radius vanishes, we recover classical learning to optimize~(L2O); as it grows, we recover worst-case optimal algorithm design via PEP.
    We solve the resulting problem with stochastic gradient descent, differentiating through the solution of an inner semidefinite program at each step.
    We prove high-probability bounds showing that the true risk of the learned algorithm is at most the in-sample L2O optimum plus a slack that shrinks with the sample size, and is no worse than the worst-case PEP bound.
    On unconstrained quadratic minimization, LASSO, and linear programming benchmarks, our learned algorithms achieve strong out-of-sample performance with certifiable robustness, outperforming both worst-case optimal and vanilla L2O baselines.
\end{abstract}



\section{Introduction}\label{sec:intro}

Applications in machine learning, signal processing, and optimal control increasingly rely on solving large-scale optimization problems using first-order methods for their low computational cost per iteration and modest memory requirements~\cite{beckFirstOrderMethodsOptimization2017,ryuLargeScaleConvexOptimization2022}.
Building on classical gradient descent~\cite{frankAlgorithmQuadraticProgramming1956}, modern variants of first-order methods apply proximal operators~\cite{parikhProximalAlgorithms2014} and operator splitting techniques~\cite{boydDistributedOptimizationStatistical2011,chambolleFirstOrderPrimalDualAlgorithm2011} to solve nonsmooth and constrained problems efficiently.
This has led to robust open-source solvers such as PDLP~\cite{applegatePracticalLargeScaleLinear2021,applegateFasterFirstorderPrimaldual2023} for linear programs (LPs), OSQP~\cite{stellatoOSQPOperatorSplitting2020a} for quadratic programs (QPs), and SCS~\cite{odonoghueConicOptimizationOperator2016,odonoghueOperatorSplittingHomogeneous2021}, COSMO~\cite{garstkaCOSMOConicOperator2019} for semidefinite programs (SDPs) to name a few.
Yet a fundamental challenge persists: practical performance depends critically on hyperparameters like step sizes, and tuning them well remains largely a manual effort prone to trial and error.

Classical analysis derives worst-case convergence bounds over a given function class. 
The performance estimation problem~(PEP)~\cite{droriPerformanceFirstorderMethods2014,taylorSmoothStronglyConvex2017} automates this approach by solving semidefinite programs to obtain tight worst-case guarantees.
These bounds yield worst-case optimal constant step sizes~\cite{beckFirstOrderMethodsOptimization2017}, while recent work develops time-varying step sizes with improved worst-case rates~\cite{altschulerAccelerationstepsizeHedging2025}.
However, worst-case analysis is inherently pessimistic: it guards against pathological instances that rarely arise in practice, yielding conservative algorithms that underperform on typical problems.

Machine learning offers an alternative by exploiting structure in a dataset of problem instances.
Learning to optimize~(L2O)~\cite{andrychowiczLearningLearnGradient2016,liLearningOptimize2017,chenLearningOptimizePrimer2022}, also known as amortized optimization~\cite{amosTutorialAmortizedOptimization2023}, unrolls algorithm iterations and tunes hyperparameters to minimize average loss on training data, often achieving substantial practical speedups over hand-tuned alternatives.
However, L2O provides no theoretical guarantees on out-of-sample performance: learned algorithms may fail on problems outside the training distribution~\cite{chenLearningOptimizePrimer2022}.
Recent efforts enforce guarantees via nonlinear systems theory~\cite{martinLearningOptimizeConvergence2024,martinLearningOptimizeGuarantees2025,martinLearningAccelerateKrasnoselskiiMann2026} or PAC-Bayes bounds~\cite{dziugaiteComputingNonvacuousGeneralization2017,sambharyaLearningAlgorithmHyperparameters2024,suckerLearningtoOptimizePACBayesianGuarantees2025}.
The former reintroduces worst-case conservatism through dynamical systems analysis;
the latter yields bounds whose tightness depends sensitively on the prior and posterior used during training.

In this work, we propose \emph{distributionally-robust learning to optimize}~(DR-L2O), a framework that combines computer-assisted worst-case analysis with data-driven L2O.
Given a dataset of problem instances, we minimize the worst-case expected loss over algorithm parameters such as step sizes, where the worst case is taken over a Wasserstein ambiguity set centered at the empirical distribution.
This builds on the recent distributionally robust PEP of~\cite{parkDatadrivenAnalysisFirstOrder2025}, which evaluates this risk for a fixed algorithm; we instead \emph{minimize} it, turning a performance certificate into a learning objective.
A single Wasserstein radius controls the trade-off between data-driven and worst-case design: as the radius vanishes we recover L2O; as it grows we recover worst-case optimal design via PEP.
We solve the resulting problem with stochastic gradient descent, where each iteration solves an inner SDP and differentiates through its solution via implicit differentiation of the KKT conditions.
Figure~\ref{fig:intro_losses} illustrates this trade-off.
L2O minimizes the empirical loss but degrades on out-of-sample instances, the worst-case design is conservative on both, and DR-L2O sits between the two in-sample yet dominates both out-of-sample.
Our contributions are as follows:
%
%
%
\begin{enumerate}
    \item \textbf{DR-L2O framework.}
        We formulate the distributionally-robust learning to optimize problem~\eqref{prob:dr-l2o}, which minimizes worst-case expected loss over a Wasserstein ambiguity set centered at the empirical distribution.
        We prove that varying the radius continuously interpolates between~\eqref{prob:l2o} and worst-case~\eqref{prob:opt-pep}~(\cref{prop:interpolate}).

    \item \textbf{Scalable solution method.}
        We solve~\eqref{prob:dr-l2o} via stochastic gradient descent, where each iteration solves an inner SDP and differentiates through its solution using implicit differentiation of the KKT conditions.

    \item \textbf{Out-of-sample guarantees.}
        We prove that, for an appropriately chosen radius, the distributionally robust risk upper-bounds the true risk uniformly over algorithm parameters with high probability~(\cref{thm:finite_sample}).
        Combined with the interpolation result, the true risk of the learned algorithm is at most the in-sample L2O optimum plus a slack proportional to the radius, and at most the worst-case PEP bound~(\cref{thm:compare_risk}).

    \item \textbf{Numerical experiments.}
        On unconstrained quadratic minimization, LASSO, and linear programs, we show that DR-L2O learns algorithms with strong out-of-sample performance and certifiable robustness, outperforming both worst-case optimal and vanilla L2O approaches.
\end{enumerate}

\begin{figure}
    \centering
    \includegraphics[width=0.9\linewidth]{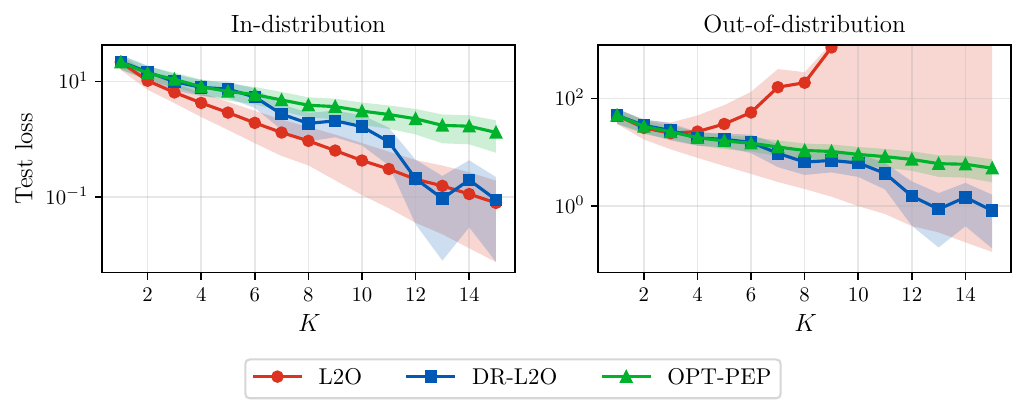}
    \caption{Schematic comparison of L2O~\eqref{prob:l2o}, the worst-case design~\eqref{prob:opt-pep}, and~\eqref{prob:dr-l2o} on a fixed problem class.
        Solid lines show mean loss across instances; shaded bands show the $10^\text{th}$ to $90^\text{th}$ quantiles.
        \emph{Left:} on the training distribution, L2O minimizes empirical loss, the worst-case design is conservative, and DR-L2O lies between (for most $K$).
        \emph{Right:} on an out-of-distribution test set, L2O degrades with high variance, the worst-case design remains slow but robust, and DR-L2O dominates both for higher $K$.
        Details of this experiment are provided in Section~\ref{subsec:lasso}.}
    \label{fig:intro_losses}
\end{figure}

\subsection{Related works}

\paragraph{Worst-case analysis and its limitations}
The performance estimation problem~(PEP)~\cite{droriPerformanceFirstorderMethods2014,taylorSmoothStronglyConvex2017} and control-theoretic analysis~\cite{lessardAnalysisDesignOptimization2016} automate worst-case convergence analysis for first-order methods in convex optimization, while verification frameworks~\cite{ranjanVerificationFirstorderMethods2025,ranjanExactVerificationFirstOrder2025} extend these ideas to parametric quadratic and linear optimization.
These approaches have enabled the discovery of worst-case optimal algorithms including the optimized gradient method~(OGM)~\cite{kimOptimizedFirstorderMethods2016}, OptISTA~\cite{jangComputerAssistedDesignAccelerated2024}, the accelerated proximal point method~(APPM)~\cite{kimAcceleratedProximalPoint2021}, and accelerated methods with silver step sizes~\cite{dasguptaBranchandboundPerformanceEstimation2024,altschulerAccelerationstepsizeHedging2025}.
However, worst-case bounds are often far from typical performance.
Average-case analysis~\cite{smaleAverageNumberSteps1983,xiongHighProbabilityPolynomialTimeComplexity2025} offers tighter characterizations when problem distributions are known, while data-driven evaluation~\cite{sambharyaDataDrivenPerformanceGuarantees2025,huangDataDrivenPerformanceGuarantees2025,parkDatadrivenAnalysisFirstOrder2025,kamriNumericalDesignOptimized2025} provides performance bounds on observed instances.
Yet these approaches do not directly address how to \emph{learn} hyperparameters that balance robustness with empirical performance.

\paragraph{Learning to optimize~(L2O)}
L2O~\cite{andrychowiczLearningLearnGradient2016,liLearningOptimize2017,chenLearningOptimizePrimer2022}, also termed amortized optimization~\cite{amosTutorialAmortizedOptimization2023}, learns algorithm hyperparameters from a dataset of problem instances.
This has achieved practical success, from learned proximal methods that solve LASSO in only a few iterations~\cite{liuALISTAAnalyticWeights2018} to scalable approaches for neural network training~\cite{chenScalableLearningOptimize2022}.
However, L2O is essentially empirical risk minimization, offering no guarantees on out-of-sample performance.
Two strategies have emerged to address this gap.
The first imposes structural constraints that ensure convergence: \cite{chenTheoreticalLinearConvergence2018} identified algorithm structures that are both convergent and trainable, \cite{premont-schwarzSimpleGuardLearned2022} and \cite{heatonSafeguardedLearnedConvex2023} developed safeguarded formulations guaranteeing convergence for every instance, and \cite{liuConstitutingMathematicalStructures2023,martinLearningOptimizeConvergence2024,martinLearningOptimizeGuarantees2025} derived necessary conditions for convergent learned algorithms. While effective, these structural approaches limit learning capacity.
The second strategy incorporates distribution-aware objectives: \cite{yangML2OGeneralizableLearningtoOptimize2022} proposed meta-learning for better generalization, \cite{songRobustLearningOptimize2024} developed out-of-distribution-robust training via data alignment, \cite{sambharyaLearningAlgorithmHyperparameters2024,suckerLearningtoOptimizePACBayesianGuarantees2025} employed PAC-Bayes bounds, and \cite{sambharyaLearningAccelerationAlgorithms2025} used worst-case convergence rates as a regularizer in the learning objective.
Both strategies have limits: structural constraints reduce learning capacity, while distribution-aware objectives either reintroduce worst-case conservatism or rely sensitively on PAC-Bayes priors and posteriors.

\paragraph{Distributionally robust optimization~(DRO)}
DRO~\cite{kuhnDistributionallyRobustOptimization2025} handles uncertainty by optimizing over ambiguity sets, which are collections of distributions consistent with partial information.
When only data samples are available, data-driven DRO~\cite{wiesemannDistributionallyRobustConvex2014,kuhnDistributionallyRobustOptimization2025} constructs ambiguity sets as metric balls centered at the empirical distribution, using Wasserstein distance~\cite{mohajerinesfahaniDatadrivenDistributionallyRobust2018,kuhnWassersteinDistributionallyRobust2019} or divergences such as Kullback-Leibler~(KL) divergence~\cite{hongKullbackLeiblerDivergenceConstrained2012}.
This framework naturally quantifies distributional uncertainty from finite samples, providing the mechanism we exploit to interpolate between L2O and worst-case PEP.

\section{Problem setup}\label{sec:problem_setup}
We consider a \emph{problem instance} $z = (f, x^0)$ consisting of a convex optimization problem
\begin{equation*}
    \begin{array}{ll}
        \underset{x}{\textnormal{minimize}} & f(x),
    \end{array}
\end{equation*}
where $f \colon \reals^d \to \reals \cup \{\infty\}$ is convex, proper, and lower semi-continuous, and $x^0 \in \reals^d$ is an initial point.
We assume that the problem has a solution $x^\star$ with optimal value $f^\star = f(x^\star)$.

An \emph{algorithm}~$\algo_\theta$ maps a problem instance to a sequence of approximate solutions $\algo_\theta(z) = \{x^k\}_{k=0,1,\ldots}$, where $\theta \in \Theta$ represents the algorithm parameters such as step sizes.
We restrict attention to first-order algorithms, which use only subgradient information of $f$.
\begin{assumption}[First-order algorithms]
    \label{assumption:first-order}
    For any $z = (f, x^0)$, the sequence $\{x^k\}_{k=0,1,\dots} = \algo_\theta (z)$ satisfies the span condition
    \begin{equation*}
        x^{k} \in x^0 + \Span \{ g^0, \dots, g^{k-1}, g^k \},
        \qquad k=0,1,\dots,
    \end{equation*}
\end{assumption}
where~$g^i \in \partial f(x^i)$ for~$i=0,\dots,k$.
Note $x^k$ is~$\partial f(x^k)$-dependent when the algorithm involves proximal step~$x^k = \prox_{\eta f} (x^{k-1})$, $\ie$, $g^k = x^{k-1} - x^k \in \partial f(x^k)$.

We measure algorithm performance on instance~$z$ by a \emph{loss function}~$\ell(\algo_\theta(z)) \in \reals_+$.
Given an iteration budget~$K$, common choices of~$\ell$ are the function-value gap $f(x^K) - f^\star$ and the distance to optimality $\|x^K - x^\star\|^2$, evaluated at the $K$-th iterate~$x^K$.

Let $\prob$ be a probability distribution over the set $\probset$ of problem instances.
Our goal is to \textbf{find the optimal algorithm parameter $\theta^\star$} minimizing the \emph{risk}, defined as the expected loss under $\prob$:
\begin{equation}\label{prob:risk_minimization}
    \begin{array}{ll}
        \underset{\theta \in \Theta}{\textnormal{minimize}} &
            \risk(\theta, \prob)
            :=
            \underset{z \sim \prob}{\Expect} \big( \ell(\algo_\theta(z)) \big).
    \end{array}
\end{equation}

\section{Performance guarantees}\label{sec:perf_guarantee}
We now describe two approaches to measuring algorithm performance that serve as building blocks for our framework: worst-case guarantees via PEP~\cite{droriPerformanceFirstorderMethods2014,taylorSmoothStronglyConvex2017} and data-driven probabilistic guarantees via Wasserstein DRO~\cite{parkDatadrivenAnalysisFirstOrder2025}.

\subsection{Worst-case guarantee}\label{subsec:worst-case}
The worst-case loss~$\sup_{z \in \probset} \ell (\algo_\theta(z))$ upper-bounds the risk~$\risk(\theta, \prob)$ for any distribution $\prob$ supported on~$\probset$.
It is evaluated as the optimal value of the following performance estimation problem (PEP):
\begin{equation*}\label{prob:pep}
    \begin{array}[t]{ll}
        \textnormal{maximize} & \ell (\algo_\theta(z)) \\
        \textnormal{subject to}
        & z = (f, x^0) \in \probset
        = \fclass \times \xclass,
    \end{array}
    \tag{PEP}
\end{equation*}
where~$\fclass$ is the function class and~$\xclass$ is the set of initial iterates.
When the function class~$\fclass$ admits interpolation conditions, \eg, smooth convex or convex Lipschitz functions, \eqref{prob:pep} admits a tractable formulation as a convex conic program~\cite{droriPerformanceFirstorderMethods2014,taylorSmoothStronglyConvex2017,ryuOperatorSplittingPerformance2020}.

Following~\cite{droriPerformanceFirstorderMethods2014,taylorSmoothStronglyConvex2017}, \eqref{prob:pep} admits an equivalent SDP via a lifted representation.
Given the iterates $\{x^k\}_{k=0}^K = \algo_\theta(f, x^0)$, we define the Gram matrix $G = P^\tpose P \in \symm_+^{K+2}$ with
\begin{equation*}
    P = \begin{bmatrix} x^0 - x^\star & g^0 & \cdots & g^K \end{bmatrix} \in \reals^{d \times (K+2)}
    \text{ where } g^k \in \partial f(x^k),
\end{equation*}
and the function-value vector $F = (f(x^0) - f^\star, \dots, f(x^K) - f^\star) \in \reals^{K+1}$.
Each component of~\eqref{prob:pep} is linear in~$(G, F)$:
the loss is $\ell(G, F) = \Tr(\Aobj^\tpose G) + \bobj^\tpose F$,
the initial condition is $\Tr(A_0^\tpose G) + b_0^\tpose F + c_0 \le 0$,
and the interpolation constraints of~$\fclass$ are $\linear_\theta(G, F) \in \reals_+^M$ with
\begin{equation}\label{eq:lmi}
    \big(\linear_\theta (G, F)\big)_m = -\Tr\big(A_m(\theta)^\tpose G\big) - b_m(\theta)^\tpose F,
    \qquad m=1,\dots,M,
\end{equation}
where the LMI coefficients $A_m(\theta) \in \symm^{K+2}$ and $b_m(\theta) \in \reals^{K+1}$ depend on the algorithm parameters~$\theta$~\cite{kamriNumericalDesignOptimized2025}.
We denote by~$\suppset^\theta$ the feasible set of~$(G, F)$ pairs:
\begin{equation*}
    \suppset^\theta = \left\{
        (G, F) \in \symm_+^{K+2} \times \reals^{K+1}
        \left|\;
            \linear_\theta (G, F) \in \reals_+^M,\;
            \Tr(A_0^\tpose G) + b_0^\tpose F + c_0 \le 0
        \right.
    \right\}.
\end{equation*}
The explicit SDP and its Lagrangian dual are given in~\cref{appendix:pep}.

\subsection{Data-driven probabilistic guarantee}\label{subsec:data-driven}
Since the true distribution~$\prob$ is unknown, we approximate the risk in~\eqref{prob:risk_minimization} from a finite dataset.
Given i.i.d.\ samples~$\data_N = \{\hz_i\}_{i=1}^N \sim \prob^N$, the empirical distribution is~$\hprob_N = (1/N) \sum_{i=1}^N \delta_{\hz_i}$, where $\delta_z$ is the Dirac measure at $z \in \probset$.
The corresponding empirical risk is
\begin{equation*}
    \risk(\theta, \hprob_N)
    = (1/N) {\textstyle \sum_{i=1}^N} \ell \big( \algo_\theta(\hz_i) \big).
\end{equation*}
With finite samples, however, the empirical risk may be a poor estimate of the true risk~$\risk(\theta, \prob)$.
Running $\algo_\theta$ on each sample $\hz_i \in \data_N$ yields a lifted representation $(\hG_i, \hF_i) \in \suppset^\theta$, inducing the lifted empirical distribution $\hprob_N^\theta = (1/N) \sum_{i=1}^N \delta_{(\hG_i, \hF_i)}$ over $\suppset^\theta$.
To mitigate the bias of the empirical risk, \cite{parkDatadrivenAnalysisFirstOrder2025} proposes a data-driven framework that yields a probabilistic performance guarantee robust to sampling variability.
The guarantees are derived from a Wasserstein distributionally robust variant of the performance estimation problem~\eqref{prob:dro-pep}:
\begin{equation*}\label{prob:dro-pep}
    \!\risk_{\varepsilon} (\theta, \hprob_N) :=
    \begin{array}[t]{ll}
        \textnormal{maximize} & \risk(\theta, \probQ) \\
        \textnormal{subject to}
        & \probQ \in \ambiset_{\varepsilon} (\hprob_N^\theta),
    \end{array}
    \tag{DRO-PEP}
\end{equation*}
where the ambiguity set $\ambiset_{\varepsilon} (\hprob_N^\theta)$ is the $1$-Wasserstein ball of radius~$\varepsilon$ centered at~$\hprob_N^\theta$:
\begin{equation*}
    \ambiset_{\varepsilon} (\hprob_N^\theta)
    = \left\{
        \probQ^\theta \,\left|\;
            \supp\, \probQ^\theta \subseteq \suppset^\theta,\;
            W_1 (\hprob_N^\theta, \probQ^\theta) \le \varepsilon
    \right.\right\},
\end{equation*}
with $W_1$ induced by the norm~$\norm{(G, F)} = \sqrt{\|{G}\|_F^2 + \|{F}\|^2}$.
We refer to $\risk_\varepsilon (\theta, \hprob_N)$ as the \emph{DRO risk}.

\paragraph{Tractable formulation}
By~\cite[Theorem~2]{parkDatadrivenAnalysisFirstOrder2025}, \eqref{prob:dro-pep} has a tractable formulation as a convex conic program; see~\cref{appendix:dro-pep-dual} for details.
For $\theta \in \Theta$ and $\varepsilon > 0$, \eqref{prob:dro-pep} also admits the following equivalent robust optimization formulation:
\begin{equation}\label{prob:dro-pep-primal}
    \begin{array}[t]{ll}
        \textnormal{maximize}
        &   (1/N) \sum_{i=1}^N
            \Tr(\Aobj^\tpose G_i) + \bobj^\tpose F_i \\
        \textnormal{subject to}
        &   (1/N) \sum_{i=1}^N \|{(G_i, F_i) - (\hG_i, \hF_i)}\| \le \varepsilon \\
        &   (G_i, F_i) \in \suppset^\theta,
            \hfill i=1,\dots,N.
    \end{array}
    \tag{DRO-PEP-P}
\end{equation}

The equivalence comes from the strong duality between the dual of~\eqref{prob:dro-pep} and its bi-dual, which is exactly \eqref{prob:dro-pep-primal}.
The proof of the strong duality result is deferred to \cref{appendix:dro-pep-strong-duality}.

\begin{proposition}\label{prop:strong-duality}
    Let $\theta \in \Theta$ and $\varepsilon > 0$.
    Then~\eqref{prob:dro-pep-primal} is a bi-dual of \eqref{prob:dro-pep}, and their optimal values coincide.
    Furthermore, \eqref{prob:dro-pep-primal} has an optimal solution.
\end{proposition}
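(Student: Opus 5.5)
The plan has three parts: write down the conic dual of \eqref{prob:dro-pep} from \cite[Theorem~2]{parkDatadrivenAnalysisFirstOrder2025}, dualize it once more to obtain \eqref{prob:dro-pep-primal}, and then prove strong duality together with attainment by a Slater-type argument plus compactness.

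\textbf{Step 1: the dual.} By the standard Wasserstein DRO duality, the dual of \eqref{prob:dro-pep} reads
\begin{equation*}
\min_{\lambda \ge 0}\; \lambda\varepsilon + \frac1N\sum_{i=1}^N \sup_{(G,F)\in\suppset^\theta}\Big(\Tr(\Aobj^\tpose G)+\bobj^\tpose F - \lambda\|(G,F)-(\hG_i,\hF_i)\|\Big).
\end{equation*}
Each inner supremum is over the convex conic set $\suppset^\theta$. Dualizing it through the multipliers of the interpolation constraints $\linear_\theta(G,F)\ge 0$, the initial condition, the PSD cone and the second-order cone for the norm gives the conic program of \cref{appendix:dro-pep-dual}.

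\textbf{Step 2: the bi-dual.} Form the Lagrangian of this conic dual, attaching variables $(G_i,F_i)$ to the per-sample conic constraints and a scalar to the coupling through $\lambda$. Eliminating the dual variables produces exactly $(G_i,F_i)\in\suppset^\theta$ and the averaged transport budget $(1/N)\sum_i\|(G_i,F_i)-(\hG_i,\hF_i)\|\le\varepsilon$. This is routine conic duality bookkeeping, and it identifies \eqref{prob:dro-pep-primal} as the bi-dual.

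\textbf{Step 3: strong duality and attainment.} The substantive step is showing that the values agree and that the primal optimum is attained. I would verify a Slater condition for \eqref{prob:dro-pep-primal}: $(G_i,F_i)=(\hG_i,\hF_i)$ is feasible, but it sits on the boundary of the budget only when $\varepsilon=0$, and here $\varepsilon>0$ makes the budget constraint strictly satisfied. The cone constraints in $\suppset^\theta$, however, need not hold strictly at $\hG_i$; for instance $\hG_i$ is typically rank-deficient. This is the main obstacle. I would handle it by perturbing toward a strictly feasible point of $\suppset^\theta$ (e.g.\ a generic interpolating instance with full-rank Gram matrix and strict interpolation inequalities) and taking a small convex combination that stays inside the $\varepsilon$-budget. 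Failing that, I would invoke the weaker condition that only the nonpolyhedral PSD constraints need a relative-interior point, since the linear constraints may hold with equality. Either route gives zero duality gap and attainment on the dual side.

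For attainment in \eqref{prob:dro-pep-primal} itself, I would argue directly. The feasible set is closed and, because of the norm budget, bounded: each $(G_i,F_i)$ lies within $N\varepsilon$ of $(\hG_i,\hF_i)$. It is therefore compact, the objective is linear, and Weierstrass gives an optimal solution. Combining this with the equality of the bi-dual and dual values, and with the equality of the dual value and the value of \eqref{prob:dro-pep} from \cite[Theorem~2]{parkDatadrivenAnalysisFirstOrder2025}, yields the claim.
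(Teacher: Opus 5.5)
Your compactness argument for attainment in \eqref{prob:dro-pep-primal} is correct, and it is more elementary than the paper's route. The gap is in the zero-duality-gap part of Step~3. That part rests on a strictly feasible point of $\suppset^\theta$, meaning some $G \succ 0$ with strict interpolation inequalities. You assume such a point exists but never establish it, and it need not exist. Suppose a step size vanishes, say $x^{k+1}=x^k$ in gradient descent on an $L$-smooth convex $f$. Summing the two interpolation inequalities between indices $k$ and $k+1$ gives $\frac{1}{L}\|g^{k+1}-g^k\|^2 \le 0$. Hence the coefficient vector of $g^{k+1}-g^k$ lies in the kernel of every feasible $G$, and no feasible $G$ is positive definite. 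Your fallback does not help here. The PSD cone is exactly the nonpolyhedral constraint that the refined Slater condition still requires to hold strictly. When $\theta$ avoids such degeneracies, you could obtain the point from \cite[Theorem~6]{taylorSmoothStronglyConvex2017}, and your convex-combination transfer to \eqref{prob:dro-pep-primal} would then be valid. That costs a step-size assumption and a separate argument for each function class, which a ``generic interpolating instance'' does not supply.

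The paper places the Slater point on the other side, citing \cite[Lemma~4]{parkDatadrivenAnalysisFirstOrder2025}. This works for every $\theta$ with no assumptions, because in \eqref{prob:dro-pep-dual} the pair $(X_i,Y_i)$ is free up to the norm bound. Construct the point as follows:
\begin{itemize}
\item take $\tau_i=1$ and $y_i=0$ (the sign constraint on $y_i$ is polyhedral, so this is allowed);
\item take $X_i=A_0-\Aobj-\identity$, so the PSD slack is $\identity\succ 0$;
\item take $Y_i=b_0-\bobj$, so the equality block holds;
\item choose $\lambda>\max_i\norm[*]{(X_i,Y_i)}$ and $s_i$ large.
\end{itemize}
The dual value is finite: by weak duality it is bounded below by the objective at the feasible point $(\hG_i,\hF_i)_{i=1}^N$. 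Conic strong duality then gives equal values and attainment in \eqref{prob:dro-pep-primal} at once. With this substitution your proof goes through, and your compactness argument becomes an independent confirmation of attainment.
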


The key departure of this work from~\cite{parkDatadrivenAnalysisFirstOrder2025} is to minimize, rather than evaluate, the DRO risk over~$\theta$, turning a performance certificate into a learning objective.

\section{Distributionally-robust learning to optimize}\label{sec:dr-l2o}
We now minimize the DRO risk~$\risk_\varepsilon (\theta, \hprob_N)$ over~$\theta$ to learn optimal algorithm hyperparameters.
Unlike~\cite{parkDatadrivenAnalysisFirstOrder2025}, which uses the DRO risk only to certify performance of a fixed~$\theta$, we treat it as a learning objective.
When the learning objective is the worst-case loss, the corresponding learning problem is~\cite{kamriNumericalDesignOptimized2025}
\begin{equation}\label{prob:opt-pep}
    \begin{array}{ll}
        \underset{\theta \in \Theta}{\textnormal{minimize}} & \underset{z \in \probset}{\sup}\; \ell \big( \algo_\theta (z) \big).
    \end{array}
    \tag{OPT-PEP}
\end{equation}
However, worst-case optimal algorithms can underperform on typical instances.
At the other extreme, learning to optimize~(L2O) minimizes the empirical risk~\cite{chenLearningOptimizePrimer2022}:
\begin{equation}\label{prob:l2o}
    \begin{array}{ll}
        \underset{\theta \in \Theta}{\textnormal{minimize}} & \risk(\theta, \hprob_N)
        =
        (1/N) \sum_{i=1}^N \ell \big( \algo_\theta (\hz_i) \big).
    \end{array}
    \tag{L2O}
\end{equation}
When the loss~$\ell$ is differentiable, \eqref{prob:l2o} is solved efficiently with stochastic first-order methods.
However, it provides no out-of-sample generalization guarantee: the learned algorithm~$\algo_{\theta^\star}$ may fail on unseen instances.
To balance the two extremes, we minimize the DRO risk, yielding the learning problem
\begin{equation}\label{prob:dr-l2o}
    \begin{array}{ll}
        \underset{\theta \in \Theta}{\textnormal{minimize}} & \risk_{\varepsilon} (\theta, \hprob_N),
    \end{array}
    \tag{DR-L2O}
\end{equation}
which we call \emph{distributionally-robust learning to optimize}.

\subsection{Solution method}\label{subsec:solution}
Problem~\eqref{prob:dr-l2o} is non-convex in~$\theta$, since the entries of~$\hG_i$, $\hF_i$, $A_m(\theta)$, and~$b_m(\theta)$ are non-convex quadratic polynomials in~$\theta$.
Still, the DRO risk~$\risk_{\varepsilon} (\theta, \hprob_N)$ is the optimal value of the convex conic program~\eqref{prob:dro-pep-dual} parametrized by~$\{ (A_m, b_m) \}_{m=1}^M$ and~$\{ (\hG_i, \hF_i) \}_{i=1}^N$, whose solution map is differentiable using the approach of~\cite{agrawalDifferentiatingConeProgram2019}.
We compute the gradient $d\risk_{\varepsilon} (\theta, \hprob_N) / d\theta$ via the chain rule and apply stochastic gradient descent to obtain~$\theta^\star$.
See~\cref{fig:solution_method} for a summary of the solution method and~\cref{appendix:solution} for a detailed description of the gradient evaluation and solution algorithm; the latter also establishes that~\eqref{prob:dr-l2o} attains its minimum over $\Theta$ (Remark~\ref{rem:well-posedness}).

\begin{figure*}[t]
    \centering
    \includegraphics[width=.9\textwidth]{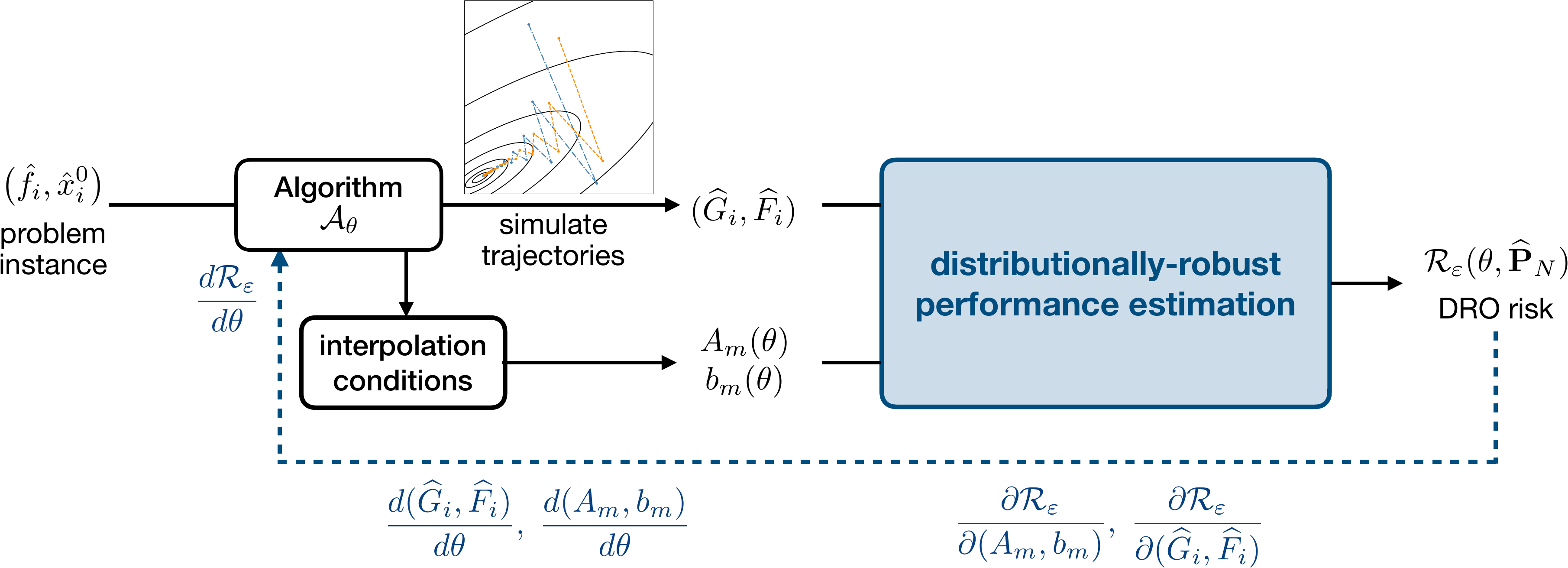}
    \caption{
        The diagram of solution method solving~\eqref{prob:dr-l2o} with stochastic gradient methods.
        Given~$\theta$, we have interpolation LMI coefficients~$A_m(\theta)$ and~$b_m(\theta)$, while a sample problem instance~$(\hf_i, \hx_i^0)$ is mapped to the grammian and function-value representation~$(\hG_i, \hF_i) = \algo_\theta (\hf_i, \hx_i^0)$.
        We evaluate the gradient of DRO risk $\risk_\varepsilon(\theta, \hprob_N)$ w.r.t.~$\theta$ through back-propagation using chain rule.
    }
    \label{fig:solution_method}
\end{figure*}

\section{Theoretical guarantees of learned optimizer}\label{sec:theory}
In this section, we show that~\eqref{prob:dr-l2o} not only interpolates between~\eqref{prob:opt-pep} and~\eqref{prob:l2o}, but also yields out-of-sample and out-of-distribution guarantees. 
Throughout this section, we assume that the parameter set $\Theta$ is compact.
This holds in practice because of finite floating-point precision.

\begin{assumption}\label{assumption:compact-theta}
    The parameter set~$\Theta$ is compact, \ie, closed and bounded.
\end{assumption}

\subsection{Robustness certificate}
We first show that our learned algorithm enjoys a robust out-of-sample guarantee, a consequence of the data-driven Wasserstein DRO framework~\cite{mohajerinesfahaniDatadrivenDistributionallyRobust2018}.
Here, we extend the finite-sample guarantee of~\cite[Section~3.2]{parkDatadrivenAnalysisFirstOrder2025} from a fixed~$\theta \in \Theta$ to a uniform statement over $\theta \in \Theta$.
The proof is provided in~\cref{appendix:finite_sample}.
\begin{theorem}\label{thm:finite_sample}
    Suppose~\cref{assumption:compact-theta} holds.
    Let~$N$ and $K$ be fixed.
    For each~$\beta \in (0, 1)$,
    there exists $\overline{\varepsilon} (\beta) > 0$ and $L > 0$ such that
    \begin{equation*}
        \prob^N \left(
            \begin{array}[t]{l}
                \underset{z \sim \prob}{\Expect}
                \big( \ell (\algo_\theta (z)) \big)
            \end{array}
            \le
                \begin{array}[t]{ll}
                    \underset{\probQ^\theta \in \ambiset_{\overline{\varepsilon} (\beta)} (\hprob_N^\theta)}{\sup}
                    & \underset{(G, F) \sim \probQ^\theta}{\Expect} \big( \ell (G, F) \big),
                \end{array}
            \quad \forall \theta \in \Theta
        \right) \ge 1 - \beta,
    \end{equation*}
    where~$q$ is the dimension of~$(G, F)$
    and
    \begin{equation*}
        \varepsilon(\beta)
        \lesssim
            \left(
                \frac{\log(1/\beta) + \mathrm{dim}(\Theta) \big( \log N + \log \mathrm{diam} (\Theta) \big)}{N}
            \right)^{1/q}
            + \frac{2 L}{N}.
    \end{equation*}
\end{theorem}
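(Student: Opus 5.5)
Fix $\theta$ first and reduce to the standard Wasserstein concentration argument of~\cite{mohajerinesfahaniDatadrivenDistributionallyRobust2018}, as in~\cite[Section~3.2]{parkDatadrivenAnalysisFirstOrder2025}. Let $\prob^\theta$ denote the pushforward of $\prob$ under $z \mapsto (G,F)$, the lifted representation of running $\algo_\theta$ on $z$. It is supported on $\suppset^\theta$, and since $\ell$ is linear in $(G,F)$ we have $\risk(\theta,\prob) = \Expect_{\prob^\theta}\ell(G,F)$. So whenever $W_1(\prob^\theta,\hprob_N^\theta)\le\varepsilon$, the measure $\prob^\theta$ is feasible in $\ambiset_\varepsilon(\hprob_N^\theta)$, and the true risk is at most the DRO risk. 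This holds for any fixed $\varepsilon$ and any $\theta$ at which the event occurs. The whole task is therefore to show
\[
\prob^N\Big(\sup_{\theta\in\Theta} W_1(\prob^\theta,\hprob_N^\theta)\le \overline\varepsilon(\beta)\Big)\ge 1-\beta .
\]
I will assume the light-tail condition on $\prob^\theta$ used in~\cite{parkDatadrivenAnalysisFirstOrder2025}, uniformly in $\theta$; this is the implicit hypothesis behind the $\lesssim$.

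\textbf{Step 1: pointwise concentration.} For fixed $\theta$, I would apply the Fournier--Guillin type bound (as used in~\cite{mohajerinesfahaniDatadrivenDistributionallyRobust2018}). In dimension $q$ it gives $\prob^N(W_1(\prob^\theta,\hprob_N^\theta)>\varepsilon)\le c_1\exp(-c_2 N\varepsilon^{q})$ for small $\varepsilon$.

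\textbf{Step 2: Lipschitz continuity in $\theta$.} The entries of $(\hG_i,\hF_i)$ are polynomial in $\theta$. The iterates $x^k$ are polynomial in the step sizes for gradient-type methods, and for bounded instances, compactness of $\Theta$ (\cref{assumption:compact-theta}) then gives a Lipschitz constant $L$ for the map $\theta\mapsto (G,F)$, uniform over the support. Coupling sample-by-sample yields $W_1(\hprob_N^\theta,\hprob_N^{\theta'})\le L\|\theta-\theta'\|$, and likewise $W_1(\prob^\theta,\prob^{\theta'})\le L\|\theta-\theta'\|$.

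\textbf{Step 3: covering and union bound.} Take a $\delta$-net $\{\theta_j\}$ of $\Theta$ with $(\mathrm{diam}(\Theta)/\delta)^{\dim\Theta}$ points, and apply Step~1 at each $\theta_j$ with confidence $\beta$ divided by the net size. By the triangle inequality and Step~2, $\sup_\theta W_1 \le \max_j W_1(\prob^{\theta_j},\hprob_N^{\theta_j}) + 2L\delta$. Choosing $\delta = 1/N$ gives the log-covering term $\dim(\Theta)(\log N+\log\mathrm{diam}(\Theta))$, so
\[
\overline\varepsilon(\beta)\asymp\Big(\tfrac{\log(c_1/\beta)+\dim(\Theta)(\log N+\log\mathrm{diam}\Theta)}{c_2N}\Big)^{1/q}+\tfrac{2L}{N},
\]
which matches the stated rate. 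On this event, the argument in the first paragraph applies simultaneously for all $\theta$.

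\textbf{Main obstacle.} I expect Step~2 to be the hardest part. Proximal steps make the iterates only Lipschitz rather than polynomial in $\theta$, since $\prox$ is nonexpansive with Lipschitz dependence on the step size. More importantly, uniform-in-instance Lipschitzness requires bounded $(f,x^0)$ data, or at least a moment bound so that $W_1$ is controlled in expectation.

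If the support is unbounded, I would instead bound $W_1(\prob^\theta,\prob^{\theta'})\le \Expect\|(G,F)_\theta-(G,F)_{\theta'}\|\le \Expect[L(z)]\,\|\theta-\theta'\|$. The empirical term would be controlled through $\frac1N\sum_i L(\hz_i)$, which costs an extra concentration step for the average of $L(\hz_i)$, absorbed into the constants. The light-tail uniformity needed for Step~1 would be handled the same way, by bounding the exponential moments via compactness of $\Theta$.
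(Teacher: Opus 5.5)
Your proposal is correct and follows the paper's proof essentially step for step. Both arguments use pointwise Fournier--Guillin concentration at each point of a $\delta$-net of $\Theta$, with $\beta$ split over the net, and then a union bound. Both then apply the triangle inequality $W_1(\hprob_N^\theta,\prob^\theta)\le \varepsilon+2L\delta$, obtained by sample-wise coupling and uniform Lipschitz continuity of $\theta\mapsto\algo_\theta$, and finally choose $\delta=1/N$.

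The one point to tighten is your Step~2. The iterates are not polynomial in $\theta$ for gradient steps on non-quadratic $f$. The paper instead proves uniform boundedness and the Lipschitz estimate by induction over $k$ (\cref{lem:algo-map-lipschitz}), using $L$-smoothness, nonexpansiveness of $\prox$, and uniform bounds on $x^0-x^\star$, $\nabla f(x^\star)$ and $x^\star-\prox_{\eta g}(x^\star)$. That induction also supplies the bounded support behind your uniform light-tail assumption.
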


\begin{remark}[Out-of-distribution (OOD) guarantee]\label{rem:ood_guarantee}
    The DRO risk provides a performance guarantee that is also robust to the choice of true distribution, in the sense that the same bound holds for any $\probQ^\theta \in \ambiset_{\varepsilon} (\hprob_N^{\theta^\star})$, not just the true distribution.
\end{remark}

\subsection{Interpolating behavior of (\ref{prob:dr-l2o})}
\label{subsec:interpolate}

We now show how~\eqref{prob:dr-l2o} interpolates between~\eqref{prob:l2o} and the worst-case optimal design~\eqref{prob:opt-pep} as the radius~$\varepsilon$ varies.
All proofs are presented in~\cref{appendix:interpolate}.

\begin{proposition}\label{prop:interpolate}
    Suppose~\cref{assumption:compact-theta} holds, $\varepsilon > 0$,
    and let $\theta_{\varepsilon}^\star$ be an optimal solution of~\eqref{prob:dr-l2o}.
    Then the mapping~$\varepsilon \mapsto \risk_{\varepsilon} (\theta_{\varepsilon}^\star, \hprob_N)$ is monotonically nondecreasing.
    Furthermore,
    \begin{equation*}
        \lim_{\varepsilon \to 0^+} \risk_{\varepsilon} (\theta^\star_{\varepsilon}, \hprob_N)
            = \inf_{\theta \in \Theta} \risk (\theta, \hprob_N),
        \qquad
        \lim_{\varepsilon \to \infty} \risk_{\varepsilon} (\theta^\star_{\varepsilon}, \hprob_N)
            =   \underset{\theta \in \Theta}{\inf}\,
                \underset{z \in \probset}{\sup}\,
                \ell \big( \algo_{\theta} (z) \big).
    \end{equation*}
\end{proposition}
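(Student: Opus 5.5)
Writing $V(\varepsilon) := \min_{\theta\in\Theta}\risk_\varepsilon(\theta,\hprob_N) = \risk_\varepsilon(\theta^\star_\varepsilon,\hprob_N)$, the plan is to treat $V$ as the value function of a min over $\theta$ of a family that is nondecreasing in $\varepsilon$. Then I derive the two limits by exchanging $\lim$ with $\inf_\theta$, which is where compactness of $\Theta$ (\cref{assumption:compact-theta}) enters.

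\emph{Monotonicity.} For fixed $\theta$ and $\varepsilon_1 \le \varepsilon_2$ we have $\ambiset_{\varepsilon_1}(\hprob_N^\theta) \subseteq \ambiset_{\varepsilon_2}(\hprob_N^\theta)$, so $\risk_{\varepsilon_1}(\theta,\hprob_N) \le \risk_{\varepsilon_2}(\theta,\hprob_N)$. Taking the minimum over $\theta$ preserves the inequality, so $V(\varepsilon_1)\le V(\varepsilon_2)$.

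\emph{Limit $\varepsilon\to 0^+$.} Since $\hprob_N^\theta \in \ambiset_\varepsilon(\hprob_N^\theta)$, we get $V(\varepsilon) \ge \inf_\theta \risk(\theta,\hprob_N)$ for all $\varepsilon$; by monotonicity the limit exists and satisfies this lower bound. For the reverse inequality, fix any $\theta$. I will show $\risk_\varepsilon(\theta,\hprob_N)\to\risk(\theta,\hprob_N)$ as $\varepsilon\to 0^+$, and then $\lim V \le \risk(\theta,\hprob_N)$ for every $\theta$, which gives the claim. The pointwise limit follows from \cref{prop:strong-duality} together with linearity of the loss. In \eqref{prob:dro-pep-primal}, each feasible $(G_i,F_i)$ satisfies $\|(G_i,F_i)-(\hG_i,\hF_i)\|\le N\varepsilon$. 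The objective is linear with coefficient norm $\|(\Aobj,\bobj)\|$, so the objective exceeds the empirical risk by at most $\|(\Aobj,\bobj)\|\varepsilon$ (by Cauchy--Schwarz). This bound is even uniform in $\theta$, because $\Aobj,\bobj$ do not depend on $\theta$. Uniformity means I do not actually need compactness here: $V(\varepsilon)\le \inf_\theta\risk(\theta,\hprob_N)+\|(\Aobj,\bobj)\|\varepsilon$.

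\emph{Limit $\varepsilon\to\infty$.} Write $W(\theta) := \sup_{z\in\probset}\ell(\algo_\theta(z))$, which equals the optimal value of \eqref{prob:pep}, i.e.\ $\sup_{(G,F)\in\suppset^\theta}\ell(G,F)$ (possibly $+\infty$). The upper bound $V(\varepsilon)\le\inf_\theta W(\theta)$ is immediate, since every $\probQ^\theta$ is supported on $\suppset^\theta$. For the lower bound, I first show pointwise convergence $\risk_\varepsilon(\theta,\hprob_N)\uparrow W(\theta)$. Given any $(G,F)\in\suppset^\theta$, take $(G_1,F_1)=(G,F)$ and $(G_i,F_i)=(\hG_i,\hF_i)$ for $i\ge2$; this is feasible in \eqref{prob:dro-pep-primal} once $\varepsilon$ is large, and it yields value $\frac1N\ell(G,F)+\frac{N-1}{N}\risk$. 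That is not enough, because mass $1/N$ is too small. Instead I move all $N$ points to $(G,F)$, which is feasible for $\varepsilon \ge \frac1N\sum_i\|(G,F)-(\hG_i,\hF_i)\|$ and gives value exactly $\ell(G,F)$. So $\lim_\varepsilon\risk_\varepsilon(\theta,\cdot)=W(\theta)$.

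The main obstacle is interchanging $\lim_\varepsilon$ with $\min_\theta$, since a pointwise increasing limit does not by itself give convergence of the minima. My approach is the standard argument for monotone nets of lower semicontinuous functions on a compact set (a Dini-type/$\Gamma$-convergence argument). Pick $\varepsilon_n\to\infty$, set $\theta_n=\theta^\star_{\varepsilon_n}$, and pass to a subsequence with $\theta_n\to\bar\theta\in\Theta$. For fixed $m$ and all $n\ge m$, $V(\varepsilon_n)=\risk_{\varepsilon_n}(\theta_n)\ge\risk_{\varepsilon_m}(\theta_n)$. If $\theta\mapsto\risk_{\varepsilon_m}(\theta,\hprob_N)$ is lower semicontinuous, then $\lim V\ge\risk_{\varepsilon_m}(\bar\theta)$, and letting $m\to\infty$ gives $\lim V\ge W(\bar\theta)\ge\inf_\theta W(\theta)$. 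It therefore remains to establish lower semicontinuity of $\theta\mapsto\risk_\varepsilon(\theta,\hprob_N)$. I would obtain it from the primal form \eqref{prob:dro-pep-primal}. Given a near-optimal point at $\bar\theta$, I would construct nearby feasible points at $\theta_n$ by perturbing them. This uses that $A_m,b_m,\hG_i,\hF_i$ are polynomial in $\theta$, and a Slater-type point of $\suppset^\theta$ to absorb small constraint violations. Making this perturbation step rigorous is where I expect the real work to lie; alternatively I would cite the continuity/attainment facts established in \cref{appendix:solution} (Remark~\ref{rem:well-posedness}).
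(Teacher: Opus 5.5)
Your proposal is correct, and for monotonicity and the $\varepsilon\to0^+$ limit it follows the paper. The paper gets the $\theta$-uniform bound $\risk_\varepsilon(\theta,\hprob_N)\le\risk(\theta,\hprob_N)+\varepsilon\,\mathrm{Lip}(\ell)$ by citing \cref{prop:dro_regularization}. You derive the same bound, with $\mathrm{Lip}(\ell)=\|(\Aobj,\bobj)\|$, directly from \eqref{prob:dro-pep-primal} via Cauchy--Schwarz on the averaged transport budget; the per-atom bound $N\varepsilon$ is not needed.

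The genuine difference is the $\varepsilon\to\infty$ limit. The paper asserts two things without proof: the pointwise limit $\risk_\varepsilon(\theta,\hprob_N)\to\sup_{z\in\probset}\ell(\algo_\theta(z))$, and joint continuity of $(\varepsilon,\theta)\mapsto\risk_\varepsilon(\theta,\hprob_N)$. It then applies Dini's theorem to get uniform convergence on $\Theta$ and swaps $\lim_\varepsilon$ with $\inf_\theta$. You instead prove the pointwise limit explicitly, by moving all $N$ atoms to a single $(G,F)\in\suppset^\theta$. You then swap limit and infimum with the subsequence argument for nondecreasing families of lower semicontinuous functions on a compact set.

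Your route asks for less. It needs only lower semicontinuity of $\theta\mapsto\risk_\varepsilon(\theta,\hprob_N)$ for each fixed $\varepsilon$, and no continuity or finiteness of the limit $\theta\mapsto\sup_{z\in\probset}\ell(\algo_\theta(z))$. Dini's theorem does require that limit function to be continuous, and the paper does not verify this. In exchange, when its hypotheses hold, the paper's argument gives the stronger conclusion of uniform convergence.

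Your remaining input (lower semicontinuity) rests on the same fact the paper relies on, Remark~\ref{rem:well-posedness}, so citing it leaves you at the paper's level of rigor. If you carry out the perturbation yourself, you can build the strictly feasible point for \eqref{prob:dro-pep-primal} by mixing a Slater point of $\suppset^{\bar\theta}$ with the samples $(\hG_i,\hF_i)$ (using $\varepsilon>0$). This works when the interpolation constraints are genuine inequalities, but needs extra care when they include equalities, as in \eqref{eq:quad_supp}. Finally, take the sequence $\varepsilon_n$ increasing, so that $\risk_{\varepsilon_n}(\theta,\hprob_N)\ge\risk_{\varepsilon_m}(\theta,\hprob_N)$ for $n\ge m$ holds.
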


As~$\varepsilon \to 0^+$, problem~\eqref{prob:dr-l2o} loses its distributional robustness and reduces to~\eqref{prob:l2o}.
For large enough~$\varepsilon$, the optimizer learned via~\eqref{prob:dr-l2o} is robust to essentially all distributions supported on~$\suppset$, becoming worst-case optimal.

We now establish certified performance bounds for the true risk of our learned algorithm.

\begin{theorem}\label{thm:compare_risk}
    Suppose~\cref{assumption:compact-theta} holds.
    Given~$\varepsilon > 0$,
    let~$\theta_{\varepsilon}^\star$ be an optimal solution of~\eqref{prob:dr-l2o}.
    For any~$\beta \in (0, 1)$ and~$\varepsilon = \overline{\varepsilon}(\beta)$ as in~\cref{thm:finite_sample}, the true risk~$\risk(\theta^\star_{\varepsilon}, \prob)$ of the optimizer learned via~\eqref{prob:dr-l2o} satisfies the following with probability at least~$1-\beta$:
    \begin{equation*}
            \risk (\theta^\star_{\varepsilon}, \prob)
            \le
                \inf_{\theta \in \Theta} \risk (\theta, \hprob_N)
                + \varepsilon\, \mathrm{Lip} (\ell),
        \qquad
            \risk (\theta^\star_{\varepsilon}, \prob)
            \le
                \inf_{\theta \in \Theta} \sup_{z \in \probset} \ell \big( \algo_{\theta} (z) \big),
    \end{equation*}
    where~$\mathrm{Lip}(\ell)$ is a Lipschitz constant of the loss function~$\ell$.
\end{theorem}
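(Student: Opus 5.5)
The plan is to intersect the high-probability event of \cref{thm:finite_sample} with two deterministic upper bounds on the optimal DRO risk $\risk_\varepsilon(\theta^\star_\varepsilon, \hprob_N)$. Fix $\beta$ and set $\varepsilon = \overline{\varepsilon}(\beta)$. On the event of probability at least $1-\beta$ from \cref{thm:finite_sample}, the bound $\risk(\theta,\prob) \le \risk_\varepsilon(\theta,\hprob_N)$ holds \emph{uniformly} in $\theta\in\Theta$. It therefore holds at the data-dependent point $\theta = \theta^\star_\varepsilon$. This uniformity is exactly why the theorem was stated over all of $\Theta$: a fixed-$\theta$ guarantee would not apply to a $\theta$ selected from the same data. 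So on this event
\[
\risk(\theta^\star_\varepsilon,\prob) \le \risk_\varepsilon(\theta^\star_\varepsilon,\hprob_N) = \inf_{\theta\in\Theta}\risk_\varepsilon(\theta,\hprob_N).
\]
It remains to bound the right-hand side in two ways, and neither bound needs probability.

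\textbf{First bound.} Take any $\theta\in\Theta$ and any feasible $\{(G_i,F_i)\}$ in \eqref{prob:dro-pep-primal}; by \cref{prop:strong-duality} this problem has the same value as \eqref{prob:dro-pep}. Since $\ell$ is linear in $(G,F)$, it is Lipschitz with constant $\mathrm{Lip}(\ell) = \|(\Aobj,\bobj)\|$ in the norm $\norm{(G,F)}$. Hence
\[
\frac1N\sum_i \ell(G_i,F_i) \le \frac1N\sum_i \ell(\hG_i,\hF_i) + \mathrm{Lip}(\ell)\,\frac1N\sum_i\|(G_i,F_i)-(\hG_i,\hF_i)\| \le \risk(\theta,\hprob_N)+\varepsilon\,\mathrm{Lip}(\ell).
\]
Here $\ell(\hG_i,\hF_i)=\ell(\algo_\theta(\hz_i))$ by the lifted representation. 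Taking the supremum over feasible points gives $\risk_\varepsilon(\theta,\hprob_N)\le\risk(\theta,\hprob_N)+\varepsilon\,\mathrm{Lip}(\ell)$. Taking the infimum over $\theta$ then gives the first inequality.

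\textbf{Second bound.} Every $\probQ^\theta$ in the ambiguity set is supported on $\suppset^\theta$. Therefore $\Expect_{\probQ^\theta}\ell(G,F)\le \sup_{(G,F)\in\suppset^\theta}\ell(G,F)$, and the right-hand side is the value of the SDP form of \eqref{prob:pep}. That value equals (or, under relaxation, upper-bounds) $\sup_{z\in\probset}\ell(\algo_\theta(z))$ by the exactness of the PEP reformulation via interpolation conditions \cite{taylorSmoothStronglyConvex2017}. Taking the infimum over $\theta$ gives $\inf_\theta \risk_\varepsilon(\theta,\hprob_N)\le\inf_\theta\sup_z \ell(\algo_\theta(z))$. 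Alternatively, one can combine the monotonicity in \cref{prop:interpolate} with its $\varepsilon\to\infty$ limit, since a nondecreasing function is at most its limit.

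\textbf{Main obstacle.} I expect the delicate step to be the identification of $\sup_{\suppset^\theta}\ell$ with the original worst-case loss over $\probset$. Tightness of the SDP lifting requires interpolation conditions and, for exact equality, a large enough dimension $d$. If only the lifted worst case is available, I would state the second bound in terms of the lifted PEP value, which is what \cref{prop:interpolate} actually identifies as the limit. Everything else follows directly from the uniform event and linearity of $\ell$.
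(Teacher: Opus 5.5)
Your proof is correct and follows the paper's argument. You use the uniform event of \cref{thm:finite_sample} at the data-dependent $\theta^\star_\varepsilon$, the regularization bound $\risk_\varepsilon(\theta,\hprob_N)\le\risk(\theta,\hprob_N)+\varepsilon\,\mathrm{Lip}(\ell)$ for the first inequality, and the support bound of \cref{lem:compare_risk} for the second (the paper makes the same implicit identification of the lifted and original worst case). The differences are minor: the paper cites \cref{prop:dro_regularization} where you derive the bound from \eqref{prob:dro-pep-primal} via \cref{prop:strong-duality} (valid, though a plain coupling argument avoids the duality step), and it passes through a minimizer $\theta^\star_{\mathrm{PEP}}$ of \eqref{prob:opt-pep} whose existence you sidestep by taking infima directly.
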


The first bound certifies that the true risk of the optimizer learned via~\eqref{prob:dr-l2o} is within $\varepsilon\,\mathrm{Lip}(\ell)$ of the in-sample optimum of~\eqref{prob:l2o}, providing an out-of-sample guarantee that~\eqref{prob:l2o} alone lacks.
The second bound validates that the true risk is no worse than the worst-case optimal bound from~\eqref{prob:opt-pep}.
Note that these bounds do not imply strict empirical superiority over either baseline; the numerical experiments in~\cref{sec:experiments} demonstrate that in practice \eqref{prob:dr-l2o} achieves lower true risk than both.

\section{Numerical experiments}\label{sec:experiments}
We now showcase the performance of our learned optimizers\ifneurips \footnote{Code is available at \url{https://anonymous.4open.science/r/dr-l2o-}.}\fi on unconstrained quadratic minimization, LASSO, and total variation inpainting.
For each experiment, we report the out-of-sample test loss, the fraction of test instances solved at relative tolerance $\eta > 0$ (an instance $z = (f, x^0)$ is solved if $\ell(\algo_\theta(z)) \le \eta(1+|f^\star|)$), and out-of-distribution performance.
Implementation details and per-iteration wall-time tables are in Appendix~\ref{appendix:experiments}.
We compare three learned optimizers with parameters $\theta_\varepsilon^\star$ from~\eqref{prob:dr-l2o}, $\theta_{\text{PEP}}^\star$ from~\eqref{prob:opt-pep}, and $\theta_{\text{L2O}}^\star$ from~\eqref{prob:l2o}.
\ifpreprint
All code to reproduce our experiments is available at 
\begin{center}
    \url{https://github.com/stellatogrp/dro_pep}.
\end{center}
\fi

\paragraph{Training and parameter selection}
We learn a step-size schedule $\theta = \{\theta^k\}_{k=0}^{K-1}$ for each horizon $K$ and each of~\eqref{prob:dr-l2o},~\eqref{prob:opt-pep}, and~\eqref{prob:l2o} using gradient-based methods.
We cross-validate the Wasserstein radius~$\varepsilon$, the learning rate, and the AdamW weight decay, and use a held-out test set for final loss evaluations.
Full details are in~\cref{appendix:solution}.

\begin{figure}
    \centering
    \includegraphics[width=0.9\linewidth]{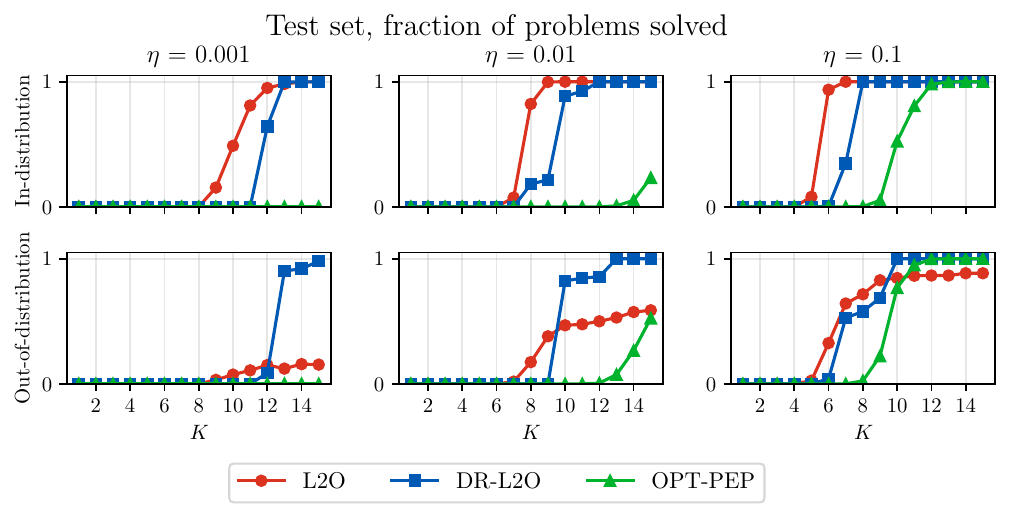}
    \caption{Quadratic minimization experiment fractions of problems solved to different relative tolerances across horizon $K$. \emph{Top:} Fraction solved on a held-out in-distribution test set. \emph{Bottom:} Fraction solved on an out-of-distribution set. }
    \label{fig:quad_frac_solved}
\end{figure}

\subsection{Unconstrained quadratic minimization}\label{subsec:qp}
Consider an unconstrained quadratic minimization problem
\begin{equation*}
    \begin{array}{ll}
        \underset{x}{\textnormal{minimize}} & (1/2) x^\tpose Q x,
    \end{array}
\end{equation*}
where~$Q \in \symm_{++}^d$ is a positive definite matrix.
We learn the step size~$\theta = (\theta^k)_{k=0}^{K-1} \in \reals_+^{K}$ of gradient descent~$x^{k+1} = (\identity - \theta^{k} Q) x^k$ for~$k=0,\dots,K-1$.
For the performance loss, we use the objective value gap $f(x^K) - f(x^\star)$, where $f(x^\star) = 0$ for this quadratic function class.
We sample $Q_i$ from the Mar{\v c}enko-Pastur distribution~\cite{marcenkoDistributionEigenvaluesSets1967,pedregosaAveragecaseAccelerationSpectral2020} and $z_i^0$ uniformly on a ball; the out-of-distribution shift increases the Lipschitz constant~$L$ (full parameters in Appendix~\ref{appendix:qp}).


\paragraph{Results}
In Figure~\ref{fig:quad_frac_solved} we show the fractions of problems solved by the learned schedules.
For the in-distribution test set,~\eqref{prob:l2o} solves the most problems at the tolerance levels but~\eqref{prob:dr-l2o} performs the best out-of-distribution for larger $K$.
Figure~\ref{fig:quad_losses} (Appendix~\ref{appendix:qp}) shows that the~\eqref{prob:dr-l2o} loss remains competitive for many instances but the average degrades with $K$ due to some instances where the learned schedule performs poorly.
Both~\eqref{prob:opt-pep} and our~\eqref{prob:dr-l2o} are robust against the distribution shift while the~\eqref{prob:dr-l2o} average performance does not degrade.

\begin{figure}
    \centering
    \includegraphics[width=0.9\linewidth]{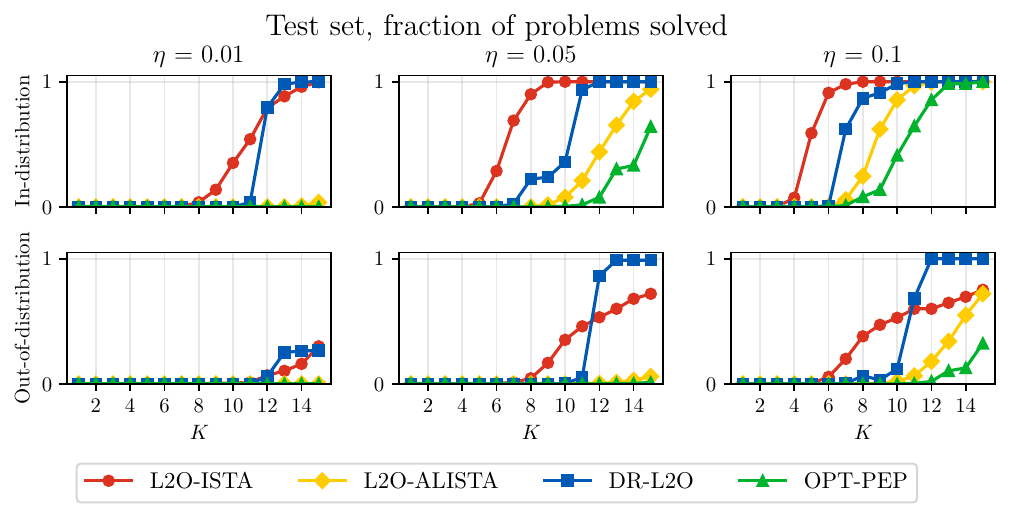}
    \caption{Fractions of LASSO problems solved to different relative tolerances across horizon $K$. \emph{Top:} Fraction solved in-distribution. \emph{Bottom:} Fraction solved out-of-distribution.}
    \label{fig:lasso_frac_solved}
\end{figure}
\subsection{LASSO}\label{subsec:lasso}
Consider an~$\ell_1$-regularized least-squares problem:
\begin{equation*}
    \begin{array}{ll}
        \underset{x}{\textnormal{minimize}} & (1/2) \| Ax - b \|^2 + \lambda \|x\|_1,
    \end{array}
\end{equation*}
where~$\lambda>0$ is the~$\ell_1$ regularization parameter.
We learn the step sizes~$\{\theta^k\}_{k=0}^{K-1}$ of ISTA~\cite{beckFirstOrderMethodsOptimization2017}:
\begin{equation*}
    x^{k+1} = {\prox}_{\lambda \theta^{k} \|\cdot\|_1} \big(
        x^k - \theta^{k} A^\tpose (A x^k - b)
    \big),
\end{equation*}
where~$\theta = (\theta^k)_{k=0}^{K-1} \in \reals_+^K$.
We again consider the performance loss $f(x^K) - f(x^\star)$.

Specialized learned optimizers exist for LASSO, all based on empirical risk minimization.
Learned ISTA (LISTA)~\cite{gregorLearningFastApproximations2010} learns weight matrices to speed up the proximal updates over a distribution of instances.
For comparison, we label our default~\eqref{prob:l2o} scheme as L2O-ISTA and compare against analytic LISTA (ALISTA)~\cite{chenTheoreticalLinearConvergence2018,liuALISTAAnalyticWeights2018}, which has better practical performance and is more computationally efficient than LISTA.
ALISTA restricts the structure of LISTA by making the update rule $x^{k+1} = {\prox}_{\gamma^k \|\cdot\|_1} \big( x^k - \theta^{k} W^\tpose (A x^k - b) \big)$, where $W$ is precomputed as the solution of a convex quadratic program~\cite[Equation 16]{chenTheoreticalLinearConvergence2018}.
We label this baseline L2O-ALISTA.
We use a sparse coding setup, recovering sparse vectors $\tilde{x}$ from noisy measurements $b = A\tilde{x} + \varepsilon$ with a fixed dictionary $A\in\reals^{m \times n}$~\cite{chenTheoreticalLinearConvergence2018}; the out-of-distribution shift increases the sparse-signal variance $\sigma_x$ (full parameters in Appendix~\ref{appendix:lasso}).

\paragraph{Results} We previewed the loss results of this experiment in Figure~\ref{fig:intro_losses}, and the full version with L2O-ALISTA is provided in Figure~\ref{fig:lasso_losses} (Appendix~\ref{appendix:lasso}), while the fractions of problems solved are provided in Figure~\ref{fig:lasso_frac_solved}.
Overall, we notice similar trends as in the unconstrained quadratic experiment.
The biggest difference is that, even for the same horizon $K$, the performance of L2O-ISTA degrades significantly more.
Given that the $10^\text{th}$ quantile is so low and the schedule is able to solve many of the out-of-distribution problems, this implies that the schedule diverges on a small number of instances, while our~\eqref{prob:dr-l2o} learned schedule is robust against such outliers.

\subsection{Total variation (TV) inpainting}\label{subsec:lp}
We consider the $\ell_1$ total-variation (TV) inpainting problem, which fills in an image that has been corrupted with some pixels blacked out~\cite{cvxpy_tv_inpainting}.
For a grayscale image represented as an $m \times n$ matrix $U^\text{orig}$ with values in $[0, 255]$ (scaled down to $[0,1]$ for algorithm stability) and the set $\mathcal{K}$ of known pixel locations, the goal is to reconstruct the image $U$ while matching the known pixels.
The corresponding linear program is
\begin{equation*}
    \begin{array}{ll}
        \underset{U}{\textnormal{minimize}} & \displaystyle \sum_{i=1}^{m-1} \sum_{j=1}^{n-1} \norm[1]{\begin{bmatrix}
            U_{i+1, j} - U_{i, j} \\ U_{i, j+1} - U_{i, j}
        \end{bmatrix}} \\
        \textnormal{subject to} & U_{ij} = U^\text{orig}_{ij}, \quad (i,j) \in \mathcal{K} \\
        & U_{ij} \in [0, 1], \quad i=1,\dots, m,~j = 1, \dots, n.
    \end{array}
\end{equation*}

We solve this LP using primal-dual hybrid gradient (PDHG)~\cite{chambolleFirstOrderPrimalDualAlgorithm2011} adjusted for linear programs (PDLP)~\cite{applegateFasterFirstorderPrimaldual2023}.
We first cast the problem in standard LP form by introducing slack variables for the $\ell_1$ terms (see~\cref{appendix:lp}), and apply PDHG to its convex-concave saddle reformulation
\begin{equation*}
    \underset{x}{\textnormal{minimize}}~\underset{u}{\textnormal{maximize}}~~L(x, u) = f(x) + \langle u, Mx \rangle - g^*(u),
\end{equation*}
where $f$ encodes the LP's linear cost and box constraints, $M$ stacks the equality and inequality constraint matrices, and $g(Mx)$ is the indicator of the constraints; see~\cref{appendix:lp} for the explicit form and the corresponding PDHG iteration, whose step sizes $\theta = \{(\tau^k, \rho^k, \sigma^k)\}_{k=0}^{K-1} \in \reals_+^{3K}$ we learn.
For the performance loss, we use the Lagrangian duality gap $L(x, u^\star) - L(x^\star, u)$; since this class falls outside the standard~\eqref{prob:pep} interpolation conditions, we extend them to general linear operators in~\cref{appendix:extended_dual_dro_pep}.
For in-distribution we use Olivetti faces~\cite{samaria1994Faces,olivettifaces} with 10\% pixels blacked out; for out-of-distribution we use color images from Tiny ImageNet~\cite{tinyimagenet,deng2009imagenet} (full parameters in Appendix~\ref{appendix:lp}).

\paragraph{Results}
We show both the fractions of problems solved and the losses in Figures~\ref{fig:pdlp_frac_solved} and~\ref{fig:pdlp_losses} in Appendix~\ref{appendix:lp}.
Here, in Figure~\ref{fig:pdlp_reconstructions}, we instead show the reconstructions of images from the two datasets with the different learned schedules.
The Olivetti reconstruction is not significantly different between~\eqref{prob:l2o} and~\eqref{prob:dr-l2o}.
However, the Tiny ImageNet reconstruction shows a difference where the learned schedules increasingly bring the image into focus and lessen the gray hue.
In Figure~\ref{fig:pdlp_more_reconstructions} (Appendix~\ref{appendix:lp}), we provide more reconstructions on extra images from Tiny ImageNet.

\begin{figure}
    \centering
    \includegraphics[width=\linewidth]{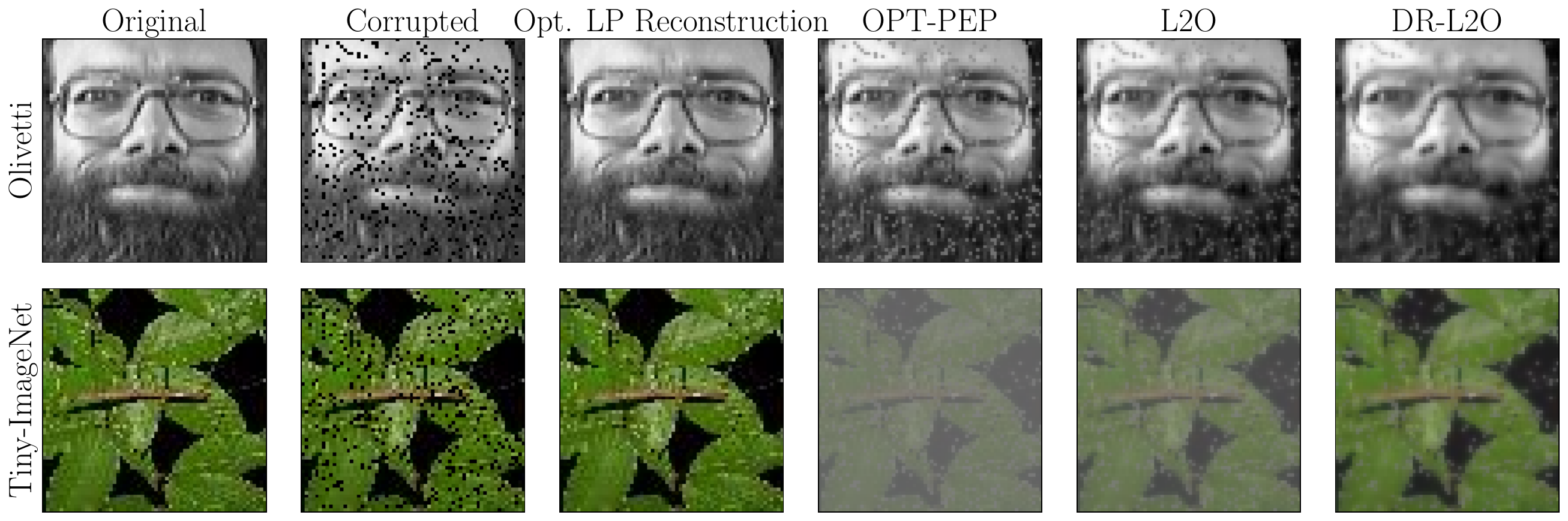}
    \caption{TV inpainting reconstructions ($K=10$). \emph{Left to right:} original, corrupted (10\% missing pixels), optimal LP reconstruction, and reconstructions from~\eqref{prob:opt-pep},~\eqref{prob:l2o}, and~\eqref{prob:dr-l2o}.}
    \label{fig:pdlp_reconstructions}
\end{figure}

\section{Conclusion}\label{sec:conclusion}
We presented a distributionally-robust algorithm design framework for first-order methods.
By minimizing the distributionally robust risk, our learned algorithms enjoy out-of-sample and out-of-distribution guarantees that classical L2O approaches lack.
The framework extends naturally to risk-averse training with other risk measures, such as conditional value-at-risk~\cite{uryasevConditionalValueatRiskOptimization2001}; the corresponding DRO-PEP formulation already appears in~\cite[Corollary~2.1]{parkDatadrivenAnalysisFirstOrder2025}.
A major limitation is scalability: unrolling algorithms for more iterations requires solving a much larger conic program at each gradient step.
A natural remedy is to use GPU-accelerated first-order solvers~\cite{luCuPDLPjlGPUImplementation2024,linPracticalGPUEnhancedMatrixFree2026,luCuPDLPxFurtherEnhanced2025,chenHPRLPImplementationHPR2025,chenHPRQPDualHalpern2025}.

\newcommand{\myack}{%
Bartolomeo Stellato and Vinit Ranjan are supported by the NSF CAREER Award ECCS-2239771 and the ONR YIP Award N000142512147.
Jisun Park is supported by the National Research Foundation of Korea (NRF) grant funded by the Korea government (MSIT) (RS-2024-00353014) and the ONR YIP Award N000142512147.
The authors are pleased to acknowledge that the work reported on in this paper was substantially performed using Princeton University's Research Computing resources.%
}
\ifdefined\isaccepted \section*{Acknowledgements} \myack \fi
\ifdefined\ispreprint \section*{Acknowledgements} \myack \fi
\ifpreprint \newpage \section*{Acknowledgements} \myack \fi


\ifneurips
\bibliographystyle{abbrv}
\fi
\bibliography{bibliography}

\newpage
\appendix
\section{Additional details and proofs for~Section~\ref{sec:perf_guarantee}}
\label{appendix:perf_guarantee}

\subsection{Tractable convex conic formulation of (\ref{prob:pep})}\label{appendix:pep}

The performance estimation problem~\eqref{prob:pep} admits the following tractable formulation:
\begin{align*}
    \sup_{z \in \probset} \ell(\algo_\theta (z))
    &=
    \begin{array}[t]{ll}
        \textnormal{maximize}
        &   \ell(G, F) = \Tr (\Aobj^\tpose G) + \bobj^\tpose F \\
        \textnormal{subject to}
        &   \linear (G, F) \in \reals_+^M \\
        &   \Tr (A_0^\tpose G) + b_0^\tpose F + c_0 \le 0 \\
        &   G \in \symm_+^{K+2},\, F \in \reals^{K+1}.
    \end{array}
\end{align*}
The Lagrangian dual of the problem above writes as
\begin{align*}
    \begin{array}[t]{ll}
        \textnormal{minimize}
        &   - c_0 \tau \\
        \textnormal{subject to}
        &   - \linear^*(y) + \tau (A_0, b_0) - (\Aobj, \bobj) \in \symm_+^{K+2} \times \{0\} \\
        &   \tau \ge 0,\, y \in \reals_+^M,
    \end{array}
\end{align*}
where~$\linear^* \colon \reals^M \to \symm^{K+2} \times \reals^{K+1}$ is an adjoint mapping of~$\linear$ defined as~$\linear^* (y) = - \sum_{m=1}^M y_m (A_m, b_m)$.
For the most of the first-order methods, strong duality holds true from~\cite[Theorem~6]{taylorSmoothStronglyConvex2017}.

\subsection{Tractable form of~(\ref{prob:dro-pep}) with expected loss}\label{appendix:dro-pep-dual}
The problem~\eqref{prob:dro-pep} with expected loss admits a tractable convex conic formulation~\cite{parkDatadrivenAnalysisFirstOrder2025}:
\begin{equation}\label{prob:dro-pep-dual}
    \begin{array}{l}
        \begin{array}[t]{ll}
            \textnormal{minimize}
            &   (1/N) \sum_{i=1}^N s_i \\
            \textnormal{subject to}
            &   - c_0 \tau_i - \big\langle (X_i, Y_i),\, (\hG_i, \hF_i) \big\rangle + \lambda \varepsilon \le s_i \\
            &   - \linear^*(y_i) - (X_i, Y_i) + \tau_i (A_0, b_0) - (\Aobj, \bobj) \in \symm_+^{K+2} \times \{0\} \\
            &   \norm[*]{(X_i, Y_i)} \le \lambda,
                \hfill i=1,\dots,N,
        \end{array}
        \tag{DRO-PEP-D}
    \end{array}
\end{equation}
with optimization variables~$s_i \in \reals$, $\tau_i \in \reals_+$, $(X_i, Y_i) \in \symm^{K+2} \times \reals^{K+1}$, $y_i \in \reals_+^M$ for $i=1,\dots,N$, and~$\lambda \in \reals_+$,
where~$\linear^*$ is an adjoint mapping of~$\linear$ and~$\|\cdot\|_{*}$ is a dual norm of~$\|\cdot\|$ over~$\symm^{K+2} \times \reals^{K+1}$.

\subsection{Proof of Proposition~\ref{prop:strong-duality}}
\label{appendix:dro-pep-strong-duality}

We start the proof with the strong duality result between (DRO-PEP) and its dual (DRO-PEP-D).
Our proof reformulates that of \cite[Corollary~5.1]{mohajerinesfahaniDatadrivenDistributionallyRobust2018}.

\begin{lemma}\label{lem:dro-d-strong-duality}
    Problem~\eqref{prob:dro-pep-dual} is a dual of~\eqref{prob:dro-pep} and strong duality holds whenever $\varepsilon > 0$.
\end{lemma}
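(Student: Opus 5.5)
I will follow the standard Wasserstein DRO duality argument of \cite[Theorem~1 and Corollary~5.1]{mohajerinesfahaniDatadrivenDistributionallyRobust2018}, applied in the lifted space over $\suppset^\theta$. There are two stages. First, the Wasserstein-ball supremum is reduced to a semi-infinite minimization over a single multiplier $\lambda\ge0$. Second, each inner supremum over $\suppset^\theta$ is replaced by its conic (SDP) dual.

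\textbf{Stage 1: Wasserstein duality.} Write the ambiguity set through couplings $\pi$ of $\hprob_N^\theta$ and $\probQ^\theta$. By the law of total probability, $\probQ^\theta = (1/N)\sum_i \probQ_i$, where $\probQ_i$ is the conditional distribution given $(\hG_i,\hF_i)$. Then \eqref{prob:dro-pep} becomes
\begin{equation*}
\sup_{\probQ_i}\Big\{ \tfrac1N\sum_i \int \ell \,d\probQ_i \;:\; \tfrac1N\sum_i \int \|\xi-\hat\xi_i\|\,d\probQ_i\le\varepsilon,\ \supp\probQ_i\subseteq\suppset^\theta\Big\}.
\end{equation*}
Dualizing the single transport constraint with $\lambda\ge0$ gives the upper bound
\begin{equation*}
\inf_{\lambda\ge0}\ \lambda\varepsilon+\tfrac1N\sum_i\sup_{\xi\in\suppset^\theta}\big(\ell(\xi)-\lambda\|\xi-\hat\xi_i\|\big).
\end{equation*}
Equality holds because the problem is linear in the measures $(\probQ_i)$ with a single constraint. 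Since $\varepsilon>0$, the Dirac choice $\probQ_i=\delta_{\hat\xi_i}$ is strictly feasible, so Slater's condition holds and the infinite-dimensional LP duality of \cite{mohajerinesfahaniDatadrivenDistributionallyRobust2018} (Shapiro's conic duality for moment problems) applies. This is exactly where $\varepsilon>0$ is needed. If $\suppset^\theta$ makes the inner supremum $+\infty$ for every $\lambda$, both sides equal $+\infty$ and the claim is trivial.

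\textbf{Stage 2: inner SDP duality.} Use $-\lambda\|\xi-\hat\xi_i\| = \min_{\|(X_i,Y_i)\|_*\le\lambda}\langle (X_i,Y_i),\xi-\hat\xi_i\rangle$. The inner objective $\ell(\xi)+\langle(X_i,Y_i),\xi\rangle$ is linear in $\xi$ and the dual ball is compact and convex, so Sion's minimax theorem lets me swap the sup over $\xi$ with the min over $(X_i,Y_i)$. It then remains to evaluate
\begin{equation*}
\sup_{(G,F)\in\suppset^\theta}\ \langle(\Aobj,\bobj)+(X_i,Y_i),(G,F)\rangle.
\end{equation*}
This is a PEP with a perturbed objective, and by \cref{appendix:pep} its Lagrangian dual is
\begin{equation*}
\min\ -c_0\tau_i \quad\text{s.t.}\quad -\linear^*(y_i)+\tau_i(A_0,b_0)-(\Aobj,\bobj)-(X_i,Y_i)\in\symm_+^{K+2}\times\{0\},
\end{equation*}
with $\tau_i\ge0$ and $y_i\ge0$. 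Substituting this and introducing epigraph variables $s_i$ yields \eqref{prob:dro-pep-dual} line by line.

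\textbf{Main obstacle.} The hardest step is guaranteeing zero duality gap in the inner SDP for arbitrary perturbations $(X_i,Y_i)$, since \cite[Theorem~6]{taylorSmoothStronglyConvex2017} is stated for the unperturbed objective. Note that only the objective changes, not the feasible set. I would therefore first use the fact that the Slater point for $\suppset^\theta$ used in Taylor et al. (a strictly interpolating quadratic instance giving $G\succ0$ and strict inequalities) does not depend on the objective, so conic strong duality holds for every linear objective. If a Slater point is not available in general, I would instead use weak duality for the upper bound. For the lower bound I would use the fact that $\varepsilon>0$ keeps the outer problem strictly feasible, and pass to closures, following the argument of \cite[Corollary~5.1]{mohajerinesfahaniDatadrivenDistributionallyRobust2018}.
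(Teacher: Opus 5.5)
Your proposal is correct and follows the same route as the paper. The paper simply cites \cite[Corollary~5.1]{mohajerinesfahaniDatadrivenDistributionallyRobust2018} for what you rederive in Stage~1 and in the Sion swap, and observes that the only extra ingredient is a zero duality gap for the support-function SDP $\sup\{\langle \nu, \xi\rangle : \xi \in \suppset^\theta\}$ at arbitrary $\nu$, which it takes from \cite[Theorem~6]{taylorSmoothStronglyConvex2017} exactly as you do. You should, however, state explicitly the hypothesis the paper attaches when invoking that theorem, namely that $\theta$ is such that each update $x^{k+1}$ genuinely uses $g^k$, since this is what makes the objective-independent strictly feasible instance with $G \succ 0$ available; with that in place, your closing fallback is unnecessary.
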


\begin{proof}
    The statement of the lemma is exactly the same as that of \cite[Corollary~5.1]{mohajerinesfahaniDatadrivenDistributionallyRobust2018}, except for showing the strong duality result for the following problem:
    \begin{equation*}
        \sigma_{\suppset^\theta} (\nu)
        =
        \begin{array}[t]{ll}
            \sup & \inner{\nu}{\xi} \\
            \textnormal{s.t.} & \xi \in \suppset^\theta.
        \end{array}
    \end{equation*}
    According to the strong duality result of \cite[Theorem~6]{taylorSmoothStronglyConvex2017} where~$\theta \in \Theta$ satisfies the assumption that such a step size only represents algorithms that uses information of $g^k$ for $x^{k+1}$-update,
    the problem above has zero duality gap and the evaluation is exact with its (feasible) dual problem formulation.
\end{proof}

Next, we show that the bi-dual of (DRO-PEP), which is a dual form for (DRO-PEP-D), is exactly the form of~\eqref{prob:dro-pep-primal} and strong duality holds.

\begin{lemma}\label{lem:dro-p-d-strong-duality}
    Problem~\eqref{prob:dro-pep-primal} is a dual of~\eqref{prob:dro-pep-dual} and strong duality holds whenever $\varepsilon > 0$.
\end{lemma}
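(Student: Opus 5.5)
The plan is to compute the Lagrangian dual of~\eqref{prob:dro-pep-dual} directly, check that it reproduces~\eqref{prob:dro-pep-primal}, and then prove zero duality gap with attainment on the primal side through a Slater-type condition for~\eqref{prob:dro-pep-dual}. This is a finite-dimensional convex conic program, so standard conic duality applies.

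\textbf{Step 1: form the Lagrangian.} Attach a multiplier $p_i \ge 0$ to each epigraph constraint $-c_0\tau_i - \langle (X_i,Y_i),(\hG_i,\hF_i)\rangle + \lambda\varepsilon \le s_i$. Attach a matrix--vector multiplier $(G_i,F_i) \in \symm_+^{K+2}\times\reals^{K+1}$ to each conic constraint, and a second-order-cone multiplier $(U_i,V_i,w_i)$ to each norm constraint $\|(X_i,Y_i)\|_*\le\lambda$. Minimizing over the free variables gives the following.
\begin{itemize}
\item Stationarity in $s_i$ forces $p_i = 1/N$.
\item Stationarity in $(X_i,Y_i)$ links the dual-norm multipliers to $(G_i,F_i) - (\hG_i,\hF_i)$, up to the scaling $1/N$ after rescaling $(G_i,F_i) \leftarrow N(G_i,F_i)$.
\item Minimization over $\lambda \ge 0$ produces the budget constraint $(1/N)\sum_i \|(G_i,F_i)-(\hG_i,\hF_i)\| \le \varepsilon$, using that the dual of the dual norm is $\|\cdot\|$.
\item Minimization over $\tau_i\ge 0$ produces $\Tr(A_0^\tpose G_i)+b_0^\tpose F_i+c_0\le 0$.
\item Minimization over $y_i \ge 0$ produces $\linear_\theta(G_i,F_i)\in\reals_+^M$, by the definition of the adjoint $\linear^*$.
\end{itemize}
The remaining objective is $(1/N)\sum_i \Tr(\Aobj^\tpose G_i)+\bobj^\tpose F_i$. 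Together these give exactly~\eqref{prob:dro-pep-primal}, with $(G_i,F_i)\in\suppset^\theta$.

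\textbf{Step 2: strong duality and primal attainment.} The tool is the conic duality theorem: if the minimization problem~\eqref{prob:dro-pep-dual} is strictly feasible and has finite optimal value, then its dual~\eqref{prob:dro-pep-primal} attains its supremum and the two values coincide.

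Finiteness is immediate. By \cref{lem:dro-d-strong-duality}, the value of~\eqref{prob:dro-pep-dual} equals that of~\eqref{prob:dro-pep}. That value is bounded below by the empirical risk, which is nonnegative since $\ell\ge0$, because $\hprob_N^\theta$ lies in the ambiguity set. It is finite above by that same strong duality with a feasible dual point.

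For a strictly feasible point, I would argue as follows.
\begin{enumerate}
\item Take $\tau_i$ large and $y_i$ with strictly positive entries.
\item Choose $(X_i,Y_i)$ so that $-\linear^*(y_i)-(X_i,Y_i)+\tau_i(A_0,b_0)-(\Aobj,\bobj)$ has positive-definite matrix part and zero vector part. This is possible because $(X_i,Y_i)$ is free, so $(X_i,Y_i)$ simply absorbs the residual.
\item Take $\lambda > \max_i\|(X_i,Y_i)\|_*$, which makes the norm constraints strict.
\item Take $s_i$ large, which makes the epigraph constraints strict.
\end{enumerate}
Note that strict feasibility needs the relative interior only for the conic part $\symm_+\times\{0\}$. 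The equality block $\{0\}$ is affine, so the refined Slater condition for mixed affine/conic constraints applies.

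\textbf{Main obstacle.} The hard part is Step 1: handling the dual-norm constraint and the coupling variable $\lambda$ cleanly, so that the transportation budget appears exactly with the right $1/N$ scaling and $\varepsilon > 0$ enters properly. The cleanest route is to write $\|(X_i,Y_i)\|_*\le\lambda$ via its conjugate, $\sup_{\|\Delta\|\le 1}\langle (X_i,Y_i),\Delta\rangle\le\lambda$. This keeps the derivation within the standard Mohajerin Esfahani--Kuhn template of \cite[Corollary~5.1]{mohajerinesfahaniDatadrivenDistributionallyRobust2018}, and I would mirror that derivation.
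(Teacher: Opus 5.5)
Your proposal is correct and has the same structure as the paper's proof. You identify \eqref{prob:dro-pep-primal} as the conic dual of \eqref{prob:dro-pep-dual} and then use a Slater point of \eqref{prob:dro-pep-dual}; the paper simply cites \cite[Theorem~2]{parkDatadrivenAnalysisFirstOrder2025} and \cite[Lemma~4]{parkDatadrivenAnalysisFirstOrder2025} for these two steps, whereas you carry out the Lagrangian computation and construct the strictly feasible point (the free $(X_i,Y_i)$ absorbing the residual) explicitly.

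One small simplification: \eqref{prob:dro-pep-dual} is bounded below directly by weak duality, since $(G_i,F_i)=(\hG_i,\hF_i)$ is feasible in \eqref{prob:dro-pep-primal}, so you need not route through \cref{lem:dro-d-strong-duality}.
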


\begin{proof}
    The fact that \eqref{prob:dro-pep-primal} is a dual of \eqref{prob:dro-pep-dual} comes from \cite[Theorem~2]{parkDatadrivenAnalysisFirstOrder2025}.
    The strong duality result comes from the fact that \eqref{prob:dro-pep-dual} admits a slater point for any $\varepsilon > 0$, which is exactly the result of~\cite[Lemma~4]{parkDatadrivenAnalysisFirstOrder2025}.
\end{proof}

\begin{proof}[Proof of \cref{prop:strong-duality}]
    Combining~\cref{lem:dro-d-strong-duality} and~\cref{lem:dro-p-d-strong-duality} gives the desired result.
\end{proof}

\subsection{Extended dual formulation of (DRO-PEP) with linear operator interpolations}
\label{appendix:extended_dual_dro_pep}

Consider the case where the support set~$\suppset$ is defined with additional positive semidefinite constraint as follows.
\begin{equation*}
    \suppset = \left\{
        (G, F) \in \symm^{K+2}_+ \times \reals^{K+1}
        \left|
        \begin{array}{l}
            \Tr(A_0^\tpose G) + b_0^\tpose F + c_0 \le 0 \\
            \Tr(A_m^\tpose G) + b_m^\tpose F \le 0,\;
                m=1,\dots,M, \\
            H \in \symm_+^{K+1}
                \text{ where }
                (H)_{kl} = \Tr (C_{kl}^\tpose G) + d_{kl}^\tpose F
        \end{array}
        \right.
    \right\}.
\end{equation*}
The class of linear operators with bounded spectral norm and convex quadratic functions are such cases, as the following example illustrates.

    

\begin{example}[Interpolation condition of linear operators]
\label{eq:linear_supp}
    We state~\cite[Theorem~3.1]{bousselmiInterpolationConditionsLinear2024}, which provides with interpolation condition for linear operators, along with convex quadratic function as its special case.
    Let~$X \in \reals^{n \times N_1}$, $Y \in \reals^{m \times N_1}$, $U \in \reals^{m \times N_2}$, $V \in \reals^{n \times N_2}$ and $L \ge 0$.
    Then there exists~$M \in \reals^{n \times m}$ with $\sigma_{\max} (M) \le L$ such that
    \begin{equation*}
        V = M X,
        \qquad
        U = M^\tpose Y,
    \end{equation*}
    if and only if
    \begin{equation*}
        X^\tpose V = Y^\tpose U,
        \qquad
            Y^\tpose Y \preceq L^2 X^\tpose X,
        \qquad
            V^\tpose V \preceq L^2 U^\tpose U.
    \end{equation*}
    Note that the first condition can be written as $\Tr(A_m^\tpose G) + b_m^\tpose F \le 0$ and $- \Tr(A_m^\tpose G) - b_m^\tpose F \le 0$ by writing each entries of the matrix~$X^\tpose P$ (or~$P^\tpose X$) as the left-hand side of these inequalities.
    For the second constraint, let~$H = L^2 X^\tpose X - Y^\tpose Y$ and $H_2 = L^2 U^T U - V^T V$.
    Then $H \succeq 0$ and each entries of $H$ are linear combinations of bilinear terms of $(X, Y)$.
    Therefore, it is of the form
    \begin{equation*}
        (H)_{kl} = \Tr(C_{kl}^\tpose G) + d_{kl}^\tpose F,
        \qquad 1 \le k,l \le K+1,
    \end{equation*}
    where~$C_{kl} \in \symm^{K+2}$ and~$d_{kl} \in \reals^{K+1}$.
    We can do similarly for the last constraint.

    The conditions above can be specialized to symmetric positive semidefinite matrix~$Q$ such that that~$\mu I \preceq Q \preceq L I$,
    in a sense that $g^k = Q x^k$ and $f^k = (1/2) (x^k)^\tpose Q x^k$ for $k=1,\dots,K$ if any only if,
    for $X = (x^0, \dots, x^K) \in \reals^{d \times (K+1)}$, $P = (g^0, \dots, g^K) \in \reals^{d \times (K+1)}$, and $F = (f^0, \dots, f^K) \in \reals^{K+1}$,
    $(X, P, F)$ is an element of following set:
    \begin{equation}\label{eq:quad_supp}
        \mathcal{Q}_{\mu, L} = \left\{
            (X, P, F) \in \reals^{d \times (K+1)} \times \reals^{d \times (K+1)} \times \reals^{K+1}
            \left|
                \begin{array}{ll}
                    X^\tpose P = P^\tpose X \\
                    (P - \mu X)^\tpose (LX - P) \succeq 0 \\
                    f_i = (1/2) x_i^\tpose g_i, \qquad i=0,\dots,K
                \end{array}
            \right.
        \right\}.
    \end{equation}
\end{example}

\begin{lemma}
    Consider the class of convex quadratic functions of~\eqref{eq:quad_supp} as our choice of~$\fclass$.
    The worst-case expected loss of the form
    \begin{equation*}
        \begin{array}{ll}
            \textnormal{maximize}
            &   \underset{z \sim \probQ}{\Expect} \big( \ell(\algo_\theta(z)) \\
            \textnormal{subject to}
            &   \probQ \in \ambiset_{\varepsilon} (\hprob_N), 
            \quad
                \mathrm{supp}\, \probQ = \suppset,
        \end{array}
    \end{equation*}
    is the optimal value of the problem
    \begin{equation*}
        \begin{array}{ll}
            \textnormal{minimize} & (1/N) \sum_{i=1}^N s_i \\
            \textnormal{subject to}
            &   - c_0 \tau_i
                - \Tr(X_i^\tpose \hG_i)
                - Y_i^\tpose \hF_i + \lambda \varepsilon \le s_i \\
            &   \tau_i A_0
                + \sum_{m=1}^M (y_i)_m A_m
                - \sum_{k,l=1}^{K+1} (\widetilde{H}_i)_{kl} C_{kl}
                - \Aobj - X_i
                \in \symm_+^{K+2} \\
            &   \tau_i b_0
                + \sum_{m=1}^M (y_i)_m b_m
                - \sum_{k,l=1}^{K+1} (\widetilde{H}_i)_{kl} d_{kl}
                - \bobj - Y_i
                = 0 \\
            &   \norm{(X_i, Y_i)} \le \lambda,
                \hfill i=1,\dots,N,
        \end{array}
    \end{equation*}
    with variables~$s_i \in \reals$, $\tau_i \in \reals_+$, $y_i \in \reals_+^M$, $X_i \in \symm^{K+2}$, $Y_i \in \reals^{K+1}$, $\widetilde{H}_i \in \symm_+^{K+1}$, and~$\lambda \in \reals_+$.
\end{lemma}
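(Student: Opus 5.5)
The proof follows the template of \cref{lem:dro-d-strong-duality}, i.e., the Wasserstein DRO duality of \cite[Corollary~5.1]{mohajerinesfahaniDatadrivenDistributionallyRobust2018}. The only new ingredient is the extra semidefinite constraint $H \succeq 0$ in $\suppset$.

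First, apply the standard Wasserstein duality for a loss that is linear in $(G,F)$ over the closed convex support $\suppset$. It gives
\begin{equation*}
\sup_{\probQ \in \ambiset_\varepsilon(\hprob_N)} \Expect_{\probQ}\,\ell = \inf_{\lambda \ge 0} \Big\{ \lambda\varepsilon + \tfrac1N \textstyle\sum_{i=1}^N \sup_{\xi \in \suppset} \big[ \ell(\xi) - \lambda\|\xi - \hat\xi_i\| \big] \Big\}.
\end{equation*}
Writing $\lambda\|\xi-\hat\xi_i\| = \sup_{\|(X_i,Y_i)\|_* \le \lambda} \langle (X_i,Y_i), \xi - \hat\xi_i\rangle$ and exchanging sup and inf (compact dual ball, concave-convex in $\xi$ and $(X_i,Y_i)$) turns each inner problem into $\inf_{\|(X_i,Y_i)\|_*\le\lambda} \big[ \langle (X_i,Y_i),\hat\xi_i\rangle + \sigma_{\suppset}\big((\Aobj,\bobj) - (X_i,Y_i)\big) \big]$. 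Here the paper's sign convention writes $-\langle (X_i,Y_i),\hat\xi_i\rangle$; only the sign of $(X_i,Y_i)$ is flipped. Because the Frobenius/Euclidean norm is self-dual, $\|\cdot\|_*=\|\cdot\|$, which matches the stated constraint.

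Second, compute the support function $\sigma_{\suppset}(\nu)$ by conic Lagrangian duality for the primal SDP
\begin{equation*}
\max\ \langle \nu,(G,F)\rangle \quad \text{over } G\succeq0,\ \tau\text{-, }y_m\text{-, and }\widetilde H\text{-constraints.}
\end{equation*}
The multipliers are $\tau\ge0$ for the initial condition, $y_m\ge0$ for the interpolation inequalities, and $\widetilde H \in \symm_+^{K+1}$ for the constraint $H\succeq 0$ with $H_{kl} = \Tr(C_{kl}^\tpose G)+d_{kl}^\tpose F$.

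The Lagrangian is
\begin{equation*}
\langle\nu,(G,F)\rangle - \tau\big(\Tr(A_0^\tpose G)+b_0^\tpose F+c_0\big) - \textstyle\sum_m y_m\big(\Tr(A_m^\tpose G)+b_m^\tpose F\big) + \textstyle\sum_{kl}\widetilde H_{kl}\big(\Tr(C_{kl}^\tpose G)+d_{kl}^\tpose F\big).
\end{equation*}
Taking the supremum over $G\succeq0$ forces the $G$-coefficient to be $\preceq 0$, and taking it over free $F$ forces the $F$-coefficient to vanish. The dual value is then $-c_0\tau$.

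Substituting $\nu = (\Aobj,\bobj)-(X_i,Y_i)$ gives exactly the LMI and the equality constraints in the statement. The epigraph variable $s_i$ absorbs $-c_0\tau_i - \langle (X_i,Y_i),\hG_i,\hF_i\rangle + \lambda\varepsilon$, after rewriting $\lambda\varepsilon = \frac1N\sum_i\lambda\varepsilon$. Minimizing jointly over $\lambda$ and all per-sample multipliers yields the stated program.

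The main obstacle is justifying zero duality gap for $\sigma_{\suppset}$ now that a genuine SDP constraint $H\succeq0$ is present. The strong duality result of \cite[Theorem~6]{taylorSmoothStronglyConvex2017} used in \cref{lem:dro-d-strong-duality} does not cover it directly. My plan is to exhibit a Slater point of the primal lifted set. I would take a generic quadratic instance with $\mu I \prec Q \prec L I$ and enough dimension $d$ that the resulting Gram matrix is positive definite, $(P-\mu X)^\tpose(LX-P) \succ 0$, and the initial condition holds strictly. The equality constraints $X^\tpose P=P^\tpose X$ are affine and need no strict feasibility.

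If the primal value is $+\infty$ for some $\nu$, the dual is infeasible for that $\nu$, consistent with the inf. Otherwise Slater's condition gives dual attainment and equality, and plugging into the first step completes the argument.
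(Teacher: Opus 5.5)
Your proposal is correct and follows essentially the same route as the paper. Each sample term is reduced to the support function $\sigma_{\suppset}$ evaluated at $(\Aobj,\bobj)$ shifted by $(X_i,Y_i)$: the paper does this through the indicator $(-\ell)^*$ in the conjugate/support-function form from the proof of Park et al.'s Theorem~2, and your norm-duality plus Sion step is equivalent. Then $\sigma_{\suppset}$ is dualized with multipliers $\tau_i, y_i, \widetilde H_i$, and the duality gap is closed with a Slater point built from a quadratic whose spectrum lies strictly inside $(\mu, L)$. Your remark that the affine symmetry and function-value constraints need no strict feasibility is more careful than the paper, though both arguments tacitly require nondegenerate step sizes for a full-rank Gram matrix to exist.
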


\begin{proof}
    The worst-case expected loss given can be written as
    \begin{equation*}
        \begin{array}[t]{ll}
            \underset{\probQ \in \ambiset_{\varepsilon} (\hprob_N)}{\sup}
            &   \underset{z \sim \probQ}{\Expect} 
                \big( \ell (\algo_\theta(z)) \big)
        \end{array}
        =
        \begin{array}[t]{ll}
            \underset{\probQ^\theta \in \ambiset_{\varepsilon} (\hprob_N^\theta)}{\sup} & \underset{(G, F) \sim \probQ^\theta}{\Expect} \big( \ell (G, F) \big).
        \end{array}
    \end{equation*}
    Circling back to the proof of~\cite[Theorem~2]{parkDatadrivenAnalysisFirstOrder2025},
    this is equivalent to
    \begin{align*}
        \begin{array}{ll}
            \textnormal{minimize} & (1/N) \sum_{i=1}^N s_i \\
            \textnormal{subject to}
            &   (- \ell)^* \big( (X_i, Y_i) - (U_i, V_i) \big)
                + \sigma_{\suppset} (U_i, V_i)
                - \Tr(X_i^\tpose \hG_i) - Y_i^\tpose \hF_i
                + \lambda \varepsilon \le s_i \\
            &   \norm{(X_i, Y_i)} \le \lambda,
            \hfill i=1,\dots,N,
        \end{array}
    \end{align*}
    with variables~$s_i \in \reals$, $\lambda \in \reals_+$, $X_i, U_i \in \symm^{K+2}$, and~$Y_i, V_i \in \reals^{K+1}$ for $i=1,\dots,N$.
    First of all,
    \begin{align*}
        (- \ell)^* (X, Y)
        &=
            \sup_{(G, F) \in \symm^{K+2} \times \reals^{K+1}} \left(
                \Tr (X^\tpose G) + Y^\tpose F
                + \ell (G, F)
            \right)
        \\&=
            \sup_{(G, F) \in \symm^{K+2} \times \reals^{K+1}} \left(
                \Tr \big( (X + \Aobj)^\tpose G \big)
                + (Y + \bobj)^\tpose F
            \right)
        \\&=
            \begin{cases}
                0 & \text{if } (X, Y) + (\Aobj, \bobj) = 0 \\
                \infty & \text{otherwise}.
            \end{cases}
    \end{align*}
    Furthermore, the support function~$\sigma_\suppset$ of $\suppset$ is
    \begin{align*}
        \sigma_{\suppset}(U, V)
        &=
        \begin{array}[t]{ll}
            \sup & \Tr(U^\tpose G) + V^\tpose F \\
            \mathrm{s.t.}
            &   (G, F) \in \suppset
        \end{array}
        \\&=
        \begin{array}[t]{ll}
            \sup & \Tr(U^\tpose G) + V^\tpose F \\
            \mathrm{s.t.}
            &   \Tr(A_0^\tpose G) + b_0^\tpose F + c_0 \le 0 \\
            &   \Tr(A_m^\tpose G) + b_m^\tpose F \le 0,
                \quad m=1,\dots,M \\
            &   H \succeq 0, \quad (H)_{kl} = \Tr(C_{kl}^\tpose G) + d_{kl}^\tpose F \\
            &   (G, F) \in \symm_+^{K+2} \times \reals^{K+1}
        \end{array}
        \\&\le
        \begin{array}[t]{ll}
            \inf
            &   - c_0 \tau \\
            \mathrm{s.t.}
            &   \tau A_0
                + \sum_{m=1}^M y_m A_m
                - \sum_{k,l=1}^{K+1} C_{kl} \widetilde{H}_{kl}
                - U \in \symm_+^{K+2} \\
            &   \tau b_0
                + \sum_{m=1}^M y_m b_m
                - \sum_{k,l=1}^{K+1} d_{kl} \widetilde{H}_{kl}
                - V = 0,
        \end{array}
    \end{align*}
    with variables~$\tau \in \reals_+$, $y \in \reals_+^M$, and~$\widetilde{H} \in \symm_+^{K+1}$.
    
    According to the example in~\cite[Theorem~6]{taylorSmoothStronglyConvex2017},
    there exists a full-rank $G$ and $F$ such that the Slater's condition holds for PEP with class of~$L$-smooth~$\mu$-strongly convex functions.
    Using the~$G \succ 0$ and~$F$ build for~$(\mu+\varepsilon)$-strongly convex~$(L-\varepsilon)$-smooth functions for some~$\varepsilon > 0$ such that~$\mu + \varepsilon \le L - \varepsilon$, this is actually~$G \succ 0$ and the positive semidefinite constraint of~\eqref{eq:quad_supp} strictly holds.
    According to the Slater's condition, the inequality above is actually an equality.
    
    Combining the results above, the worst-case expected loss is the optimal value of the problem
    \begin{equation*}
        \begin{array}{ll}
            \textnormal{minimize} & (1/N) \sum_{i=1}^N s_i \\
            \textnormal{subject to}
            &   - c_0 \tau_i
                - \Tr(X_i^\tpose \hG_i)
                - Y_i^\tpose \hF_i + \lambda \varepsilon \le s_i \\
            &   \tau_i A_0
                + \sum_{m=1}^M (y_i)_m A_m
                - \sum_{k,l=1}^{K+1} (\widetilde{H}_i)_{kl} C_{kl}
                - \Aobj - X_i
                \in \symm_+^{K+2} \\
            &   \tau_i b_0
                + \sum_{m=1}^M (y_i)_m b_m
                - \sum_{k,l=1}^{K+1} (\widetilde{H}_i)_{kl} d_{kl}
                - \bobj - Y_i
                = 0 \\
            &   \norm{(X_i, Y_i)} \le \lambda,
                \hfill i=1,\dots,N,
        \end{array}
    \end{equation*}
    with variables~$s_i \in \reals$, $\tau_i \in \reals_+$, $y_i \in \reals_+^M$, $X_i \in \symm^{K+2}$, $Y_i \in \reals^{K+1}$, $\widetilde{H}_i \in \symm_+^{K+1}$, and~$\lambda \in \reals_+$
    and the strong duality holds.
\end{proof}

\newpage
\section{Proofs for Section~\ref{sec:theory}}
\label{appendix:theory}

\subsection{Proof of Theorem~\ref{thm:finite_sample}}
\label{appendix:finite_sample}

First of all, we show the Lipschitz property of algorithm mapping $\theta \mapsto \algo_\theta$.

\begin{lemma}\label{lem:algo-map-lipschitz}
    Let~$K$ and~$L > 0$ be fixed.
    Suppose~\cref{assumption:compact-theta} holds,
    the algorithm~$\algo_\theta$ involves either gradient step of~$L$-smooth function~$f$:
    \begin{equation*}
        x^{k+1} = x^k + \sum_{i=0}^k \theta_{k,i} \nabla f(x^i),
    \end{equation*}
    or proximal step of convex, proper, and lower semi-continuous~$g$:
    \begin{equation*}
        x^{k+1} = \prox_{\theta g} (x^k),
    \end{equation*}
    and~$\nabla f(x^\star)$ and $x^\star - \prox_{\eta g} x^\star$ are uniformly bounded over choices of~$f$ and~$g$.
    Then the mapping $\theta \mapsto \algo_\theta$ is Lipschitz, \ie, there exists $\bar{L} > 0$ such that
    \begin{equation*}
        \norm{
            \algo_{\theta_1} (z)
            - \algo_{\theta_2} (z)
        }
        \le
            \bar{L} \norm{\theta_1 - \theta_2},
    \end{equation*}
    for all $\theta_1, \theta_2 \in \Theta$ and $z \in \probset$.
\end{lemma}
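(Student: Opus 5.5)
We will prove the Lipschitz bound by induction on the iteration index $k$, bounding both $\|x^k_{\theta_1} - x^k_{\theta_2}\|$ and $\|x^k_{\theta}\|$ (or rather $\|x^k_\theta - x^\star\|$) uniformly over $\theta \in \Theta$ and $z \in \probset$. Since $\algo_\theta(z)$ for fixed $K$ is identified with the finite collection of iterates $(x^0,\dots,x^K)$, and subgradients determined by them, a bound on each iterate difference together with a bound on the differences of the gradients $g^k$ gives the claim with $\bar L$ a finite combination of per-step constants. Compactness of $\Theta$ (\cref{assumption:compact-theta}) gives $\|\theta\|\le D$ for all $\theta\in\Theta$, and we take $\xclass$ bounded so that $\|x^0 - x^\star\| \le R$ uniformly.

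The first step is a uniform boundedness estimate. For the gradient step, write $\nabla f(x^i) = (\nabla f(x^i) - \nabla f(x^\star)) + \nabla f(x^\star)$. Then $L$-smoothness gives $\|\nabla f(x^i)\| \le L\|x^i - x^\star\| + C$, where $C$ is the assumed uniform bound on $\|\nabla f(x^\star)\|$. This yields recursively
\[
\|x^{k+1}-x^\star\| \le \|x^k - x^\star\| + D\sum_{i\le k}\big(L\|x^i-x^\star\| + C\big),
\]
so $\|x^k - x^\star\| \le B_k$ with $B_k$ independent of $\theta$ and $z$. For the proximal step, nonexpansiveness of $\prox_{\theta g}$ gives
\[
\|x^{k+1}-x^\star\| \le \|x^k-x^\star\| + \|\prox_{\theta g}x^\star - x^\star\|,
\]
and the last term is bounded by the uniform bound assumed on $x^\star - \prox_{\eta g}x^\star$. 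Here we use that $\eta\mapsto \|\prox_{\eta g}x - x\|$ is nondecreasing, so a bound at the largest step in the compact $\Theta$ suffices.

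The second step is the difference estimate. For gradient steps,
\[
x^{k+1}_1 - x^{k+1}_2 = (x^k_1 - x^k_2) + \sum_i \big(\theta_{1,k,i} - \theta_{2,k,i}\big)\nabla f(x^i_1) + \sum_i \theta_{2,k,i}\big(\nabla f(x^i_1) - \nabla f(x^i_2)\big).
\]
The middle sum is bounded by $\|\theta_1-\theta_2\|(LB_i + C)$ and the last by $DL\|x^i_1-x^i_2\|$, so induction gives linear dependence on $\|\theta_1-\theta_2\|$.

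For proximal steps, split $\prox_{\theta_1 g}x_1 - \prox_{\theta_2 g}x_2$ into $(\prox_{\theta_1 g}x_1 - \prox_{\theta_1 g}x_2)$, which is bounded by nonexpansiveness, plus $(\prox_{\theta_1 g}x_2 - \prox_{\theta_2 g}x_2)$. For the second part we use the resolvent identity
\[
\prox_{\theta_1 g}x = \prox_{\theta_2 g}\Big(\tfrac{\theta_2}{\theta_1}x + \big(1-\tfrac{\theta_2}{\theta_1}\big)\prox_{\theta_1 g}x\Big),
\]
which gives
\[
\|\prox_{\theta_1 g}x - \prox_{\theta_2 g}x\| \le \frac{|\theta_1-\theta_2|}{\theta_1}\,\|x - \prox_{\theta_1 g}x\|.
\]
The quantity $\|x - \prox_{\theta_1 g}x\|$ is then bounded by $2\|x-x^\star\| + \|x^\star - \prox_{\theta_1 g}x^\star\|$, using the first step. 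The gradient terms $g^k$ entering $\algo_\theta(z)$ are handled the same way, since $g^k = (x^{k-1}-x^k)/\theta$ or $\nabla f(x^k)$ is $L$-Lipschitz.

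The main obstacle is the proximal step's $1/\theta_1$ factor and the implicit $1/\theta$ in $g^k$: these require step sizes bounded away from zero. I would handle this by assuming (or restricting $\Theta$ so that) the step sizes lie in a compact subset of $(0,\infty)$, i.e.\ $\theta\ge\theta_{\min}>0$. If that is not acceptable, I would instead write the proximal gradients as $g^k \in \partial g(x^k)$ and bound them via the subgradient at $x^\star$, though this route may fail without extra regularity.
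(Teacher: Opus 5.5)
Your proof has the same structure as the paper's, with one step closed differently under an extra hypothesis. The shared parts are:
\begin{itemize}
\item an induction giving uniform bounds on $\norm{x^k-x^\star}$, splitting $\nabla f(x^i)$ through $\nabla f(x^\star)$ and using nonexpansiveness of the prox;
\item a second induction on $\norm{x^k(\theta_1)-x^k(\theta_2)}$, with the same gradient-step decomposition and the same triangle split of the prox step;
\item the same resolvent estimate for the $\theta$-variation of the prox (the paper puts $\theta_2$ in the denominator).
\end{itemize}

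The paper also spells out the lifting $\norm[F]{P_1^\tpose P_1-P_2^\tpose P_2}\le\norm[F]{P_1-P_2}\,\norm[2]{P_1+P_2}$. This is where your uniform bounds $B_k$ are actually needed, because Theorem~\ref{thm:finite_sample} applies the lemma to the Gram representation, not to the raw iterates. You should add that line.

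The real difference is how the factor $1/\theta$ is controlled. You impose $\theta\ge\theta_{\min}>0$, which is not a hypothesis of the lemma. The paper handles it in two cases:
\begin{itemize}
\item if $g$ is an indicator, the prox is a projection that does not depend on $\theta$, so the $\theta$-variation term vanishes;
\item otherwise it bounds the ratio as a subgradient, $(x-\prox_{\theta g}x)/\theta\in\partial g(\prox_{\theta g}x)$, using local boundedness of $\partial g$ on a compact set containing the iterates plus a finite cover of $\Theta$.
\end{itemize}
This is essentially your fallback, with the bound coming from a compact set rather than from $x^\star$.

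Each route buys something different.
\begin{itemize}
\item The paper's route allows step sizes down to $0$. However, it tacitly needs $\partial g$ bounded where the prox outputs land. Local boundedness holds on $\mathrm{int}\,\mathrm{dom}\,g$, not up to its boundary, so this is fine for $g=\lambda\norm[1]{\cdot}$ but not in general. It also never controls the subgradient columns $(x^{k-1}-x^k)/\theta$, because it stacks $\prox_{\theta g}(x^i)$ in $P$ instead.
\item Your route works for every proper lower semicontinuous convex $g$. It also makes those subgradient columns Lipschitz.
\end{itemize}

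Some extra condition of one of these two kinds is unavoidable. Take $g(x)=-\sqrt{x}+x^2$ on $[0,\infty)$ and $x^0=0$. Then $\prox_{\theta g}(0)=\big(\theta/(2+4\theta)\big)^{2/3}$, which is not Lipschitz as $\theta\to0^+$. So the obstacle you flagged is real, not an artifact of your method.

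One minor point. If the assumed bound on $x^\star-\prox_{\eta g}x^\star$ holds only for a single $\eta$, reaching larger steps needs the companion fact that $\eta\mapsto\norm{x-\prox_{\eta g}x}/\eta$ is nonincreasing. The monotonicity of $\eta\mapsto\norm{x-\prox_{\eta g}x}$ that you cite is not enough on its own.
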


\begin{proof}
    Suppose that the grammian representation~$G \in \symm^{K_0}_+$ is defined as~$G = P^\tpose P$ where~$P \in \reals^{d \times K_0}$ is a horizontal stack of vectors of the forms: $x^i - x^\star$, $\nabla f(x^i)$, or~$\prox_{\theta g} (x^i)$.
    We denote by~$G = \algo_\theta(z) = G(\theta)$, $P = P(\theta)$, and~$x^i = x^i(\theta)$ to emphasize the~$\theta$-dependency of each terms.
    For the sake of generality, let~$x^\star = 0$ for all~$\theta \in \Theta$.
    Since
    \begin{equation*}
        \algo_{\theta_1} (z) - \algo_{\theta_2} (z)
        = P_1^\tpose P_1 - P_2^\tpose P_2
        = \frac{1}{2} \big(
            (P_1 - P_2)^\tpose (P_1 + P_2)
            + (P_1 + P_2)^\tpose (P_1 - P_2)
        \big),
    \end{equation*}
    we have
    \begin{align*}
        &\norm[F]{\algo_{\theta_1}(z) - \algo_{\theta_2}(z)}
        \\&\le \norm[F]{P(\theta_1) - P(\theta_2)} \norm[2]{P(\theta_1) + P(\theta_2)}
        \\&\le
            2 \norm[F]{P(\theta_1)- P(\theta_2)}
            \max_{\theta \in \{\theta_1,\theta_2\}}
            \max_{i} \Big\{
                x^i(\theta) - x^\star,\,
                \nabla f(x^i(\theta)),\,
                \prox_{\theta g} (x^i(\theta))
            \Big\}.
    \end{align*}
    It remains to show that~$\theta \mapsto P(\theta)$ is Lipschitz continuous and the vectors~$x^i(\theta) - x^\star$, $\nabla f(x^i(\theta))$, and~$\prox_{\theta g} (x^i (\theta))\}_{i=0}^K$ are uniformly bounded for~$i=0,\dots,K$.
    
    We first prove the latter claim by the following recursive statement:
    If $x^k(\theta) - x^\star$ is bounded, then~$x^{k+1}(\theta) - x^\star$ is bounded for these two cases: $x^{k+1}(\theta) = x^k(\theta) - \sum_{i=0}^{k} \theta_{k,i} \nabla f(x^k(\theta))$ or $x^{k+1}(\theta) = \prox_{\theta^k g} (x^k(\theta))$.
    Note that we impose the initial condition that~$x^0(\theta) - x^\star$ is uniformly bounded with~$\theta \in \Theta$.
    Also, from $L$-Lipschitz of~$\nabla f$, $\norm{\nabla f(x^k(\theta)) - \nabla f(x^\star)} \le L \norm{x^k(\theta) - x^\star}$ so $\nabla f(x^k(\theta))$ is bounded as well.
    For the first case, we have
    \begin{align*}
        &\norm{
            x^{k+1} (\theta)
            - x^\star
        }
        \\&=
            \norm{
                \big( x^k(\theta) - x^\star \big)
                - \sum_{i=0}^k \theta_{k,i} \left(
                    \nabla f(x^i(\theta))
                    - \nabla f(x^\star)
                \right)
                - \sum_{i=0}^k \theta_{k,i} \nabla f(x^\star)
            }
        \\&\le
            \norm{x^k(\theta) - x^\star}
            + \norm{
                \sum_{i=0}^k \theta_{k,i} \big( \nabla f(x^i(\theta)) - \nabla f(x^\star) \big)
            }
            + \left| \sum_{i=0}^k \theta_{k,i} \right|
            \norm{\nabla f(x^\star)}
        \\&\le
            \norm{x^k(\theta) - x^\star}
            + \sqrt{\sum_{i=0}^k \theta_{k,i}^2} \sqrt{
                \sum_{i=0}^k
                \norm{
                    \nabla f(x^i(\theta))
                    - \nabla f(x^\star)
                }^2
            }
            +   \left| \sum_{i=0}^k \theta_{k,i} \right|
                \norm{\nabla f(x^\star)}
        \\&\le
            \norm{x^k(\theta) - x^\star}
            +   \mathrm{diam} (\Theta)
                \sqrt{
                    \sum_{i=0}^k
                    \norm{\nabla f(x^i(\theta)) - \nabla f(x^\star)}^2
                }
            +   \left| \sum_{i=0}^k \theta_{k,i} \right|
                \norm{\nabla f(x^\star)}
        \\&\le
            \norm{x^k(\theta) - x^\star}
            +   L\, \mathrm{diam} (\Theta)
                \sqrt{
                    \sum_{i=0}^k
                    \norm{x^i(\theta) - x^\star}^2
                }
            +   \left| \sum_{i=0}^k \theta_{k,i} \right|
                \norm{\nabla f(x^\star)}.
    \end{align*}
    For the second case,
    \begin{align*}
        \norm{
            x^{k+1} (\theta)
            - x^\star
        }
        &=
            \norm{
                \big(
                    \prox_{\theta^k g} (x^k(\theta))
                    - \prox_{\theta^k g} (x^\star)
                \big)
                + \big(
                    \prox_{\theta^k g} (x^\star)
                    - x^\star
                \big)
            }
        \\&\le
            \norm{
                \prox_{\theta^k g} (x^k(\theta))
                - \prox_{\theta^k g} (x^\star)
            }
            +  \norm{
                \prox_{\theta^k g} (x^\star)
                - x^\star
            }
        \\&\le
            \norm{x^k(\theta) - x^\star}
            +  \norm{\prox_{\theta^k g} (x^\star) - x^\star},
    \end{align*}
    where the inequality comes from~$\prox_{\theta^k g}$ being a firmly-nonexpansive operator.
    Therefore, all iterates are bounded uniformly over~$\theta \in \Theta$, given a fixed~$K$.
    Since $G = P^\tpose P$ with $P$ stacking these bounded iterates and gradients, the support set $\suppset^\Theta = \bigcup_{\theta \in \Theta} \suppset^\theta$ is bounded.

    Now it remains to show the Lipschitz property of~$\theta \mapsto P(\theta)$.
    We similarly use induction to prove that,
    if~$\norm{x^k(\theta_1) - x^k(\theta_2)} \le L_k \norm{\theta_1 - \theta_2}$ for all~$\theta_1, \theta_2 \in \Theta$ and any choice of~$f$ and~$g$, then there exists~$L_{k+1}>0$ such that~$\norm{x^{k+1}(\theta_1) - x^{k+1}(\theta_2)} \le L_{k+1} \norm{\theta_1 - \theta_2}$ for all~$\theta_1, \theta_2 \in \Theta$ and any~$f$ and~$g$ as well.

    For the first case ($f$-related update), we have
    \begin{align*}
        &x^{k+1} (\theta_1) - x^{k+1} (\theta_2)
        \\&=
        x^k (\theta_1) - x^k (\theta_2)
        - \sum_{i=0}^k \Big(
            (\theta_1)_{k,i} \nabla f(x^i(\theta_1))
            - (\theta_2)_{k,i} \nabla f(x^i(\theta_2))
        \Big)
        \\&=
        x^k (\theta_1) - x^k (\theta_2)
        - \sum_{i=0}^k  \big(
            (\theta_1)_{k,i}
            - (\theta_2)_{k,i}
        \big) \nabla f(x^i(\theta_1))
        - \sum_{i=0}^k (\theta_2)_{k,i} \big(
            \nabla f(x^i(\theta_1))
            - \nabla f(x^i(\theta_2))
        \big)
        \\&=
        x^k (\theta_1) - x^k (\theta_2)
        - \sum_{i=0}^k  \big(
            (\theta_1)_{k,i}
            - (\theta_2)_{k,i}
        \big) \big(
            \nabla f(x^i(\theta_1)) - \nabla f(x^\star) 
        \big)
        \\&\quad
        + \Big(
            \sum_{i=0}^k \big( (\theta_1)_{k,i} - (\theta_2)_{k,i} \big) 
        \Big) \nabla f(x^\star)
        - \sum_{i=0}^k (\theta_2)_{k,i} \big(
            \nabla f(x^i(\theta_1))
            - \nabla f(x^i(\theta_2))
        \big)
    \end{align*}
    Therefore,
    \begin{align*}
        \norm{ x^{k+1} (\theta_1) - x^{k+1} (\theta_2) }
        &\le
            \norm{ x^k (\theta_1) - x^k (\theta_2) }
        \\&\quad+
           \left\|
                \sum_{i=0}^k
                \big( (\theta_1)_{k,i} - (\theta_2)_{k,i} \big)
                \big( \nabla f(x^i(\theta_1)) - \nabla f(x^\star) \big)
            \right\|
        \\&\quad
        + \left\|
            \sum_{i=0}^k \big(
                (\theta_1)_{k,i} - (\theta_2)_{k,i}
            \big)
            \nabla f(x^\star)
        \right\|
        \\&\quad
        +   \left\|
                \sum_{i=0}^k (\theta_2)_{k,i} 
                \big(
                    \nabla f(x^i(\theta_1))
                    - \nabla f(x^i(\theta_2))
                \big)
            \right\|
        \\&\le
            \norm{ x^k (\theta_1) - x^k (\theta_2) }
        \\&\quad
        +   \sqrt{
                \sum_{i=0}^k \big(
                    (\theta_1)_{k,i}
                    - (\theta_2)_{k,i}
                \big)^2
            } \sqrt{
                \sum_{i=0}^k \norm{\nabla f(x^i(\theta_1)) - \nabla f(x^\star)}^2
            }
        \\&\quad
            + \sqrt{
                \sum_{i=0}^k \big(
                    (\theta_1)_{k,i}
                    - (\theta_2)_{k,i}
                \big)^2
            }
            \left\| \nabla f(x^\star) \right\|
        \\&\quad
        +   \sqrt{
                \sum_{i=0}^k
                (\theta_2)_{k,i}^2
            } \sqrt{
                \sum_{i=0}^k
                \norm{
                    \nabla f(x^i(\theta_1))
                    - \nabla f(x^i(\theta_2))
                }^2
            }
        \\&\le
        \norm{ x^k (\theta_1) - x^k (\theta_2) }
        +   L\, \norm{\theta_1 - \theta_2}
            \sqrt{
                \sum_{i=0}^k \norm{x^i(\theta_1) - x^\star}^2
            }
        \\&\quad
        +   \|\theta_1 - \theta_2\|
            \|\nabla f(x^\star)\|
        +   L\, \mathrm{diam} (\Theta)
            \sqrt{
                \sum_{i=0}^k \norm{x^i(\theta_1) - x^i(\theta_2)}^2
            }.
    \end{align*}
    
    For the second case~($g$-related update), we have
    \begin{align*}
    &\norm{x^{k+1}(\theta_1) - x^{k+1}(\theta_2)}
    \\&=
        \norm{
            \prox_{(\theta_1)_k g}(x^k(\theta_1))
            - \prox_{(\theta_2)_k g}(x^k(\theta_2))
        }
    \\&=
        \norm{
            \prox_{(\theta_1)_k g}(x^k(\theta_1))
            - \prox_{(\theta_1)_k g}(x^k(\theta_2))
            + \prox_{(\theta_1)_k g}(x^k(\theta_2))
            - \prox_{(\theta_2)_k g}(x^k(\theta_2))
        }
    \\&\le
        \norm{
            \prox_{(\theta_1)_k g}(x^k(\theta_1))
            - \prox_{(\theta_1)_k g}(x^k(\theta_2))
        }
        +
        \norm{
            \prox_{(\theta_1)_k g}(x^k(\theta_2))
            - \prox_{(\theta_2)_k g}(x^k(\theta_2))
        }
    \\&\le
        \norm{x^k(\theta_1) - x^k(\theta_2)}
        + \norm{
            \prox_{(\theta_1)_k g}(x^k(\theta_2))
            - \prox_{(\theta_2)_k g}(x^k(\theta_2))
        }.
    \end{align*}
    If~$g$ is an indicator of closed convex set, therefore~$\prox_{\theta g}$ is always a projection mapping, then
    \begin{align*}
        \norm{x^{k+1} (\theta_1) - x^{k+1} (\theta_2)} \le \norm{x^k (\theta_1) - x^k (\theta_2)}.
    \end{align*}
    For other cases, we have
    \begin{equation*}
        \norm{
            \prox_{(\theta_1)_k g}(x^k(\theta_2))
            - \prox_{(\theta_2)_k g}(x^k(\theta_2))
        }
        \le |(\theta_1)_k - (\theta_2)_k| 
            \left\|
                \frac{
                    x^k (\theta_2) - \prox_{(\theta_2)_k g} (x^k (\theta_2))
                }{(\theta_2)_k}
            \right\|.
    \end{equation*}
    Note that for any~$\alpha > 0$,
    \begin{equation*}
        \frac{x - \prox_{\alpha g}(x)}{\alpha} \in \partial g \big( \prox_{\alpha g} (x) \big).
    \end{equation*}
    We have already shown that the iterates are uniformly bounded over~$\theta \in \Theta$ and the choice of~$g$.
    Therefore, the iterates lie on compact set of~$\reals^d$.
    As we can restrict the domain of~$g$ to a compact set, $\partial g( \prox_{\alpha g} (x))$ for any~$x$ within such compact set is always locally bounded~\cite[Proposition~16.17]{bauschkeConvexAnalysisMonotone2017}.
    We can choose finite covering of~$\Theta$ in terms of such neighborhoods, which translates to a finite covering of compactly-restricted domain of~$g$.
    Within such compactly-restricted domain, $\partial g$ is uniformly bounded over~$\theta \in \Theta$,
    which results in
    \begin{equation*}
        \sup_{\theta \in \Theta}
        \left\|
            \frac{
                x^k (\theta) - \prox_{\theta_k g} (x^k (\theta))
            }{\theta_k}
        \right\| \le M < \infty.
    \end{equation*}
    Therefore,
    \begin{equation*}
        \norm{x^{k+1}(\theta_1) - x^{k+1}(\theta_2)}
        \le
            \norm{x^k(\theta_1) - x^k(\theta_2)}
            + \norm{\theta_1 - \theta_2} R,
    \end{equation*}
    for some universal constant $R > 0$ over~$\theta_1, \theta_2 \in \Theta$ and $g$.
    
    We may conclude that there exists~$L_{k+1} > 0$ such that, for all~$\theta_1, \theta_2 \in \Theta$ and functions~$f$, $g$,
    \begin{equation*}
        \norm{x^{k+1} (\theta_1) - x^{k+1} (\theta_2)} \le L_{k+1} \norm{\theta_1 - \theta_2},
    \end{equation*}
    given~$\norm{x^k(\theta_1) - x^k(\theta_2)} \le L_k \norm{\theta_1 - \theta_2}$.
    This results in~$\norm[F]{P(\theta_1) - P(\theta_2)} \le \bar{L} \norm{\theta_1 - \theta_2}$ for~$\bar{L} = \sum_{k=0}^K L_k$.
\end{proof}

\begin{remark}[Assumption in~\cref{lem:algo-map-lipschitz}]
    We have seemingly nontrivial assumption on~\cref{lem:algo-map-lipschitz} that, given~$\eta > 0$, $\nabla f(x^\star)$ and~$x^\star - \prox_{\eta g} x^\star$ being uniformly bounded over choices of~$f$ and~$g$.
    This can be implied by the condition that there exist $x_f^\star \in \argmin_{x} f(x)$ and $x_g^\star \in \argmin_{x} f(x)$ such that $x_f^\star$ and $x_g^\star$ are not too far away from $x^\star$, as
    \begin{equation*}
        \norm{\nabla f(x^\star)}
        = \norm{\nabla f(x^\star) - \nabla f(x_f^\star)}
        \le L \norm{x^\star - x_f^\star},
    \end{equation*}
    and
    \begin{equation*}
        \norm{x^\star - \prox_{\eta g}(x^\star)}
        = \norm{
            (x^\star - \prox_{\eta g}(x^\star))
            - (x_g^\star - \prox_{\eta g} (x_g^\star))
        }
        \le \norm{x^\star - x_g^\star}.
    \end{equation*}
\end{remark}

We follow the measure concentration result of~\cite{fournierRateConvergenceWasserstein2015} to derive the radius~$\varepsilon$ guaranteeing that DRO loss upper-bounds true risk with probability at least~$1-\beta$, as in~\cite{parkDatadrivenAnalysisFirstOrder2025}.
Choose~$\varepsilon = \varepsilon(\beta, \theta)$ as the follows:
\begin{equation}\label{eq:finite-sample-radius}
    \varepsilon(\beta, \theta) = 
    \begin{cases}
        \left( \frac{\log(c_1 \beta^{-1})}{c_2 N} \right)^{ \left( (K+2)(K+3)/2 + (K+1) \right)^{-1} } & N \ge \frac{\log(c_1 \beta^{-1})}{c_2} \\
        \left( \frac{\log(c_1 \beta^{-1})}{c_2 N} \right)^{1/a} & N < \frac{\log(c_1 \beta^{-1})}{c_2},
    \end{cases}
\end{equation}
where~$c_1>0$ and~$c_2>0$ are some appropriately chosen constants depending only on $\theta$, due to its dependency on true distribution $\prob^\theta$, which is the pushforward measure of~$\prob$ by mapping~$\algo_\theta$.
From this, we get the following measure concentration inequality for each $\theta \in \Theta$:
\begin{lemma}\label{lem:theta-radius}
    Let $\beta \in (0, 1)$.
    For each~$\theta \in \Theta$, there exists $\varepsilon (\beta, \theta) > 0$ as in \eqref{eq:finite-sample-radius} such that
    \begin{equation*}
        \prob^N \left(
            \prob^\theta
            \in \ambiset_{\varepsilon \left( \beta, \theta \right)} \big( \hprob_N^\theta \big)
        \right) \ge 1 - \beta.
    \end{equation*}
\end{lemma}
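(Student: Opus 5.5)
The plan is to reduce the statement to the measure concentration inequality of~\cite{fournierRateConvergenceWasserstein2015}, applied to the pushforward measure $\prob^\theta$ on the lifted space $\symm^{K+2} \times \reals^{K+1}$. Fix $\theta \in \Theta$. Because $\hz_1,\dots,\hz_N$ are i.i.d.\ from $\prob$, the lifted samples $(\hG_i,\hF_i) = \algo_\theta(\hz_i)$ are i.i.d.\ from $\prob^\theta := (\algo_\theta)_\# \prob$. Their empirical measure is exactly $\hprob_N^\theta$. The ambient dimension is $q = (K+2)(K+3)/2 + (K+1)$, which matches the exponent in~\eqref{eq:finite-sample-radius}.

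Next I check the light-tail hypothesis of the concentration theorem: $\Expect_{\prob^\theta}\exp(\gamma\|\xi\|^a) < \infty$ for some $a>1$, $\gamma>0$. The proof of \cref{lem:algo-map-lipschitz} shows that all iterates and gradients are uniformly bounded. Hence $\supp\,\prob^\theta \subseteq \suppset^\Theta$, and this set is bounded. A compactly supported measure satisfies the tail condition for every $a$, with constants that can be chosen uniformly in $\theta$. Also $\supp\,\prob^\theta \subseteq \suppset^\theta$, since each sample's lifted representation is feasible for~\eqref{prob:pep}.

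The concentration theorem then states that $\prob^N\big(W_1(\prob^\theta,\hprob_N^\theta) \ge \varepsilon\big) \le c_1 \exp(-c_2 N \varepsilon^{\max(q,2)})$ for $\varepsilon \le 1$, and $\le c_1\exp(-c_2 N\varepsilon^a)$ for $\varepsilon>1$. Setting the right-hand side equal to $\beta$ and solving for $\varepsilon$ gives exactly the two cases of~\eqref{eq:finite-sample-radius}. The case split $N \gtrless \log(c_1\beta^{-1})/c_2$ corresponds to whether the resulting $\varepsilon$ is at most or greater than $1$. On the complementary event we have $W_1 \le \varepsilon(\beta,\theta)$, and together with the support inclusion this yields $\prob^\theta \in \ambiset_{\varepsilon(\beta,\theta)}(\hprob_N^\theta)$.

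The main obstacle is bookkeeping rather than substance. First, I must make sure that $q \ge 3$, so that the $\varepsilon^{q}$ regime of Fournier--Guillin applies rather than the $d=2$ logarithmic regime; this holds since $K \ge 0$ gives $q \ge 4$. Second, I must confirm that the constants $c_1, c_2$ depend only on $q$, $a$, $\gamma$ and the moment bound. These are controlled through the boundedness of $\suppset^\Theta$, which justifies saying they "depend only on $\theta$" (and in fact can be taken uniform over $\theta$, which is useful later).
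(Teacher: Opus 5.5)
Your proposal is correct and follows the same route as the paper. The paper's proof is a one-line appeal to \cite[Theorem~2]{fournierRateConvergenceWasserstein2015}, applied to the pushforward $\prob^\theta$ on the $q$-dimensional lifted space and inverted to give the two-case radius~\eqref{eq:finite-sample-radius}. Your additional checks make explicit what the paper leaves implicit: the light-tail hypothesis via boundedness of $\suppset^\Theta$, the inclusion $\supp\,\prob^\theta \subseteq \suppset^\theta$, and $q \ge 4$ so that the $\varepsilon^q$ regime applies. One caveat: the boundedness step inherits the extra hypotheses of \cref{lem:algo-map-lipschitz}, which are not part of this lemma's statement.
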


\begin{proof}
    Proof follows directly from \cite[Theorem~2]{fournierRateConvergenceWasserstein2015}.
\end{proof}

We now prove~\cref{thm:finite_sample} using the covering number argument on compact~$\Theta$ to extract $\theta$-independent radius~$\varepsilon$ of Wasserstein ambiguity set.



\begin{proof}[Proof of~\cref{thm:finite_sample}]
    For any $\delta > 0$, there exists a finite covering $C_\delta$ of compact set $\Theta$ in a sense that, $|C_\delta| < \infty$ and for any $\theta \in \Theta$, there exists $\theta_\delta \in C_\delta$ such that $\norm{\theta - \theta_\delta} \le \delta$.
    According to \cref{lem:theta-radius}, there exists $\varepsilon (\beta / |C_{\delta}|) = \sup_{\theta_j \in C_\delta} \varepsilon (\beta/|C_\delta|, \theta_j) > 0$ such that 
    \begin{equation*}
        \prob^N \left(
            \prob^{\theta_j} \in \ambiset_{\varepsilon(\beta/|C_\delta|)} \big( \hprob_N^{\theta_j} \big)
        \right) \ge 1 - \beta / |C_{\delta}|,
        \qquad \forall \theta_j \in C_{\delta}.
    \end{equation*}
    From the union bound, we have
    \begin{equation*}
        \prob^N \left(
            \prob^{\theta_j} \in \ambiset_{\varepsilon(\beta/|C_\delta|)} \big( \hprob_N^{\theta_j} \big),
            \; \forall \theta_j \in C_{\delta}
        \right) \ge 1 - \beta.
    \end{equation*}
    Now, define
    \begin{equation*}
        \overline{\varepsilon} (\beta, \delta) =
            \varepsilon (\beta / |C_\delta|)
            + 2 L \delta.
    \end{equation*}
    For any $\theta \in \Theta$, there exists $\theta_j \in C_\delta$ such that $\norm{\theta - \theta_j} \le \delta$.
    From~\cref{lem:algo-map-lipschitz}, we have
    \begin{equation*}
        W_1 \left(
            \hprob_N^\theta,\,
            \hprob_N^{\theta_j}
        \right)
            \le L \norm{\theta - \theta_j}
            \le L \delta,
    \end{equation*}
    and
    \begin{equation*}
        W_1 \left(
            \prob^\theta,\,
            \prob^{\theta_j}
        \right)
            \le L \norm{\theta - \theta_j}
            \le L \delta.
    \end{equation*}
    From triangle inequality of Wasserstein distance,
    \begin{align*}
        &W_1 \left(
            \hprob_N^\theta,\,
            \prob^\theta
        \right)
        \\&\le
            W_1 \left(
                \hprob_N^\theta,\,
                \hprob_N^{\theta_j}
            \right)
            + W_1 \left( 
                \hprob_N^{\theta_j},\,
                \prob^{\theta_j}
            \right)
            + W_1 \left(
                \prob^{\theta_j},\,
                \prob^\theta
            \right)
        \\&\le
            L\delta + \varepsilon (\beta / |C_\delta|) + L\delta
            = \overline{\varepsilon} (\beta, \delta).
    \end{align*}
    Therefore, we get
    \begin{equation*}
        \prob^N \left(
            \prob^\theta \in \ambiset_{\overline{\varepsilon} (\beta, \delta)} (\hprob_N^\theta),
            \; \forall \theta \in \Theta
        \right) \ge 1 - \beta.
    \end{equation*}
    Finally, for any $\delta > 0$, we get
    \begin{equation*}
        \prob^N \left(
            \prob^\theta \in \ambiset_{\overline{\varepsilon} (\beta)} \big( \hprob_N^\theta \big),
            \; \forall \theta \in \Theta
        \right) \ge
        \prob^N \left(
            \prob^{\theta_j}
            \in \ambiset_{\varepsilon(\beta/|C_\delta|)} \big( \hprob_N^{\theta_j} \big),
            \; \forall \theta_j \in C_{\delta}
        \right) \ge 1 - \beta,
    \end{equation*}
    for $\overline{\varepsilon} (\beta, \delta) = \varepsilon (\beta / |C_\delta|) + 2 L \delta$.

    From~\cite[Corollary~4.2.13]{vershyninHighDimensionalProbabilityIntroduction2018}, the covering number $|C_\delta|$ of $\Theta$ is upper bounded as
    \begin{equation*}
        |C_\delta| \le \left( \frac{2\, \mathrm{diam} (\Theta)}{\delta} + 1 \right)^{\mathrm{dim} (\Theta)}.
    \end{equation*}
    Then
    \begin{equation*}
        \overline{\varepsilon} (\beta, \delta)
        \geq
            \varepsilon \left( \beta ( 2\, \mathrm{diam} (\Theta) / \delta + 1 )^{-\mathrm{dim}(\Theta)} \right) + 2 L \delta,
    \end{equation*}
    so the choice of $\delta$ with tightest $\overline{\varepsilon} (\beta, \delta)$ will give the best bound on $\overline{\varepsilon}(\beta)$.
    From~\eqref{eq:finite-sample-radius}, we have
    \begin{equation*}
        \overline{\varepsilon} (\beta, \delta)
        \ge \left( \frac{\log(1/\beta) + \mathrm{dim}(\Theta) \log(2\, \mathrm{diam}(\Theta)/\delta + 1) }{N} \right)^{1/q} + 2 L \delta,
    \end{equation*}
    where~$q$ is the dimension of~$(G, F)$.
    Naively choosing~$\delta = 1/N$ yields
    \begin{equation*}
        \overline{\varepsilon} (\beta, 1/N) \lesssim
            \left( \frac{\log(1/\beta) + \mathrm{dim}(\Theta)
                \big( \log \, \mathrm{diam}(\Theta) + \log N \big)
            }{N} \right)^{1/q} + \frac{2L}{N}.
    \end{equation*}
\end{proof}

\subsection{Proof of Proposition~\ref{prop:interpolate}}\label{appendix:interpolate}

Throughout this subsection, we omit~$\theta$-dependency and write~$\prob^\theta$ and~$\probQ^\theta$ as~$\prob$ and~$\probQ$, for simplicity.
First, we state the proposition from~\cite{kuhnDistributionallyRobustOptimization2025}, which will be useful in our proofs.
\begin{proposition}[Proposition~8.5 of~\cite{kuhnDistributionallyRobustOptimization2025}]\label{prop:dro_regularization}
    Consider the $1$-Wasserstein ambiguity set
    \begin{equation*}
        \ambiset_{\varepsilon} (\hprob_N) = \left\{
            \probQ \mid W_1 (\probQ, \hprob_N) \le \varepsilon
        \right\},
    \end{equation*}
    where~$\probQ$ is a probability measure supported on a closed set~$\suppset$.
    Then, if~$\ell$ is a Lipschitz continuous function with~$\Expect_{Y \sim \hprob_N} ( |\ell(Y)| ) < \infty$,
    \begin{equation*}
        \begin{array}[t]{ll}
            \underset{\probQ \in \ambiset_{\varepsilon} (\hprob_N)}{\sup}
            &   \underset{Y \sim \probQ}{\Expect}
                \big( \ell(Y) \big)
        \end{array}
        \le
            \underset{Y \sim \hprob_N}{\Expect} \big( \ell(Y) \big)
            + \varepsilon \, \mathrm{Lip} (\ell).
    \end{equation*}
\end{proposition}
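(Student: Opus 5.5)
The plan is to bound $\Expect_{\probQ}(\ell) - \Expect_{\hprob_N}(\ell)$ for each fixed $\probQ \in \ambiset_{\varepsilon}(\hprob_N)$ using a coupling of $\probQ$ and $\hprob_N$, and then take the supremum over $\probQ$. The bound $\varepsilon\,\mathrm{Lip}(\ell)$ does not depend on $\probQ$, so the supremum inherits it directly. The only structure used is the coupling definition of $W_1$,
\begin{equation*}
    W_1(\probQ, \hprob_N) = \inf_{\pi \in \Pi(\probQ, \hprob_N)} \int \norm{y - \hat y}\, d\pi(y, \hat y),
\end{equation*}
together with the Lipschitz inequality $|\ell(y) - \ell(\hat y)| \le \mathrm{Lip}(\ell) \norm{y - \hat y}$. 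The restriction that $\probQ$ be supported on the closed set $\suppset$ plays no role: it only shrinks the feasible set, and the bound holds for every probability measure within $W_1$-distance $\varepsilon$.

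\textbf{Step 1: integrability.} Fix $\probQ$ with $W_1(\probQ, \hprob_N) \le \varepsilon$ and any $\eta > 0$. By definition of the infimum there is a coupling $\pi \in \Pi(\probQ, \hprob_N)$ with $\int \norm{y - \hat y}\, d\pi \le \varepsilon + \eta$. (Alternatively, one can take an optimal coupling, which exists because the cost is lower semicontinuous.) Under $\pi$ we have $|\ell(y)| \le |\ell(\hat y)| + \mathrm{Lip}(\ell)\norm{y-\hat y}$. Integrating this gives $\Expect_{\probQ}|\ell| \le \Expect_{\hprob_N}|\ell| + \mathrm{Lip}(\ell)(\varepsilon+\eta) < \infty$, so $\Expect_{\probQ}(\ell)$ is well defined and finite.

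\textbf{Step 2: the main estimate.} Because both marginals of $\pi$ are correct and both expectations are finite, we can write
\begin{equation*}
    \Expect_{Y\sim\probQ}\big(\ell(Y)\big) - \Expect_{Y\sim\hprob_N}\big(\ell(Y)\big)
    = \int \big(\ell(y) - \ell(\hat y)\big)\, d\pi(y,\hat y)
    \le \mathrm{Lip}(\ell) \int \norm{y-\hat y}\, d\pi
    \le \mathrm{Lip}(\ell)(\varepsilon + \eta).
\end{equation*}
Letting $\eta \to 0$ gives $\Expect_{\probQ}(\ell) \le \Expect_{\hprob_N}(\ell) + \varepsilon\,\mathrm{Lip}(\ell)$. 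Taking the supremum over $\probQ \in \ambiset_{\varepsilon}(\hprob_N)$ yields the claim.

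\textbf{Main obstacle.} The only delicate point is justifying that the difference of expectations equals the single integral against the coupling. This needs both integrals to be finite, which is why the integrability check in Step 1 must be done before the main estimate rather than after it. Everything else is a one-line Lipschitz estimate.
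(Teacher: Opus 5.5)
Your coupling argument is correct and complete. The integrability check in Step 1 licenses the identity $\Expect_{\probQ}(\ell) - \Expect_{\hprob_N}(\ell) = \int (\ell(y) - \ell(\hat y))\,d\pi$, and the Lipschitz estimate with $\eta \to 0$ then gives the bound uniformly in $\probQ$. The paper gives no proof of its own and simply cites Kuhn et al.; yours is the standard self-contained argument (the easy direction of Kantorovich--Rubinstein duality), and you are right that neither the support constraint nor closedness of $\suppset$ is needed.
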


Note that from~\cite{parkDatadrivenAnalysisFirstOrder2025}, DRO loss interpolates~\eqref{prob:opt-pep} and~\eqref{prob:l2o} in the following sense: the problem~\eqref{prob:dr-l2o} either converges to~\eqref{prob:opt-pep} when~$\varepsilon \to \infty$ or is equivalent to~\eqref{prob:l2o} when~$\varepsilon = 0$, given a fixed~$\theta \in \Theta$.
This property is directly stated in the following lemma.

\begin{lemma}\label{lem:compare_risk}
    For any algorithm parameter~$\theta \in \Theta$ and probability distribution~$\probQ$ supported on~$\suppset$,
    \begin{equation*}
        \risk (\theta, \probQ)
        \le
            \risk_{\varepsilon} (\theta, \probQ)
        \le
            \risk_{\varepsilon^+} (\theta, \probQ)
        \le
            \sup_{(G, F) \in \suppset}\, \ell (G, F),
    \end{equation*}
    for any~$\varepsilon^+ \ge \varepsilon \ge 0$.
\end{lemma}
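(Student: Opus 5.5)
The three inequalities in \cref{lem:compare_risk} will be proved separately. Each one is a direct consequence of the definition of $\risk_\varepsilon(\theta,\probQ)$ as a supremum of $\risk(\theta,\cdot)$ over the Wasserstein ball $\ambiset_\varepsilon(\probQ^\theta)$, whose members are distributions supported on $\suppset^\theta$ at $W_1$-distance at most $\varepsilon$ from the center. Throughout, I regard $\probQ$ as its lifted pushforward $\probQ^\theta$ on $\suppset^\theta$, so that $\risk(\theta,\probQ)=\Expect_{(G,F)\sim\probQ^\theta}\ell(G,F)$. This identification is the one used in the appendix.

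\textbf{First inequality.} The center $\probQ^\theta$ is itself feasible for the supremum, because $W_1(\probQ^\theta,\probQ^\theta)=0\le\varepsilon$ and its support lies in $\suppset^\theta$ by construction. Evaluating the objective at this feasible point gives $\risk(\theta,\probQ)\le\risk_\varepsilon(\theta,\probQ)$.

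\textbf{Second inequality.} If $\varepsilon\le\varepsilon^+$, then every distribution within distance $\varepsilon$ of the center is also within distance $\varepsilon^+$, so $\ambiset_\varepsilon(\probQ^\theta)\subseteq\ambiset_{\varepsilon^+}(\probQ^\theta)$. A supremum over a larger set can only be larger, which gives the middle inequality.

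\textbf{Third inequality.} Take any $\probQ'\in\ambiset_{\varepsilon^+}(\probQ^\theta)$. Its support lies in $\suppset^\theta$, so $\ell(G,F)\le\sup_{\suppset^\theta}\ell$ holds $\probQ'$-almost surely. Integrating this bound gives $\Expect_{\probQ'}\ell\le\sup_{\suppset}\ell$, and taking the supremum over $\probQ'$ finishes the chain.

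The only step requiring care is measure-theoretic well-definedness. When $\sup_{\suppset}\ell=+\infty$ the last inequality holds trivially. Otherwise $\ell$ is bounded above on the support, and since $\ell\ge 0$ all the expectations are well defined in $[0,\infty]$. I therefore expect no genuine obstacle. The one point worth flagging is that the supremum in the third bound must be taken over $\suppset^\theta$, the same support used in the ambiguity set, so that the bound matches the worst-case value of \eqref{prob:pep}.
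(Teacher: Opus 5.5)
Your proof is correct and takes essentially the same approach as the paper: the nesting $\{\probQ\}\subseteq\ambiset_{\varepsilon}(\probQ)\subseteq\ambiset_{\varepsilon^+}(\probQ)$ gives the first two inequalities, and bounding the integrand by its supremum over the support gives the third. The only difference is that the paper invokes compactness of $\suppset$ to get a maximizer, and your argument shows this is not needed for the inequality.
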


\begin{proof}
    From $\{\probQ\} \subseteq \ambiset_{\varepsilon} (\probQ) \subseteq \ambiset_{\varepsilon^+} (\probQ)$, we get
    \begin{equation*}
        \risk (\theta, \probQ)
        \le
            \risk_{\varepsilon} (\theta, \probQ) \le
            \risk_{\varepsilon^+} (\theta, \probQ).
    \end{equation*}
    If~$\suppset$ is compact, a maximizer~$(G^\star, F^\star) \in \argmax_{(G, F) \in \suppset} \ell (G, F)$ exists.
    Therefore, for every~$\varepsilon > 0$,
    \begin{equation*}
        \risk_{\varepsilon} (\theta, \probQ) = \sup_{\widetilde{\probQ} \in \ambiset_{\varepsilon} (\probQ)} \Expect_{(G, F) \sim \widetilde{\probQ}} \big( \ell (G, F) \big)
        \le \sup_{\widetilde{\probQ} \in \ambiset_{\varepsilon} (\probQ)}  \sup_{(G, F) \in \suppset} \ell (G, F)
        = \sup_{(G, F) \in \suppset} \ell (G, F).
    \end{equation*}
\end{proof}

We now prove~\cref{prop:interpolate}.

\begin{proof}[Proof of~\cref{prop:interpolate}]
    First, from~\cref{lem:compare_risk},
    \begin{equation*}
        \risk (\theta, \hprob_N)
        \le
            \risk_{\varepsilon} (\theta, \hprob_N)
        \le
            \risk_{\varepsilon^+} (\theta, \hprob_N)
        \le
            \sup_{z \in \probset}\, \ell \big( \algo_\theta(z) \big),
    \end{equation*}
    for any~$\theta \in \Theta$ and~$\varepsilon^+ \ge \varepsilon > 0$.
    Since~$\Theta$ is a compact set, each terms in inequality above has respective minimizers in~$\Theta$.
    First of all,
    \begin{equation*}
        \inf_{\theta \in \Theta} \risk (\theta, \hprob_N)
        \le
            \risk (\theta, \hprob_N)
        \le
            \risk_{\varepsilon} (\theta, \hprob_N)
        \le
            \risk_{\varepsilon^+} (\theta, \hprob_N),
        \qquad \forall \theta \in \Theta.
    \end{equation*}
    For~$\theta_{\varepsilon}^\star \in \argmin_{\theta \in \Theta} \risk_{\varepsilon} (\theta, \hprob_N)$, we have
    \begin{equation*}
        \inf_{\theta \in \Theta} \risk (\theta, \hprob_N)
        \le
            \risk (\theta_\varepsilon^\star, \hprob_N)
        \le
            \risk_\varepsilon (\theta_\varepsilon^\star, \hprob_N)
            = \inf_{\theta \in \Theta} \risk_{\varepsilon} (\theta, \hprob_N)
        \le
            \risk_{\varepsilon} (\theta, \hprob_N),
        \qquad \forall \theta \in \Theta.
    \end{equation*}
    Similarly, with respect to~$\theta_{\varepsilon^+} \in \argmin_{\theta \in \Theta} \risk_{\varepsilon^+} (\theta, \hprob_N)$,
    \begin{equation*}
        \inf_{\theta \in \Theta} \risk (\theta, \hprob_N)
        \le
            \risk_\varepsilon (\theta_\varepsilon^\star, \hprob_N)
        \le
            \risk_{\varepsilon^+} (\theta_{\varepsilon^+}^\star, \hprob_N)
        \le
            \risk_{\varepsilon^+} (\theta, \hprob_N),
        \qquad \forall \theta \in \Theta.
    \end{equation*}
    Lastly,
    \begin{equation*}
        \inf_{\theta \in \Theta} \risk (\theta, \hprob_N)
        \le
            \risk_{\varepsilon} (\theta_{\varepsilon}^\star, \hprob_N)
        \le
            \risk_{\varepsilon^+} (\theta_{\varepsilon^+}^\star, \hprob_N)
        \le
            \inf_{\theta \in \Theta} \sup_{(G, F) \in \suppset^\theta}\, \ell (G, F),
    \end{equation*}
    we observe that~$\varepsilon \mapsto \risk_{\varepsilon} (\theta^\star_{\varepsilon}, \hprob_N)$ is monotonically nondecreasing and upper-bounded by~$\inf_{\theta \in \Theta} \sup_{z \in \probset} \ell \big( \algo_\theta (z) \big)$.

    Furthermore, as~$\risk (\theta, \hprob_N) \le \risk_{\varepsilon} (\theta, \hprob_N)$ implies~$\inf_{\theta \in \Theta} \risk(\theta, \hprob_N) \le \inf_{\theta \in \Theta} \risk_{\varepsilon} (\theta, \hprob_N)$,
    the following holds true for all~$\varepsilon > 0$.
    The left inequality follows from~\cref{lem:compare_risk} (since $\risk(\theta, \hprob_N) \le \risk_{\varepsilon}(\theta, \hprob_N)$ for all~$\theta$, taking~$\inf_{\theta \in \Theta}$ on both sides gives~$\inf_\theta \risk(\theta, \hprob_N) \le \inf_\theta \risk_{\varepsilon}(\theta, \hprob_N) = \risk_{\varepsilon}(\theta_{\varepsilon}^\star, \hprob_N)$).
    The right inequality follows from~\cref{prop:dro_regularization} (applying Proposition~8.5 of~\cite{kuhnDistributionallyRobustOptimization2025} to each~$\theta$ and taking~$\inf_{\theta \in \Theta}$ on the right-hand side):
    \begin{align*}
        \inf_{\theta \in \Theta} \risk (\theta, \hprob_N)
        &\le
            \risk_{\varepsilon} (\theta_{\varepsilon}^\star, \hprob_N) \le
        \inf_{\theta \in \Theta} \risk (\theta, \hprob_N) + \varepsilon \, \mathrm{Lip} (\ell).
    \end{align*}
    Applying~$\varepsilon \to 0^+$, we get the desired result.

    Note that mapping~$(\varepsilon, \theta) \mapsto \risk_{\varepsilon} (\theta, \hprob_N)$ is (jointly) continuous in~$(\varepsilon, \theta)$.
    Also,
    \begin{equation*}
        \lim_{\varepsilon \to \infty} \risk_{\varepsilon} (\theta, \hprob_N)
        =
            \sup_{z \in \probset} \ell \big( \algo_\theta (z) \big),
    \end{equation*}
    and~$\varepsilon \mapsto \risk_{\varepsilon} (\theta, \hprob_N)$ is monotonically nondecreasing for each~$\theta \in \Theta$.
    Then according to Dini's theorem, $\risk_{\varepsilon} (\theta, \hprob_N)$ converges uniformly for~$\varepsilon \to \infty$, \ie,
    \begin{equation*}
        \lim_{\varepsilon \to \infty}\,
            \sup_{\theta \in \Theta} \left(
                \sup_{z \in \probset} \ell \big( \algo_\theta (z) \big)
                - \risk_{\varepsilon} ( \theta, \hprob_N)
            \right)
        = 0.
    \end{equation*}
    As~$\Theta$ is compact, $\sup_{\theta \in \Theta} \sup_{z \in \probset} \ell(\algo_\theta (z))$ exists and is an~$\varepsilon$-independent term.
    Therefore, uniform convergence allows interchanging the limit and the infimum:
    \begin{equation*}
        \lim_{\varepsilon \to \infty} \risk_{\varepsilon} (\theta^\star_\varepsilon, \hprob_N)
        = \lim_{\varepsilon \to \infty} \inf_{\theta \in \Theta} \risk_{\varepsilon} (\theta, \hprob_N)
        = \inf_{\theta \in \Theta} \lim_{\varepsilon \to \infty} \risk_{\varepsilon} (\theta, \hprob_N)
        = \inf_{\theta \in \Theta} \sup_{z \in \probset} \ell \big( \algo_\theta (z) \big),
    \end{equation*}
    which completes the proof.
\end{proof}

\subsection{Proof of Theorem~\ref{thm:compare_risk}}\label{appendix:compare_risk}

We now prove~\cref{thm:compare_risk}.
\begin{proof}[Proof of~\cref{thm:compare_risk}]
    According to~\cref{lem:compare_risk}, 
    \begin{equation*}
        \risk (\theta, \hprob_N)
        \le
        \risk_{\varepsilon} (\theta, \hprob_N)
        \le
        \sup_{z \in \probset}\, \ell \big( \algo_\theta (z) \big),
    \end{equation*}
    holds true for all~$\theta \in \Theta$.
    From~$\theta_{\varepsilon}^\star \in \argmin_{\theta \in \Theta} \risk_{\varepsilon} (\theta, \hprob_N)$, we get
    \begin{equation*}
        \risk_{\varepsilon} (\theta_\varepsilon^\star, \hprob_N)
        \le
            \risk_{\varepsilon} (\theta_{\text{PEP}}^\star, \hprob_N)
        \le
            \sup_{z \in \probset}\, \ell \big( \algo_{\theta^\star_{\text{PEP}}} (z) \big)
        =
            \inf_{\theta \in \Theta} \sup_{z \in \probset} \ell \big( \algo_\theta (z) \big).
    \end{equation*}
    Taking~$\inf_{\theta \in \Theta}$ consecutively from smaller to larger terms, we get
    \begin{equation*}
        \inf_{\theta \in \Theta} \risk(\theta, \hprob_N)
        \le
            \inf_{\theta \in \Theta} \risk_{\varepsilon} (\theta, \hprob_N)
        =
            \risk_{\varepsilon} (\theta_{\varepsilon}^\star, \hprob_N)
        \le
            \inf_{\theta \in \Theta} \sup_{z \in \probset} \ell \big( \algo_\theta (z) \big).
    \end{equation*}
    Applying~\cref{prop:dro_regularization} with~$Y = \algo_\theta (z)$,
    \begin{equation*}
        \risk_{\varepsilon} (\theta, \hprob_N)
        =
        \sup_{\probQ \in \ambiset_{\varepsilon} (\hprob_N)} \Expect_{z \sim \probQ} \left( \ell (\algo_{\theta} (z)) \right)
        \le
            \Expect_{z \sim \hprob_N} \big( \ell(\algo_{\theta} (z)) \big)
            + \varepsilon \, \mathrm{Lip(\ell)},
    \end{equation*}
    holds for every~$\theta \in \Theta$.
    Take~$\inf_{\theta \in \Theta}$ on the left-hand side of the inequality to have
    \begin{equation*}
        \risk_{\varepsilon} (\theta^\star_{\varepsilon}, \hprob_N)
        =
        \inf_{\theta \in \Theta} \risk_{\varepsilon} (\theta, \hprob_N)
        \le
            \Expect_{z \sim \hprob_N} \big( \ell(\algo_{\theta} (z)) \big)
            + \varepsilon \, \mathrm{Lip(\ell)},
    \end{equation*}
    and also take~$\inf_{\theta \in \Theta}$ on the right-hand side to get
    \begin{equation*}
        \risk_{\varepsilon} (\theta^\star_{\varepsilon}, \hprob_N)
        \le
            \inf_{\theta \in \Theta}
            \Expect_{z \sim \hprob_N} \big( \ell(\algo_{\theta} (z)) \big)
            + \varepsilon \, \mathrm{Lip(\ell)}.
    \end{equation*}
    We conclude the proof by upper-bounding true risk~$\risk (\theta_{\varepsilon}^\star, \prob)$ by DRO risk~$\risk_{\varepsilon} (\theta_{\varepsilon}^\star, \hprob_N)$ with probability at least~$1-\beta$, using~\cref{thm:finite_sample}.
\end{proof}

\newpage

\section{Additional details of Section~\ref{subsec:solution}}\label{appendix:solution}

\begin{remark}[Well-posedness of~\eqref{prob:dr-l2o}]\label{rem:well-posedness}
The entries of $A_m(\theta)$, $b_m(\theta)$, $\hG_i$, and $\hF_i$ are continuous in $\theta$, so the data of~\eqref{prob:dro-pep-dual} vary continuously with $\theta$.
By~\cref{lem:dro-p-d-strong-duality}, the Slater condition holds for~\eqref{prob:dro-pep-dual} for every $\varepsilon > 0$, and compactness of $\Theta$ (Assumption~\ref{assumption:compact-theta}) implies it holds uniformly over $\Theta$.
Hence $\theta \mapsto \risk_\varepsilon(\theta, \hprob_N)$ is continuous by Berge's maximum theorem~\cite[Chapter~6]{bergeTopologicalSpacesIncluding1963}, and problem~\eqref{prob:dr-l2o} attains its minimum over $\Theta$. 
\end{remark}

\cref{alg:dr-l2o} describes the solution method for the learning problem~\eqref{prob:dr-l2o}.

\begin{algorithm}[H]
    \caption{Stochastic Gradient Method for Distributionally-robust L2O (DR-L2O)}
    \label{alg:dr-l2o}
    \begin{algorithmic}
        \State {\bfseries Input:}
            training data~$\data_N$,
            parameter initialization~$\theta^0$,
            mini-batch size $n$,
            iteration counter~$k = 0$.
        \Repeat
        \State Sample mini-batch~$\data_n$ from training data~$\data_N$
        \State Define mini-batched empirical distribution~$\hprob_n$ of $\data_n$
        \State Evaluate~$d\risk_{\varepsilon} (\theta, \hprob_n)/d\theta$ using~\cref{lem:chain_rule}.
        \State Update~$\theta^{k+1} \leftarrow \verb|AdamWUpdate|(\theta^k)$.
        \State $k \leftarrow k+1$.
        \Until{$\theta^k$ converges}
    \end{algorithmic}
\end{algorithm}

In order to evaluate the gradient of~$\theta \mapsto \risk_{\varepsilon} (\theta, \hprob_n)$ given a dataset~$\data_n$ of~$n$ samples,
we use the approach of~\cite{agrawalDifferentiatingConeProgram2019,agrawalDifferentiableConvexOptimization2019,healeyDifferentiatingQuadraticCone2025} to differentiate the solution mapping of conic linear problem in terms of problem parameters.
Let~$\mathcal{T} \big( \linear, \data_n \big)$ be a solution mapping of the problem~\eqref{prob:dro-pep-dual} and let~$\mathcal{L}$ be an objective function of~\eqref{prob:dro-pep-dual}.
Then the DRO risk~$\risk_{\varepsilon} (\theta, \hprob_N)$ is composed of solution mapping and $\mathcal{L}$ as follows:
\begin{equation*}
    \risk_{\varepsilon} (\theta, \hprob_n)
    =
        \mathcal{L} \Big( \mathcal{T}  \big( \linear_\theta, \algo_\theta (\data_n) \big) \Big).
\end{equation*}
We finally apply the chain rule to obtain the gradient with respect to~$\theta$, as the following:
\begin{lemma}\label{lem:chain_rule}
    Given~$\data_n$, the gradient of~$\theta \mapsto \risk_{\varepsilon} (\theta, \hprob_n)$ evaluates as
    \begin{align*}
        &\frac{d}{d\theta} \left( \theta \mapsto \risk_{\varepsilon} (\theta, \data_n) \right)
        =
        \left. \frac{d\mathcal{L}}{dt} \right|_{t = \mathcal{T} (\linear_\theta, \algo_\theta (\data_n))} \\
        &\quad\times
        \Bigg(
            \sum_{m=0}^M 
                \left. \frac{\partial \mathcal{T}}{\partial (A_m, b_m)} \right|_{( \linear_\theta, \algo_\theta(\data_n) )}
                \frac{d (A_m(\theta), b_m(\theta))}{d\theta}
            +
            \sum_{i=1}^n
                \left. \frac{\partial \mathcal{T}}{\partial (\hG_i, \hF_i)} \right|_{ ( \linear_\theta, \algo_\theta (\data_n) ) }
            \frac{d \algo_\theta(\hz_i)}{d\theta}
        \Bigg).
    \end{align*}
\end{lemma}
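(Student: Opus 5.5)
The statement to prove is \cref{lem:chain_rule}, the gradient formula for $\theta \mapsto \risk_{\varepsilon}(\theta, \hprob_n)$. The plan is to treat $\risk_{\varepsilon}(\theta,\hprob_n)$ as a three-stage composition and apply the multivariate chain rule:
\begin{equation*}
\theta \;\mapsto\; \big(\{(A_m(\theta), b_m(\theta))\}_{m=0}^M,\ \{\algo_\theta(\hz_i)\}_{i=1}^n\big) \;\mapsto\; \mathcal{T} \;\mapsto\; \mathcal{L}.
\end{equation*}
The formula then follows once each stage is shown to be differentiable at the point of interest. The total derivative of the middle map $\mathcal{T}$ with respect to $\theta$ splits additively into partial derivatives with respect to each parameter block. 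These are the two sums in the statement: the interpolation coefficients $(A_m,b_m)$, and the lifted sample representations $(\hG_i,\hF_i)=\algo_\theta(\hz_i)$.

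\textbf{Step 1: outer and inner maps.} The objective $\mathcal{L}(s)=(1/n)\sum_i s_i$ of~\eqref{prob:dro-pep-dual} is linear, so $d\mathcal{L}/dt$ is a constant vector.

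The maps $\theta\mapsto (A_m(\theta),b_m(\theta))$ are polynomial (quadratic) in $\theta$, as noted in \cref{subsec:solution}, and hence smooth.

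The maps $\theta \mapsto \algo_\theta(\hz_i)$ are compositions of gradient steps and proximal steps with $\theta$-dependent step sizes, followed by Gram-matrix formation. For gradient steps with $C^1$ data, and for proximal steps at points where the prox is differentiable, these maps are differentiable. \cref{lem:algo-map-lipschitz} already gives Lipschitz continuity, so by Rademacher's theorem they are differentiable almost everywhere. At nondifferentiable points I would interpret $d\algo_\theta/d\theta$ as a (Clarke) generalized Jacobian, consistent with how automatic differentiation treats prox operators such as soft-thresholding.

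\textbf{Step 2: the solution map $\mathcal{T}$.} Here I would invoke the conic-program differentiation result of \cite{agrawalDifferentiatingConeProgram2019}. That result writes the primal--dual solution of a cone program as a zero of the normalized residual map built from the homogeneous self-dual embedding. The implicit function theorem then applies wherever the residual is differentiable and its derivative is nonsingular. The Slater condition for~\eqref{prob:dro-pep-dual} (\cref{lem:dro-p-d-strong-duality}) together with strong duality (\cref{prop:strong-duality}) ensures that primal and dual optimal solutions exist and that the embedding is well defined. Under the standard uniqueness and nondegeneracy (strict complementarity) assumption, $\mathcal{T}$ is differentiable in its data $(A_m,b_m,\hG_i,\hF_i)$, with partials given by the implicit-differentiation formula.

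\textbf{Step 3: assembly.} I would combine the pieces by the chain rule for a map of several block arguments, $D(\mathcal{L}\circ\mathcal{T}\circ\phi)=D\mathcal{L}\cdot\sum_j \partial_j\mathcal{T}\cdot D\phi_j$, which yields the displayed expression. As a cross-check, I would note that by an envelope (Danskin-type) argument the derivative of the optimal value equals the derivative of the Lagrangian at a primal--dual optimal pair. This gives the same gradient, and it remains valid even when $\mathcal{T}$ itself is not unique, since $\mathcal{L}\circ\mathcal{T}$ is the optimal value.

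\textbf{Main obstacle.} I expect the hardest part to be justifying differentiability of $\mathcal{T}$. Solutions of SDPs such as~\eqref{prob:dro-pep-dual} need not be unique, and strict complementarity can fail. I would first try to state the result at points where the derivative of the residual map in \cite{agrawalDifferentiatingConeProgram2019} is invertible. Failing that, I would fall back on the value-function (envelope) argument, which only needs compactness of the optimal dual set (from Slater) to give directional differentiability, and then read the formula as producing a valid (sub)gradient.
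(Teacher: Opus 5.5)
Your proposal is correct and follows the same route as the paper, whose proof is just the chain rule applied to $\risk_{\varepsilon}(\theta,\hprob_n)=\mathcal{L}(\mathcal{T}(\linear_\theta,\algo_\theta(\data_n)))$, with the data $(A_m,b_m)$ and $(\hG_i,\hF_i)$ of~\eqref{prob:dro-pep-dual} treated as parameters. The paper takes differentiability of $\mathcal{T}$ and of $\theta\mapsto\algo_\theta(\hz_i)$ for granted, so your extra hypotheses add rigor rather than change the argument: uniqueness and strict complementarity in Step 2, and the differentiability caveats in Step 1 (for gradient steps these need $\nabla f$ itself to be $C^1$, not merely $f$).
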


\begin{proof}    
    We interpret the risk~$\risk_{\varepsilon} (\theta, \hprob_n)$ as the parametric conic program~\eqref{prob:dro-pep-dual} with parameters~$(\hG_i, \hF_i)$ for~$i=1,\dots,n$ and~$(A_m, b_m)$ for~$m=0,1,\dots,M$.
    Using the chain rule on
    \begin{align*}
        \risk_{\varepsilon} (\theta, \hprob_n)
        =
            \mathcal{L} \Big( \mathcal{T}  \big( \linear_\theta, \algo_\theta (\data_n) \big) \Big),
    \end{align*}
    gives the desired result.
\end{proof}

\paragraph{Training procedure}
For each experiment, we fix the algorithm structure and learn the step-size schedule $\theta = \{\theta^k\}_{k=0}^{K-1}$ for each horizon $K$ and each of~\eqref{prob:dr-l2o},~\eqref{prob:opt-pep}, and~\eqref{prob:l2o}.
For~\eqref{prob:dr-l2o} and~\eqref{prob:l2o}, we use minibatched stochastic gradient descent with the AdamW optimizer~\cite{kingmaAdamMethodStochastic2017,loshchilovDecoupledWeightDecay2018}, with weight decay cross-validated over $\{0, 10^{-5}, 10^{-4}, 10^{-3}\}$, combined with a linear warm-up followed by a cosine annealing learning rate schedule~\cite{loshchilovSGDRStochasticGradient2017}.
For~\eqref{prob:opt-pep}, we follow the approach of~\cite{kamriNumericalDesignOptimized2025}: their~\cite[Theorem~3.2]{kamriNumericalDesignOptimized2025} gives an explicit gradient of the worst-case rate with respect to the step size, and we apply gradient descent with the same cosine annealing schedule.
All experiments run for 1000 iterations, with 10\% of the iterations used for a linear warm-up to the maximum learning rate, which is cross-validated from $\{10^{-5}, 10^{-4}, 10^{-3}\}$.

When learning the schedule, we do not use the performance loss at the last iterate as the training objective as this can hinder gradient information flow to earlier stepsizes~\cite{chenLearningOptimizePrimer2022}.
Instead, we use an exponentially decaying weighted summation for the training loss.
Letting $\ell^k$ represent the loss at iterate $k$, we minimize the sample analogue of the surrogate objective:
\begin{equation*}
    \underset{z \sim \prob}{\Expect} \left[ \sum_{k=1}^K w_k \ell^k (\mathcal{A}_\theta(z))\right].
\end{equation*}
We choose $w_k = 0.9^{K-k}$, and show a plot of the exponential decay in Figure~\ref{fig:loss_iter_weights}.

\begin{figure}
    \centering
    \includegraphics[width=0.5\linewidth]{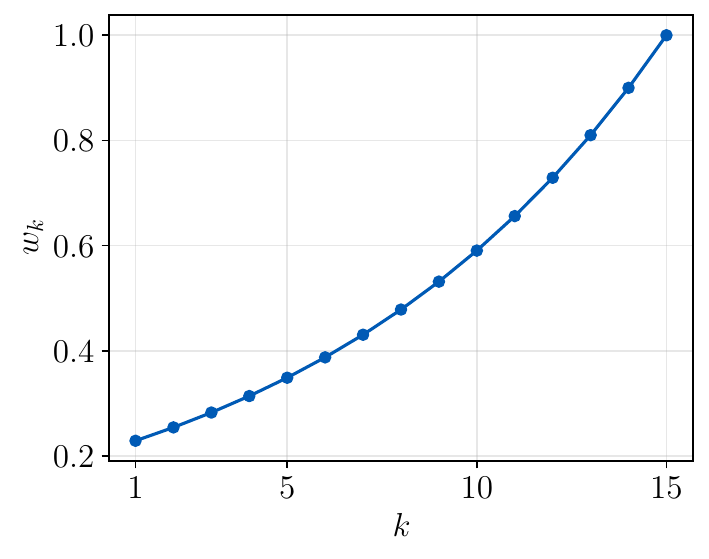}
    \caption{A plot showing the exponential weight decay for our training objective with weights $w_k = 0.9^{K-k}$ for $K=15$.}
    \label{fig:loss_iter_weights}
\end{figure}

Additionally, during the learning procedure, in order to remove numerical issues with any step size component becoming negative, we \emph{square} the step size before plugging it into the first order method.
We initialize the values so that the square of the input step size is our desired value but we backpropagate the gradients to the square root.

\paragraph{Parameter selection}
For each experiment, we set a maximum value of $K$ and learn a separate schedule for each $k = 1, \dots, K$ used as the learning horizon.
The Wasserstein radius~$\varepsilon > 0$ from~\cref{subsec:data-driven} controls the probability of constraint satisfaction; in practice it is treated as a hyperparameter tuned by cross-validation~\cite{gotohCalibrationDistributionallyRobust2021,parkDatadrivenAnalysisFirstOrder2025}.
We cross-validate over $\varepsilon \in \{10^{-2}, 10^{-1}, 1, 5, 10\}$ on a validation set and choose the schedule minimizing the empirical risk.
A separate test set is used for final loss evaluations.

\newpage
\section{Further experiment details}\label{appendix:experiments}

All examples are written in Python 3.13 with JAX version 0.9.0; the code is available at \ifneurips\url{https://anonymous.4open.science/r/dr-l2o-}.\fi
\ifpreprint
\url{https://github.com/stellatogrp/dro_pep}.
\fi
The SDPs are solved using the Clarabel solver~\cite{goulartClarabelInteriorpointSolver2024} with primal residual, dual residual, and duality gap tolerances all set to $10^{-5}$.
For the derivatives, we use the diffcp package~\cite{agrawalDifferentiableConvexOptimization2019,agrawalDifferentiatingConeProgram2019} in order to differentiate through the KKT conditions of the inner SDP.
All computations were run on a high performance computing cluster with 4 CPU cores and 20GB of RAM.

In all plots, shaded bars show the $10^\text{th}$ to $90^\text{th}$ quantile ranges.
Furthermore, in all SGD iteration timing tables, our code with JAX uses just-in-time (JIT) techniques for computational efficiency, and we noticed that the first few SGD iterations take significantly longer because of the compilation time.
Since this is an initial startup cost, these initial times become outliers for the full distribution of times so we discard the first 5 iteration times before computing the averages and standard deviations.

In the LASSO and TV inpainting experiments, since the optimal values are not always at 0 with objective value 0, we use a separate set of 1000 instances, solve all problem instances to optimality, and compute the maximum distance to optimality from the initial points.
We then add a 10\% buffer to ensure that all sample values in the training-set satisfy the interpolation conditions with this computed sample distance.

\subsection{Further details of unconstrained quadratic experiment in Section~\ref{subsec:qp}}\label{appendix:qp}
For each sample index~$i$, we sample $Q_i$ from the Mar{\v c}enko-Pastur distribution~\cite{marcenkoDistributionEigenvaluesSets1967,pedregosaAveragecaseAccelerationSpectral2020} and $z_i^0$ from a uniform distribution over a ball of radius $R$ around the origin.
We rejection sample~$Q_i$'s with~$\lambda(Q_i) \not\subseteq [\mu, L]$.
For this experiment, the in-distribution parameters are $d=300, \mu=1, L=10, R=10$.
For the out-of-distribution set, we increase to $L=11$ and hold all other parameters constant.

We use a training set of size 1000; the validation set and both in-distribution and out-of-distribution test sets are size 250.
We use a mini-batch of size 20 for the SGD iterations.
For the initialization, we initialize the stepsize schedules with constant value $1.5/(\mu + L)$.
In Figure~\ref{fig:quad_losses} we show the test losses of all the learned schedules while in Table~\ref{tab:quad-times} we provide average times per SGD iteration for select values of $K$.

\begin{figure}
    \centering
    \includegraphics[width=0.8\linewidth]{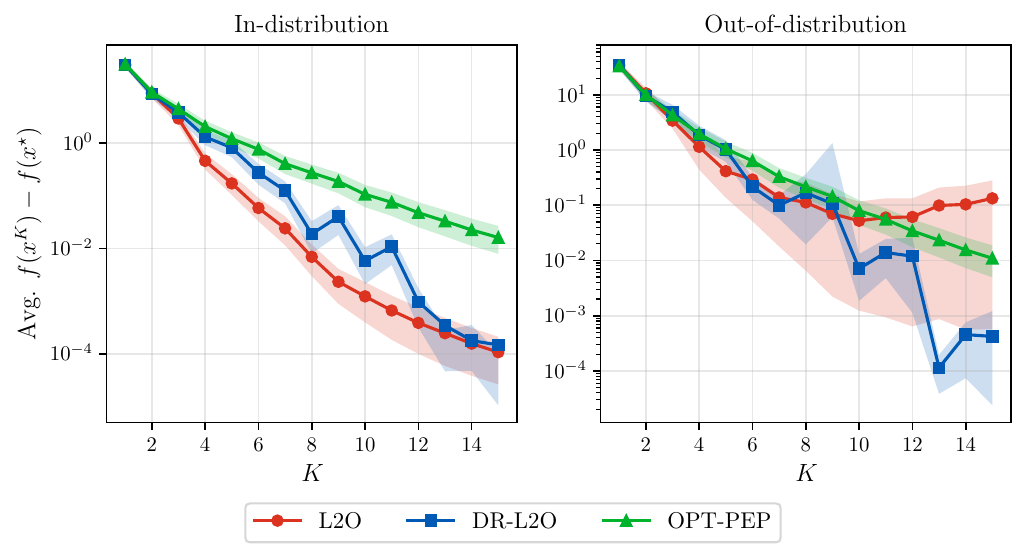}
    \caption{Quadratic minimization experiment (Section~\ref{subsec:qp}) test losses across horizon $K$ on a \emph{Left:} in-distribution test set and \emph{Right:} out-of-distribution test set.}
    \label{fig:quad_losses}
\end{figure}

\begin{table}[t]
\centering
    \caption{Quadratic minimization experiment per-SGD-iteration wall time (mean $\pm$ $2\sigma$, in seconds)
  for the validation-best schedule used by the losses figure.}
    \label{tab:quad-times}
    \small
    \begin{filecontents*}{tables/quad_times.csv}
Framework,K,Time
L2O,1,$0.003 \pm 0.001$
,5,$0.006 \pm 0.001$
,10,$0.009 \pm 0.005$
,15,$0.011 \pm 0.001$
DR-L2O,1,$1.181 \pm 1.102$
,5,$1.740 \pm 1.135$
,10,$1.969 \pm 0.360$
,15,$11.064 \pm 1.261$
OPT-PEP,1,$0.009 \pm 0.004$
,5,$0.017 \pm 0.006$
,10,$0.050 \pm 0.005$
,15,$0.202 \pm 0.011$
\end{filecontents*}
\pgfplotstabletypeset[
      col sep=comma,
      string type,
      columns/Framework/.style={column name=Framework, column type=l},
      columns/K/.style={column name={$K$}, column type=l},
      columns/Time/.style={column name={Time (s)}, column type=l},
      every head row/.style={before row=\toprule, after row=\midrule},
      every last row/.style={after row=\bottomrule},
      every row no 4/.style={before row=\midrule},
      every row no 8/.style={before row=\midrule},
    ]{tables/quad_times.csv}
  \end{table}

\subsection{Further details of LASSO experiment in Section~\ref{subsec:lasso}}\label{appendix:lasso}

We use a sparse coding example, or the problem of recovering a sparse vector $\tilde{x}$ from noisy measurements through a sparse dictionary matrix $A\in\reals^{m \times n}$~\cite{chenTheoreticalLinearConvergence2018}.
We form $A$ by sampling each entry $A_{ij} \sim \mathcal{N}(0, 1/m)$ and normalizing each column to have unit $2$-norm; we fix $A$ for both the in-distribution and out-of-distribution sets and compute $L$ as the largest eigenvalue of $A^\tpose A$.
We sample a sparse vector $\tilde{x}_j \in \reals^n$ as $\tilde{x}_j \sim \mathcal{N}(0, \sigma_x^2)$ with probability $p_\text{mask}$ and 0 otherwise.
We then construct $b_j = A\tilde{x}_j + \varepsilon_j$ where $\varepsilon_j \sim \mathcal{N}(0, \sigma_\text{err}^2)$.
In the experiment, we set~$(m, n) = (250, 500)$, $\lambda = 0.4$, $\sigma_x = 2.0$ for the in-distribution set and $\sigma_x=3.0$ for the out-of-distribution set, $\sigma_{\text{err}} = 0.01$, and~$p_{\mathrm{mask}} = 0.1$.

Similarly to the unconstrained quadratic experiment, the training set, validation set, and test set sizes are 1000, 250, and 250, respectively.
We use a size 10 mini-batch for SGD and initialize the stepsize schedules with constant value $1/L$.
We initialize all $x^0$ values at 0.
In Figure~\ref{fig:lasso_losses} we show the test losses of all the learned schedules while in Table~\ref{tab:lasso-times} we provide average times per SGD iteration for select values of $K$.

\begin{figure}
    \centering
    \includegraphics[width=0.8\linewidth]{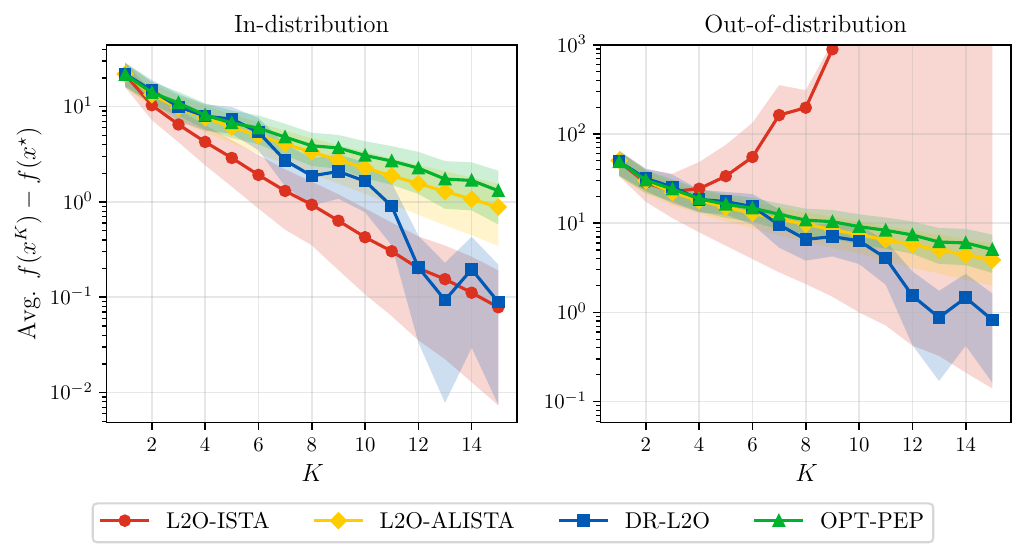}
    \caption{LASSO experiment (Section~\ref{subsec:lasso}) text losses against horizon $K$ on a \emph{Left:} in-distribution test set and \emph{Right:} out-of-distribution test set.}
    \label{fig:lasso_losses}
\end{figure}

\begin{table}[t]
    \centering
    \caption{LASSO experiment per-SGD-iteration wall time (mean $\pm$ $2\sigma$, in seconds)
   for the validation-best schedule used by the losses figure.}
    \label{tab:lasso-times}
    \small
    \begin{filecontents*}{tables/lasso_times.csv}
Framework,K,Time
L2O-ISTA,1,$0.006 \pm 0.000$
,5,$0.015 \pm 0.001$
,10,$0.025 \pm 0.002$
,15,$0.037 \pm 0.002$
L2O-ALISTA,1,$0.004 \pm 0.000$
,5,$0.011 \pm 0.001$
,10,$0.020 \pm 0.001$
,15,$0.029 \pm 0.001$
DR-L2O,1,$0.144 \pm 0.027$
,5,$0.825 \pm 0.129$
,10,$2.993 \pm 0.327$
,15,$14.697 \pm 8.933$
OPT-PEP,1,$0.015 \pm 0.001$
,5,$0.050 \pm 0.009$
,10,$0.364 \pm 0.041$
,15,$2.008 \pm 0.424$
\end{filecontents*}
\pgfplotstabletypeset[
      col sep=comma,
      string type,
      columns/Framework/.style={column name=Framework, column type=l},
      columns/K/.style={column name={$K$}, column type=l},
      columns/Time/.style={column name={Time (s)}, column type=l},
      every head row/.style={before row=\toprule, after row=\midrule},
      every last row/.style={after row=\bottomrule},
      every row no 4/.style={before row=\midrule},
      every row no 8/.style={before row=\midrule},
      every row no 12/.style={before row=\midrule},
    ]{tables/lasso_times.csv}
  \end{table}

\subsection{Further details of TV inpainting experiment in Section~\ref{subsec:lp}}\label{appendix:lp}

For the in-distribution dataset, we use the Olivetti faces dataset~\cite{samaria1994Faces,olivettifaces}, which contains 40 faces and 10 grayscale images each (64 $\times$ 64 pixels) of different angles.
For the corruption, we choose 10\% of pixels of each image uniformly at random to black out (\ie~set to 0).
We randomly pick 28 faces for the training set, 4 faces for the validation set, 8 faces for the test set, and ensure that the images are separated in a stratified manner to prevent data leakage during training.
For the out-of-distribution set, we pick 40 random \emph{color} images from Tiny ImageNet~\cite{tinyimagenet, deng2009imagenet}, where each $U_{i,j} \in \reals^3$ is now a triplet of RGB values.

We reformulate the TV LP in the standard LP form
\begin{equation*}
    \begin{array}{ll}
        \textnormal{minimize} & c^\tpose x \\
        \textnormal{subject to} & Ax = b \\
        & Gx \le h \\
        & \underline{x} \le x \le \overline{x},
    \end{array}
\end{equation*}
matching the formulation used in~\cite[Equation 1]{luCuPDLPjlGPUImplementation2024} and~\cite{blinBatchedFirstOrderMethods2026}, and its saddle-point reformulation.
To obtain this form from the TV LP in Section~\ref{subsec:lp}, we vectorize the image $U$ into $\mathrm{vec}(U) \in \reals^{mn}$ and introduce nonnegative slacks $t \in \reals^{2(m-1)(n-1)}$ that upper-bound each absolute-difference term in the objective.
Let~$D$ stack the horizontal and vertical pixel-difference operators and $E$ select the known-pixel coordinates.
Writing $x = (\mathrm{vec}(U), t)$, the data of the standard LP are
\begin{equation*}
    c = \begin{bmatrix} 0 \\ \ones \end{bmatrix}, \quad
    A = \begin{bmatrix} E & 0 \end{bmatrix}, \quad
    b = \mathrm{vec}(U^\text{orig})_\mathcal{K}, \quad
    G = \begin{bmatrix} D & -I \\ -D & -I \end{bmatrix}, \quad
    h = 0,
\end{equation*}
with box bounds $\underline{x} = 0$ and $\overline{x} = (\ones, +\infty)$ on $(\mathrm{vec}(U), t)$.
The corresponding PDHG iterations are
\begin{align*}
    x^{k+1} &= \prox_{\tau^k f} \big( x^k - \tau^k M^\tpose u^k \big), \\
    \bar{x}^{k+1} &= x^{k+1} + \rho^k (x^{k+1} - x^k), \\
    u^{k+1} &= \prox_{\sigma^k g^*} \big( u^k + \sigma^k M \bar{x}^{k+1} \big).
\end{align*}
For each sample LP, we define the stacked constraint matrix $M^\tpose = \begin{bmatrix} A^\tpose & -G^\tpose \end{bmatrix}$ and compute the maximum value of $\norm[2]{M}$ across the samples, denoted~$M_\text{max}$.
We initialize the training procedures with $(\tau^k, \rho^k, \sigma^k) = (0.5/M_\text{max}, 1, 0.5/M_\text{max})$.
In order for the Lagrangian duality gap to make sense as a metric, we need to initialize the algorithm from a feasible point, so we choose to initialize $x^0 = 1/2$ and $u^0 = 1$.
In Figure~\ref{fig:pdlp_losses} we show the test losses of all the learned schedules and in Figure~\ref{fig:pdlp_frac_solved} we show the fractions of problems solved.
In Figure
Lastly, in Table~\ref{tab:pdlp-times} we provide average times per SGD iteration for select values of $K$.

\begin{figure}
    \centering
    \includegraphics[width=0.8\linewidth]{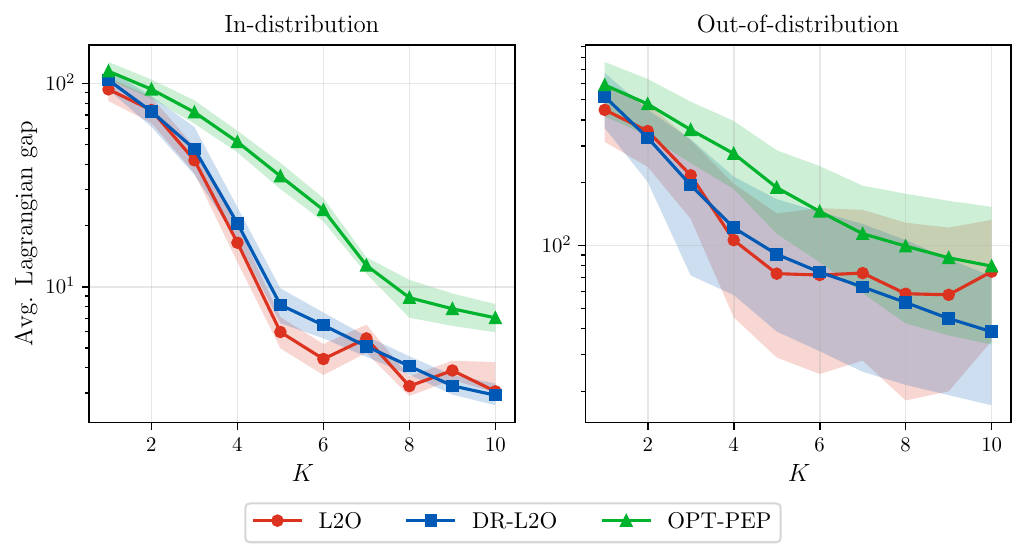}
    \caption{PDLP experiment (Section~\ref{subsec:lp}) test losses against horizon $K$ on the \emph{Left:} Olivetti in-distribution test set and \emph{Right:} Tiny ImageNet out-of-distribution test set.}
    \label{fig:pdlp_losses}
\end{figure}

\begin{figure}
    \centering
    \includegraphics[width=0.9\linewidth]{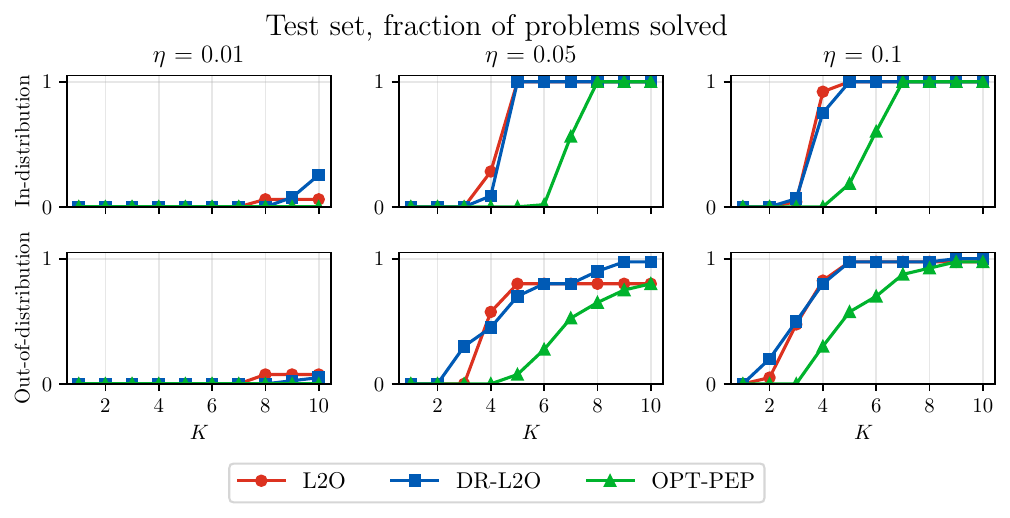}
    \caption{Fractions of PDLP problems solved to different relative tolerances against horizon $K$. \emph{Top:} Fractions solved on the Olivetti in-distribution set. \emph{Bottom:} Fractions solved on the Tiny ImageNet out-of-distribution set.}
    \label{fig:pdlp_frac_solved}
\end{figure}

\begin{figure}
    \centering
    \includegraphics[width=\linewidth]{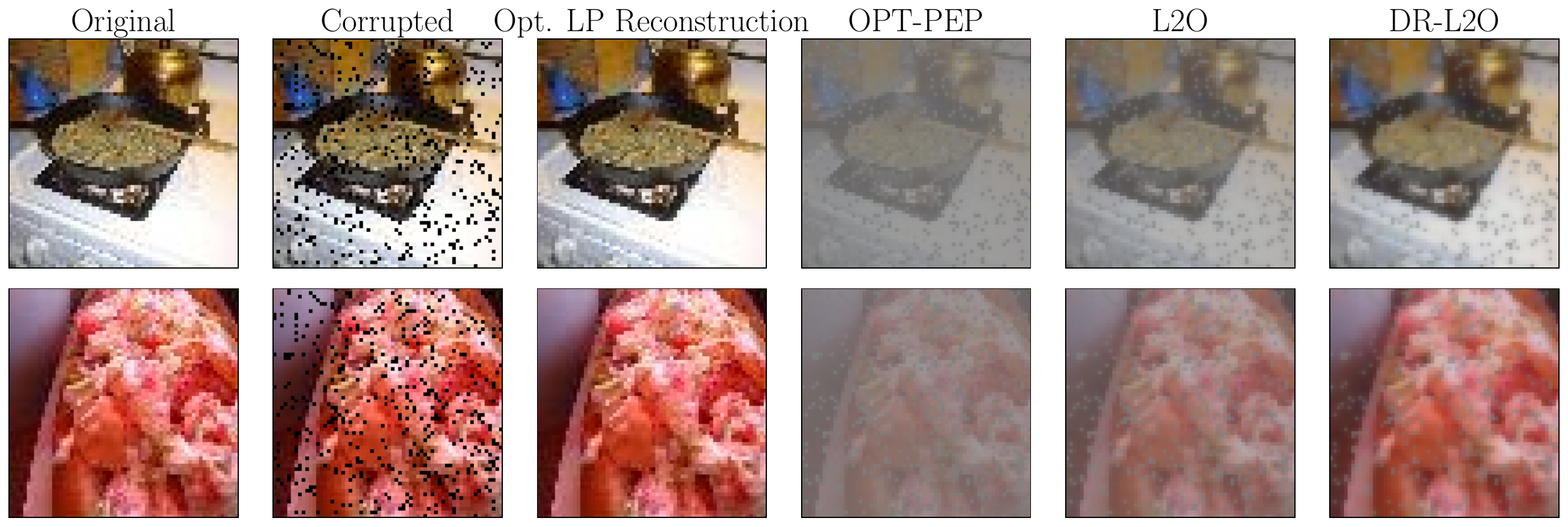}
    \caption{More reconstructions on images from the Tiny ImageNet dataset. The images are ordered in the same way as in Figure~\ref{fig:pdlp_reconstructions}.}
    \label{fig:pdlp_more_reconstructions}
\end{figure}

\begin{table}[t]
    \centering
    \caption{TV inpainting experiment per-SGD-iteration wall time (mean $\pm$ $2\sigma$, in seconds)
   for the validation-best schedule used by the losses figure.}
    \label{tab:pdlp-times}
    \small
    \begin{filecontents*}{tables/pdlp_times.csv}
Framework,K,Time
L2O,1,$0.119 \pm 0.014$
,5,$0.450 \pm 0.016$
,10,$0.845 \pm 0.019$
DR-L2O,1,$0.410 \pm 0.018$
,5,$3.755 \pm 0.154$
,10,$57.125 \pm 27.471$
OPT-PEP,1,$0.037 \pm 0.012$
,5,$0.371 \pm 0.021$
,10,$3.954 \pm 0.309$
\end{filecontents*}
\pgfplotstabletypeset[
      col sep=comma,
      string type,
      columns/Framework/.style={column name=Framework, column type=l},
      columns/K/.style={column name={$K$}, column type=l},
      columns/Time/.style={column name={Time (s)}, column type=l},
      every head row/.style={before row=\toprule, after row=\midrule},
      every last row/.style={after row=\bottomrule},
      every row no 3/.style={before row=\midrule},
      every row no 6/.style={before row=\midrule},
    ]{tables/pdlp_times.csv}
  \end{table}

\ifneurips
\newpage
\section*{NeurIPS Paper Checklist}

\begin{enumerate}

\item {\bf Claims}
    \item[] Question: Do the main claims made in the abstract and introduction accurately reflect the paper's contributions and scope?
    \item[] Answer: \answerYes{} 
    \item[] Justification:
        Our main claims are in the abstract and the introduction. We provide a list of specific contributions at the end of Section~\ref{sec:intro}.
    \item[] Guidelines:
    \begin{itemize}
        \item The answer \answerNA{} means that the abstract and introduction do not include the claims made in the paper.
        \item The abstract and/or introduction should clearly state the claims made, including the contributions made in the paper and important assumptions and limitations. A \answerNo{} or \answerNA{} answer to this question will not be perceived well by the reviewers. 
        \item The claims made should match theoretical and experimental results, and reflect how much the results can be expected to generalize to other settings. 
        \item It is fine to include aspirational goals as motivation as long as it is clear that these goals are not attained by the paper. 
    \end{itemize}

\item {\bf Limitations}
    \item[] Question: Does the paper discuss the limitations of the work performed by the authors?
    \item[] Answer: \answerYes{} 
    \item[] Justification:
        Our main limitations are stated in the experimental details of~\cref{sec:experiments} and mentioned as potential research direction in the conclusion (\cref{sec:conclusion}).
    \item[] Guidelines:
    \begin{itemize}
        \item The answer \answerNA{} means that the paper has no limitation while the answer \answerNo{} means that the paper has limitations, but those are not discussed in the paper. 
        \item The authors are encouraged to create a separate ``Limitations'' section in their paper.
        \item The paper should point out any strong assumptions and how robust the results are to violations of these assumptions (e.g., independence assumptions, noiseless settings, model well-specification, asymptotic approximations only holding locally). The authors should reflect on how these assumptions might be violated in practice and what the implications would be.
        \item The authors should reflect on the scope of the claims made, e.g., if the approach was only tested on a few datasets or with a few runs. In general, empirical results often depend on implicit assumptions, which should be articulated.
        \item The authors should reflect on the factors that influence the performance of the approach. For example, a facial recognition algorithm may perform poorly when image resolution is low or images are taken in low lighting. Or a speech-to-text system might not be used reliably to provide closed captions for online lectures because it fails to handle technical jargon.
        \item The authors should discuss the computational efficiency of the proposed algorithms and how they scale with dataset size.
        \item If applicable, the authors should discuss possible limitations of their approach to address problems of privacy and fairness.
        \item While the authors might fear that complete honesty about limitations might be used by reviewers as grounds for rejection, a worse outcome might be that reviewers discover limitations that aren't acknowledged in the paper. The authors should use their best judgment and recognize that individual actions in favor of transparency play an important role in developing norms that preserve the integrity of the community. Reviewers will be specifically instructed to not penalize honesty concerning limitations.
    \end{itemize}

\item {\bf Theory assumptions and proofs}
    \item[] Question: For each theoretical result, does the paper provide the full set of assumptions and a complete (and correct) proof?
    \item[] Answer: \answerYes{} 
    \item[] Justification:
        The full set of assumptions is stated in the main body (\cref{assumption:first-order}, \cref{assumption:compact-theta}).
        These assumptions all imply additional assumptions arising in appendices.
        We provide complete proofs of theorems and propositions from the main body in the appendix (\cref{appendix:perf_guarantee,appendix:solution,appendix:theory}).
    \item[] Guidelines:
    \begin{itemize}
        \item The answer \answerNA{} means that the paper does not include theoretical results. 
        \item All the theorems, formulas, and proofs in the paper should be numbered and cross-referenced.
        \item All assumptions should be clearly stated or referenced in the statement of any theorems.
        \item The proofs can either appear in the main paper or the supplemental material, but if they appear in the supplemental material, the authors are encouraged to provide a short proof sketch to provide intuition. 
        \item Inversely, any informal proof provided in the core of the paper should be complemented by formal proofs provided in appendix or supplemental material.
        \item Theorems and Lemmas that the proof relies upon should be properly referenced. 
    \end{itemize}

    \item {\bf Experimental result reproducibility}
    \item[] Question: Does the paper fully disclose all the information needed to reproduce the main experimental results of the paper to the extent that it affects the main claims and/or conclusions of the paper (regardless of whether the code and data are provided or not)?
    \item[] Answer: \answerYes{} 
    \item[] Justification:
        We define of the main learning problem~\eqref{prob:dr-l2o} is presented in~\cref{sec:dr-l2o}.
        Details of the solution method of~\eqref{prob:dr-l2o} are presented in~\cref{subsec:solution} and~\cref{appendix:solution} (as~\cref{alg:dr-l2o}).
        We further provide details of the experimental setups and algorithm parameters in~\cref{sec:experiments,appendix:experiments}.
    \item[] Guidelines:
    \begin{itemize}
        \item The answer \answerNA{} means that the paper does not include experiments.
        \item If the paper includes experiments, a \answerNo{} answer to this question will not be perceived well by the reviewers: Making the paper reproducible is important, regardless of whether the code and data are provided or not.
        \item If the contribution is a dataset and\slash or model, the authors should describe the steps taken to make their results reproducible or verifiable. 
        \item Depending on the contribution, reproducibility can be accomplished in various ways. For example, if the contribution is a novel architecture, describing the architecture fully might suffice, or if the contribution is a specific model and empirical evaluation, it may be necessary to either make it possible for others to replicate the model with the same dataset, or provide access to the model. In general. releasing code and data is often one good way to accomplish this, but reproducibility can also be provided via detailed instructions for how to replicate the results, access to a hosted model (e.g., in the case of a large language model), releasing of a model checkpoint, or other means that are appropriate to the research performed.
        \item While NeurIPS does not require releasing code, the conference does require all submissions to provide some reasonable avenue for reproducibility, which may depend on the nature of the contribution. For example
        \begin{enumerate}
            \item If the contribution is primarily a new algorithm, the paper should make it clear how to reproduce that algorithm.
            \item If the contribution is primarily a new model architecture, the paper should describe the architecture clearly and fully.
            \item If the contribution is a new model (e.g., a large language model), then there should either be a way to access this model for reproducing the results or a way to reproduce the model (e.g., with an open-source dataset or instructions for how to construct the dataset).
            \item We recognize that reproducibility may be tricky in some cases, in which case authors are welcome to describe the particular way they provide for reproducibility. In the case of closed-source models, it may be that access to the model is limited in some way (e.g., to registered users), but it should be possible for other researchers to have some path to reproducing or verifying the results.
        \end{enumerate}
    \end{itemize}

\item {\bf Open access to data and code}
    \item[] Question: Does the paper provide open access to the data and code, with sufficient instructions to faithfully reproduce the main experimental results, as described in supplemental material?
    \item[] Answer: \answerYes{} 
    \item[] Justification:
        We provide an anonymized fork of the code as supplementary material in the submission.
        This fork provides a README to describe how to reproduce the experiments in~\cref{sec:experiments,appendix:experiments}.
    \item[] Guidelines:
    \begin{itemize}
        \item The answer \answerNA{} means that paper does not include experiments requiring code.
        \item Please see the NeurIPS code and data submission guidelines (\url{https://neurips.cc/public/guides/CodeSubmissionPolicy}) for more details.
        \item While we encourage the release of code and data, we understand that this might not be possible, so \answerNo{} is an acceptable answer. Papers cannot be rejected simply for not including code, unless this is central to the contribution (e.g., for a new open-source benchmark).
        \item The instructions should contain the exact command and environment needed to run to reproduce the results. See the NeurIPS code and data submission guidelines (\url{https://neurips.cc/public/guides/CodeSubmissionPolicy}) for more details.
        \item The authors should provide instructions on data access and preparation, including how to access the raw data, preprocessed data, intermediate data, and generated data, etc.
        \item The authors should provide scripts to reproduce all experimental results for the new proposed method and baselines. If only a subset of experiments are reproducible, they should state which ones are omitted from the script and why.
        \item At submission time, to preserve anonymity, the authors should release anonymized versions (if applicable).
        \item Providing as much information as possible in supplemental material (appended to the paper) is recommended, but including URLs to data and code is permitted.
    \end{itemize}

\item {\bf Experimental setting/details}
    \item[] Question: Does the paper specify all the training and test details (e.g., data splits, hyperparameters, how they were chosen, type of optimizer) necessary to understand the results?
    \item[] Answer: \answerYes{} 
    \item[] Justification:
        Our choice of optimizer is specified in~\cref{sec:experiments,appendix:experiments}.
        For each numerical experiment, we detail out the training and testing procedures for experiments in~\cref{subsec:qp,subsec:lasso,subsec:lp}. Full details are also provided in Appendix Sections~\ref{appendix:qp} to~\ref{appendix:lp}.
    \item[] Guidelines:
    \begin{itemize}
        \item The answer \answerNA{} means that the paper does not include experiments.
        \item The experimental setting should be presented in the core of the paper to a level of detail that is necessary to appreciate the results and make sense of them.
        \item The full details can be provided either with the code, in appendix, or as supplemental material.
    \end{itemize}

\item {\bf Experiment statistical significance}
    \item[] Question: Does the paper report error bars suitably and correctly defined or other appropriate information about the statistical significance of the experiments?
    \item[] Answer: \answerYes{} 
    \item[] Justification:
        Our experiments show that the test losses, especially for the out-of-distribution experiments, are very heavily skewed. So, we provide the $10^\text{th}$ to $90^\text{th}$ empirical quantiles instead on our plots along with the averages to show this. For the SGD iteration times across experiments, we do provide the average time plus or minus 2 standard deviations.
    \item[] Guidelines:
    \begin{itemize}
        \item The answer \answerNA{} means that the paper does not include experiments.
        \item The authors should answer \answerYes{} if the results are accompanied by error bars, confidence intervals, or statistical significance tests, at least for the experiments that support the main claims of the paper.
        \item The factors of variability that the error bars are capturing should be clearly stated (for example, train/test split, initialization, random drawing of some parameter, or overall run with given experimental conditions).
        \item The method for calculating the error bars should be explained (closed form formula, call to a library function, bootstrap, etc.)
        \item The assumptions made should be given (e.g., Normally distributed errors).
        \item It should be clear whether the error bar is the standard deviation or the standard error of the mean.
        \item It is OK to report 1-sigma error bars, but one should state it. The authors should preferably report a 2-sigma error bar than state that they have a 96\% CI, if the hypothesis of Normality of errors is not verified.
        \item For asymmetric distributions, the authors should be careful not to show in tables or figures symmetric error bars that would yield results that are out of range (e.g., negative error rates).
        \item If error bars are reported in tables or plots, the authors should explain in the text how they were calculated and reference the corresponding figures or tables in the text.
    \end{itemize}

\item {\bf Experiments compute resources}
    \item[] Question: For each experiment, does the paper provide sufficient information on the computer resources (type of compute workers, memory, time of execution) needed to reproduce the experiments?
    \item[] Answer: \answerYes{} 
    \item[] Justification:
        We provide in~\cref{sec:experiments} the setup and specification of our experiment, as well as the computation times in the Appendices.
    \item[] Guidelines:
    \begin{itemize}
        \item The answer \answerNA{} means that the paper does not include experiments.
        \item The paper should indicate the type of compute workers CPU or GPU, internal cluster, or cloud provider, including relevant memory and storage.
        \item The paper should provide the amount of compute required for each of the individual experimental runs as well as estimate the total compute. 
        \item The paper should disclose whether the full research project required more compute than the experiments reported in the paper (e.g., preliminary or failed experiments that didn't make it into the paper). 
    \end{itemize}
    
\item {\bf Code of ethics}
    \item[] Question: Does the research conducted in the paper conform, in every respect, with the NeurIPS Code of Ethics \url{https://neurips.cc/public/EthicsGuidelines}?
    \item[] Answer: \answerYes{} 
    \item[] Justification: Our paper conforms, in every respect, with the NeurIPS Code of Ethics.
    \item[] Guidelines:
    \begin{itemize}
        \item The answer \answerNA{} means that the authors have not reviewed the NeurIPS Code of Ethics.
        \item If the authors answer \answerNo, they should explain the special circumstances that require a deviation from the Code of Ethics.
        \item The authors should make sure to preserve anonymity (e.g., if there is a special consideration due to laws or regulations in their jurisdiction).
    \end{itemize}

\item {\bf Broader impacts}
    \item[] Question: Does the paper discuss both potential positive societal impacts and negative societal impacts of the work performed?
    \item[] Answer: \answerNA{} 
    \item[] Justification:
        The contribution of this paper is a methodology of developing fast data-driven algorithms, which is purely foundational.
    \item[] Guidelines:
    \begin{itemize}
        \item The answer \answerNA{} means that there is no societal impact of the work performed.
        \item If the authors answer \answerNA{} or \answerNo, they should explain why their work has no societal impact or why the paper does not address societal impact.
        \item Examples of negative societal impacts include potential malicious or unintended uses (e.g., disinformation, generating fake profiles, surveillance), fairness considerations (e.g., deployment of technologies that could make decisions that unfairly impact specific groups), privacy considerations, and security considerations.
        \item The conference expects that many papers will be foundational research and not tied to particular applications, let alone deployments. However, if there is a direct path to any negative applications, the authors should point it out. For example, it is legitimate to point out that an improvement in the quality of generative models could be used to generate Deepfakes for disinformation. On the other hand, it is not needed to point out that a generic algorithm for optimizing neural networks could enable people to train models that generate Deepfakes faster.
        \item The authors should consider possible harms that could arise when the technology is being used as intended and functioning correctly, harms that could arise when the technology is being used as intended but gives incorrect results, and harms following from (intentional or unintentional) misuse of the technology.
        \item If there are negative societal impacts, the authors could also discuss possible mitigation strategies (e.g., gated release of models, providing defenses in addition to attacks, mechanisms for monitoring misuse, mechanisms to monitor how a system learns from feedback over time, improving the efficiency and accessibility of ML).
    \end{itemize}
    
\item {\bf Safeguards}
    \item[] Question: Does the paper describe safeguards that have been put in place for responsible release of data or models that have a high risk for misuse (e.g., pre-trained language models, image generators, or scraped datasets)?
    \item[] Answer: \answerNA{} 
    \item[] Justification: 
        This paper uses either open-source benchmarks or synthetic data and does not deploy any models.
    \item[] Guidelines:
    \begin{itemize}
        \item The answer \answerNA{} means that the paper poses no such risks.
        \item Released models that have a high risk for misuse or dual-use should be released with necessary safeguards to allow for controlled use of the model, for example by requiring that users adhere to usage guidelines or restrictions to access the model or implementing safety filters. 
        \item Datasets that have been scraped from the Internet could pose safety risks. The authors should describe how they avoided releasing unsafe images.
        \item We recognize that providing effective safeguards is challenging, and many papers do not require this, but we encourage authors to take this into account and make a best faith effort.
    \end{itemize}

\item {\bf Licenses for existing assets}
    \item[] Question: Are the creators or original owners of assets (e.g., code, data, models), used in the paper, properly credited and are the license and terms of use explicitly mentioned and properly respected?
    \item[] Answer: \answerYes{} 
    \item[] Justification:
        We properly cite benchmark datasets used in our experiments and respect their licenses.
    \item[] Guidelines:
    \begin{itemize}
        \item The answer \answerNA{} means that the paper does not use existing assets.
        \item The authors should cite the original paper that produced the code package or dataset.
        \item The authors should state which version of the asset is used and, if possible, include a URL.
        \item The name of the license (e.g., CC-BY 4.0) should be included for each asset.
        \item For scraped data from a particular source (e.g., website), the copyright and terms of service of that source should be provided.
        \item If assets are released, the license, copyright information, and terms of use in the package should be provided. For popular datasets, \url{paperswithcode.com/datasets} has curated licenses for some datasets. Their licensing guide can help determine the license of a dataset.
        \item For existing datasets that are re-packaged, both the original license and the license of the derived asset (if it has changed) should be provided.
        \item If this information is not available online, the authors are encouraged to reach out to the asset's creators.
    \end{itemize}

\item {\bf New assets}
    \item[] Question: Are new assets introduced in the paper well documented and is the documentation provided alongside the assets?
    \item[] Answer: \answerYes{} 
    \item[] Justification:
        We use open-source benchmarks and synthetic data and in~\cref{sec:experiments} provide details on related information.
    \item[] Guidelines:
    \begin{itemize}
        \item The answer \answerNA{} means that the paper does not release new assets.
        \item Researchers should communicate the details of the dataset\slash code\slash model as part of their submissions via structured templates. This includes details about training, license, limitations, etc. 
        \item The paper should discuss whether and how consent was obtained from people whose asset is used.
        \item At submission time, remember to anonymize your assets (if applicable). You can either create an anonymized URL or include an anonymized zip file.
    \end{itemize}

\item {\bf Crowdsourcing and research with human subjects}
    \item[] Question: For crowdsourcing experiments and research with human subjects, does the paper include the full text of instructions given to participants and screenshots, if applicable, as well as details about compensation (if any)? 
    \item[] Answer: \answerNA{} 
    \item[] Justification:
        This paper does not involve crowd-sourcing nor research with human subjects.
    \item[] Guidelines:
    \begin{itemize}
        \item The answer \answerNA{} means that the paper does not involve crowdsourcing nor research with human subjects.
        \item Including this information in the supplemental material is fine, but if the main contribution of the paper involves human subjects, then as much detail as possible should be included in the main paper. 
        \item According to the NeurIPS Code of Ethics, workers involved in data collection, curation, or other labor should be paid at least the minimum wage in the country of the data collector. 
    \end{itemize}

\item {\bf Institutional review board (IRB) approvals or equivalent for research with human subjects}
    \item[] Question: Does the paper describe potential risks incurred by study participants, whether such risks were disclosed to the subjects, and whether Institutional Review Board (IRB) approvals (or an equivalent approval/review based on the requirements of your country or institution) were obtained?
    \item[] Answer: \answerNA{} 
    \item[] Justification:
        This paper does not involve crowd-sourcing nor research with human subjects.
    \item[] Guidelines:
    \begin{itemize}
        \item The answer \answerNA{} means that the paper does not involve crowdsourcing nor research with human subjects.
        \item Depending on the country in which research is conducted, IRB approval (or equivalent) may be required for any human subjects research. If you obtained IRB approval, you should clearly state this in the paper. 
        \item We recognize that the procedures for this may vary significantly between institutions and locations, and we expect authors to adhere to the NeurIPS Code of Ethics and the guidelines for their institution. 
        \item For initial submissions, do not include any information that would break anonymity (if applicable), such as the institution conducting the review.
    \end{itemize}

\item {\bf Declaration of LLM usage}
    \item[] Question: Does the paper describe the usage of LLMs if it is an important, original, or non-standard component of the core methods in this research? Note that if the LLM is used only for writing, editing, or formatting purposes and does \emph{not} impact the core methodology, scientific rigor, or originality of the research, declaration is not required.
    \item[] Answer: \answerNA{} 
    \item[] Justification:
        The main contribution of this paper does not involve LLMs as any important, original, or non-standard components.
    \item[] Guidelines:
    \begin{itemize}
        \item The answer \answerNA{} means that the core method development in this research does not involve LLMs as any important, original, or non-standard components.
        \item Please refer to our LLM policy in the NeurIPS handbook for what should or should not be described.
    \end{itemize}

\end{enumerate}
\fi

\end{document}
